\DeclareMathOperator*{\argmax}{arg\,max}
\newtheorem{theorem}{Theorem}[section]
\newtheorem{lemma}[theorem]{Lemma}
\newtheorem{corollary}[theorem]{Corollary}
\newtheorem{definition}[theorem]{Definition}
\newtheorem{remark}[theorem]{Remark}
\newcommand{\emrac}{\epsilon\textsc{-MRAC}}
\newcommand\BibTeX{{\rmfamily B\kern-.05em \textsc{i\kern-.025em b}\kern-.08em
T\kern-.1667em\lower.7ex\hbox{E}\kern-.125emX}}
\begin{document}

\runninghead{Kundu et~al.}

\title{Action-Consistent Decentralized Belief Space Planning with Inconsistent Beliefs and Limited Data Sharing: Framework and Simplification Algorithms with Formal Guarantees}

\author{Tanmoy Kundu\affilnum{1}, Moshe Rafaeli\affilnum{2}, Anton Gulyaev\affilnum{3}, Vadim Indelman\affilnum{4}}

\affiliation{\affilnum{1}Tanmoy Kundu is with the department of Computer Science and Engineering, IIIT-Delhi, India. This work was carried out when he was with the department of Aerospace Engineering, Technion - Israel Institute of Technology, Haifa 32000, Israel.  {\tt tanmoy.kundu@iiitd.ac.in}\\
\affilnum{2}Moshe Rafaeli is with the Technion Autonomous Systems Program (TASP), Technion - Israel Institute of Technology, Haifa 32000, Israel. {\tt mosh305@campus.technion.ac.il}\\
\affilnum{3}Anton Gulyaev is with the department of Aerospace Engineering, Technion - Israel Institute of Technology, Haifa 32000, Israel. {\tt ganton@technion.ac.il}\\
\affilnum{4}Vadim Indelman is with the department of Aerospace Engineering and with the department of Data and Decision Sciences, Technion - Israel Institute of Technology, Haifa 32000, Israel. {\tt vadim.indelman@technion.ac.il}
}

\corrauth{Tanmoy Kundu, Department of Computer Science and Engineering, IIIT-Delhi, New Delhi, Delhi 110020, India.}

\email{tanmoy.kundu@iiitd.ac.in, tanmoy1040@gmail.com .}

\begin{abstract}
In multi-robot systems, ensuring safe and reliable decision making under uncertain conditions demands robust multi-robot belief space planning (MR-BSP) algorithms. While planning with multiple robots, each robot maintains a belief over the state of the environment and reasons how the belief would evolve in the future for different possible actions. However, existing MR-BSP works have a common assumption that the beliefs of different robots are same at planning time. Such an assumption is often unrealistic as it requires prohibitively extensive and frequent data sharing capabilities. In practice, robots may have limited communication capabilities, and consequently beliefs of the robots can be different. Crucially, when the robots have inconsistent beliefs, the existing approaches could result in lack of coordination between the robots and may lead to unsafe decisions.
In this paper, we present decentralized MR-BSP algorithms, with performance guarantees, for tackling this crucial gap. Our algorithms leverage the notion of action preferences. The base algorithm \textsc{VerifyAC} guarantees a consistent joint action selection by the cooperative robots via a three-step verification. When the verification succeeds, \textsc{VerifyAC} finds a consistent joint action without triggering a communication; otherwise it triggers a communication. 
We design an extended algorithm \textsc{R-VerifyAC} for further reducing the number of communications, by relaxing the criteria of action consistency. Another extension \mbox{\textsc{R-VerifyAC-simp}} builds on verifying a partial set of observations and improves the computation time significantly. The theoretical performance guarantees are corroborated with simulation results in discrete setting. Furthermore, we formulate our approaches for continuous and high-dimensional state and observation spaces, and provide experimental results for active multi-robot visual SLAM with real robots.
\end{abstract}

\keywords{Planning under uncertainty, Multi-robot coordination, Limited data sharing, Belief Space Planning, Inconsistent beliefs.}

\maketitle






\newcommand{\indep}{\perp \!\!\! \perp}
\newcommand{\myputtext}[2]{
	\begin{tikzpicture}
		\node [fill=#2, rounded corners=2pt] {#1};
	\end{tikzpicture}
}
\newcommand{\VI}[1]{{\color{blue} #1}}
\newcommand{\VIgray}[1]{{\color{gray} #1}}
\newcommand{\tk}[1]{{\color{violet} #1}}
\newcommand{\MR}[1]{{\color{magenta} #1}}
\newcommand{\AG}[1]{{\color{red} #1}}
\newcommand{\TODO}[1]{{\color{cyan} [TODO: #1]}}
\newcommand{\TEMP}[1]{{\color{green} #1}}

\newcommand{\Zeta}{\mathrm{Z}}
\newcommand{\bydef}{\ensuremath{\overset{def}{=}}}
\newcommand{\maxim}[2]{\ensuremath{\underset{#2}{#1}}}
\newcommand{\fnorm}[1]{\ensuremath{{\parallel}#1{\parallel_{\cal F}}}} 
\newcommand{\nrm}[2]{\ensuremath{{\parallel}{#2}{\parallel_{#1}}}}
\newcommand{\nrmsq}[2]{\ensuremath{{\parallel}{#2}{\parallel^2_{#1}}}}
\newcommand{\expt}[2]{\ensuremath{\underset{{#1}}{\mathbb E}{[{#2}]}}}
\newcommand{\alias}[1]{\ensuremath{\{{#1}\}_{\textbf{aliased}}}}

\newcommand{\prob}[1]{\ensuremath{\mathbb{P}\left({#1}\right)}}
\newcommand{\probsup}[2]{\ensuremath{\mathbb{P}^{{#2}}({#1})}}
\newcommand{\blf}[1]{b({#1})}
\newcommand{\blfsup}[2]{b^{{#2}}({#1})}

\newcommand{\myvec}[1]{\underline{#1}}


\newcommand\la{\langle\xspace}  
\newcommand\ra{\rangle\xspace}

\newcommand*\Let[2]{\State #1 $\gets$ #2}

\newcommand{\priorB}{\ensuremath{\blf{X^-_{k+1}}}\xspace}
\newcommand{\condB}{\blf{X_{k+1} \mid z_{k+1}, \his}\xspace}
\newcommand{\condBi}[1]{\blf{X_{k+1} \mid #1, z_{k+1}, \his}\xspace}
\newcommand{\event}[1]{\ensuremath{A_{#1}}\xspace}

\newcommand{\poses}{\ensuremath{{\cal X}}\xspace}
\newcommand{\observations}{\ensuremath{{\cal Z}}\xspace}
\newcommand{\events}{\ensuremath{\{\event{\mathbb{N}}\}}\xspace}
\newcommand{\controls}{\ensuremath{{\cal U}}\xspace}

\newcommand{\his}{\ensuremath{{\cal H}}\xspace}

\newcommand{\inv}{\ensuremath{in}\xspace}

\newcommand{\Xinv}{\ensuremath{X^{\inv}}\xspace}
\newcommand{\Xnotinv}{\ensuremath{X^{\lnot \inv}}\xspace}
\newcommand{\Xinvsup}[1]{\ensuremath{X^{\inv{#1}}}\xspace}
\newcommand{\Xnotinvsup}[1]{\ensuremath{X^{\lnot \inv{#1}}}\xspace}
\newcommand{\entropy}[1]{\ensuremath{\mathcal{H}({#1})}\xspace}
\newcommand{\XF}{\ensuremath{X^{F}}\xspace}
\newcommand{\XnotF}{\ensuremath{X^{\lnot F}}\xspace}
\newcommand{\XinvF}{\ensuremath{X^{\inv,F}}\xspace}
\newcommand{\XnotinvF}{\ensuremath{X^{\lnot \inv, F}}\xspace}
\newcommand{\XnotinvnotF}{\ensuremath{X^{\lnot \inv, \lnot F}}\xspace} 
\newcommand{\XinvnotF}{\ensuremath{X^{\inv, \lnot F}}\xspace} 
\newcommand{\Finvv}{\ensuremath{\mathcal{F}^{\inv}}\xspace}
\newcommand{\Fnotinv}{\ensuremath{\mathcal{F}^{\lnot \inv}}\xspace}
\newcommand{\Finvnotinv}{\ensuremath{\mathcal{F}^{\inv, \lnot \inv}}\xspace}

\newcommand{\belmarginalpartial}[2]{\ensuremath{b^{{#1},M_p}_{#2}}\xspace}
\newcommand{\belmarginal}[2]{\ensuremath{b^{{#1},M}_{#2}}\xspace}
\newcommand{\bellocal}[2]{\ensuremath{b^{{#1},L}_{#2}}\xspace}

\newcommand{\belief}[2]{\ensuremath{b^{#1}_{#2}}\xspace}
\newcommand{\action}[1]{\ensuremath{a_{#1}}\xspace}
\newcommand{\obs}[2]{\ensuremath{z^{#1}_{#2}}\xspace}
\newcommand{\objec}[1]{\ensuremath{J^{#1}}\xspace}

\newcommand{\ac}{AC\xspace}
\newcommand{\mrac}{MR-AC\xspace}
\newcommand{\mrbsp}{MR-BSP\xspace}


\section[introduction]{Introduction}\label{sec:intro}
Decision making for multi-robot systems under various uncertainties and with partial observabilities is a significant problem in robotics, with its applications ranging vastly among search and rescue, surveillance, autonomous driving etc. to name a few.  
Planning in partially observable domains is challenging even with a single robot, and the challenge increases multi-fold with multiple robots in the system. 
Planning under uncertainty with multiple agents is often formulated with Decentralized Partially Observed Markov Decision Process (Dec-POMDP) or multi-robot Belief Space Planning (MR-BSP). In this work, we consider multiple robots working in a cooperative and collaborative way.

Multi-robot planning under uncertainties has been investigated in the recent years.
Yet, a prevailing assumption in existing approaches, being explicit or implicit, is that the beliefs of different robots at planning time are \emph{consistent}, i.e.~conditioned on the \emph{same} information.
This assumption is only valid when all the data (observations) captured by each robot are fully shared with all other robots and results in the beliefs of the robots be conditioned on the same data, i.e. consistent beliefs of the robots.
This rigorous assumption of consistent beliefs requires prohibitively extensive and frequent communication capabilities.
However, in numerous problems and scenarios, extensive data sharing between the robots cannot be made on a regular basis.
Moreover, it is often the case that only a partial or compressed version of the data can be communicated in practice.
As a result, each robot may have access to different data, which would correspond to a different beliefs about the state of the environment, i.e.~to inconsistent beliefs. 

Crucially, when the beliefs of different robots are inconsistent, the state-of-the-art MR-BSP approaches could result in a lack of coordination between the robots, and in general, could yield dangerous, unsafe, and sub-optimal decisions.
For instance, consider the toy example shown in Figure \ref{fig:concept}, where  two robots aim to reach a common goal point without colliding with each other.
Due to lack of communication, the robots beliefs become inconsistent; MR-BSP in such a setting may result in each robot calculating a different joint action, which can lead to a collision (Figure \ref{fig:motivational-example-c}).

\begin{figure*}[t]
	\subfigure[]{\includegraphics[height=2.7cm, width=0.32\textwidth]{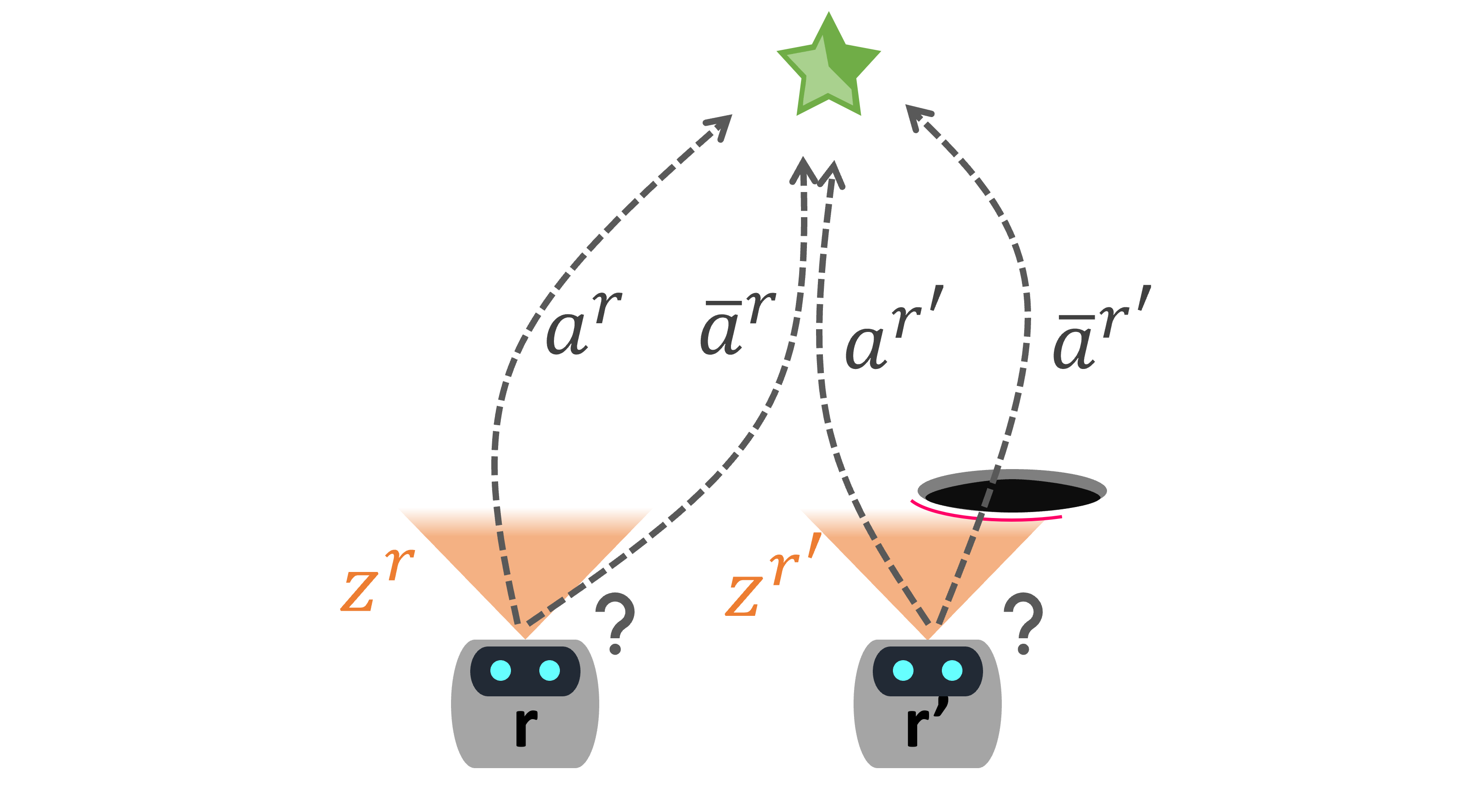}\label{fig:motivational-example-a}}
	\subfigure[]{\includegraphics[height=2.7cm, width=0.32\textwidth]{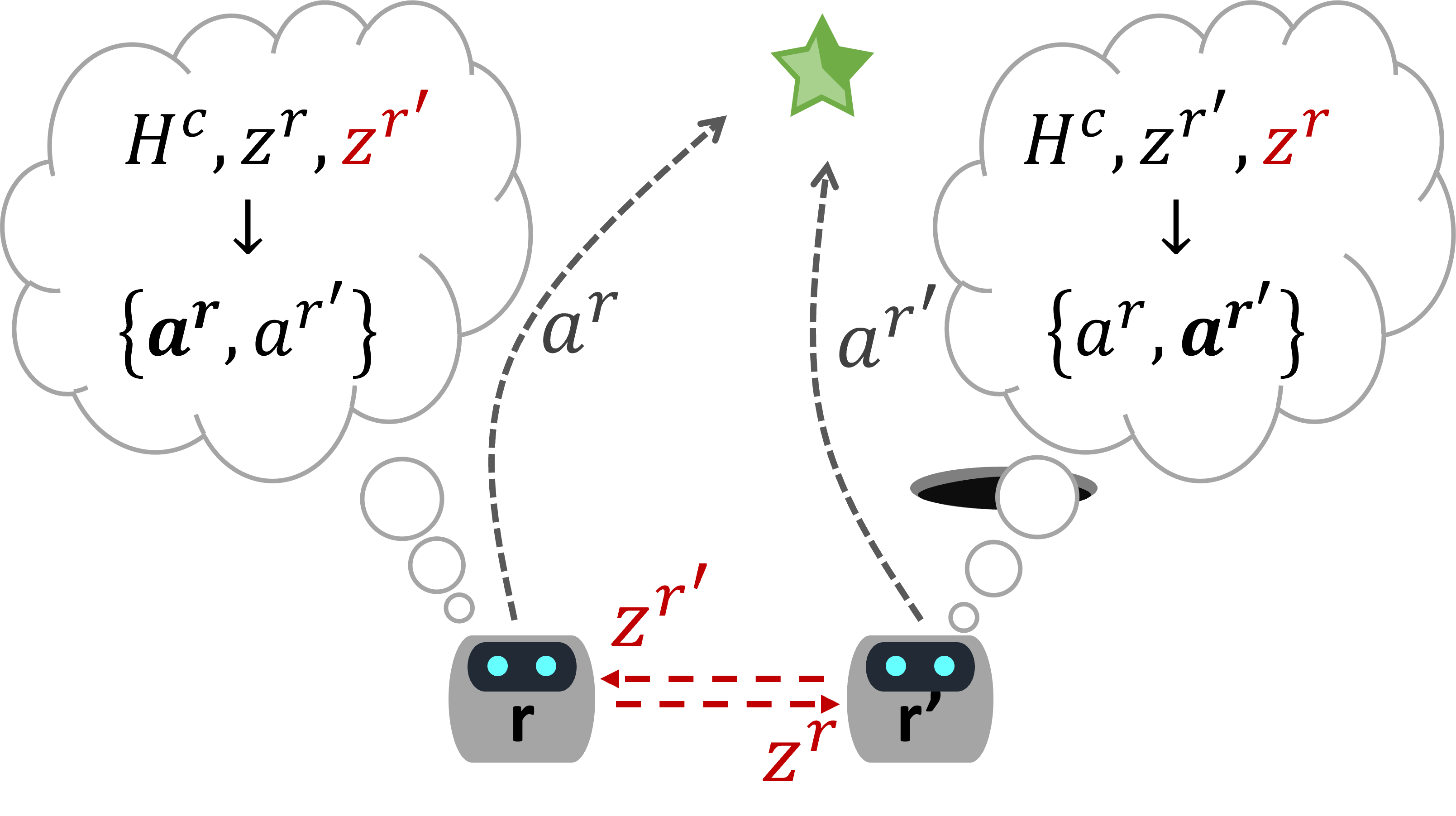}\label{fig:motivational-example-b}}
	\subfigure[]{\includegraphics[height=2.7cm, width=0.32\textwidth]{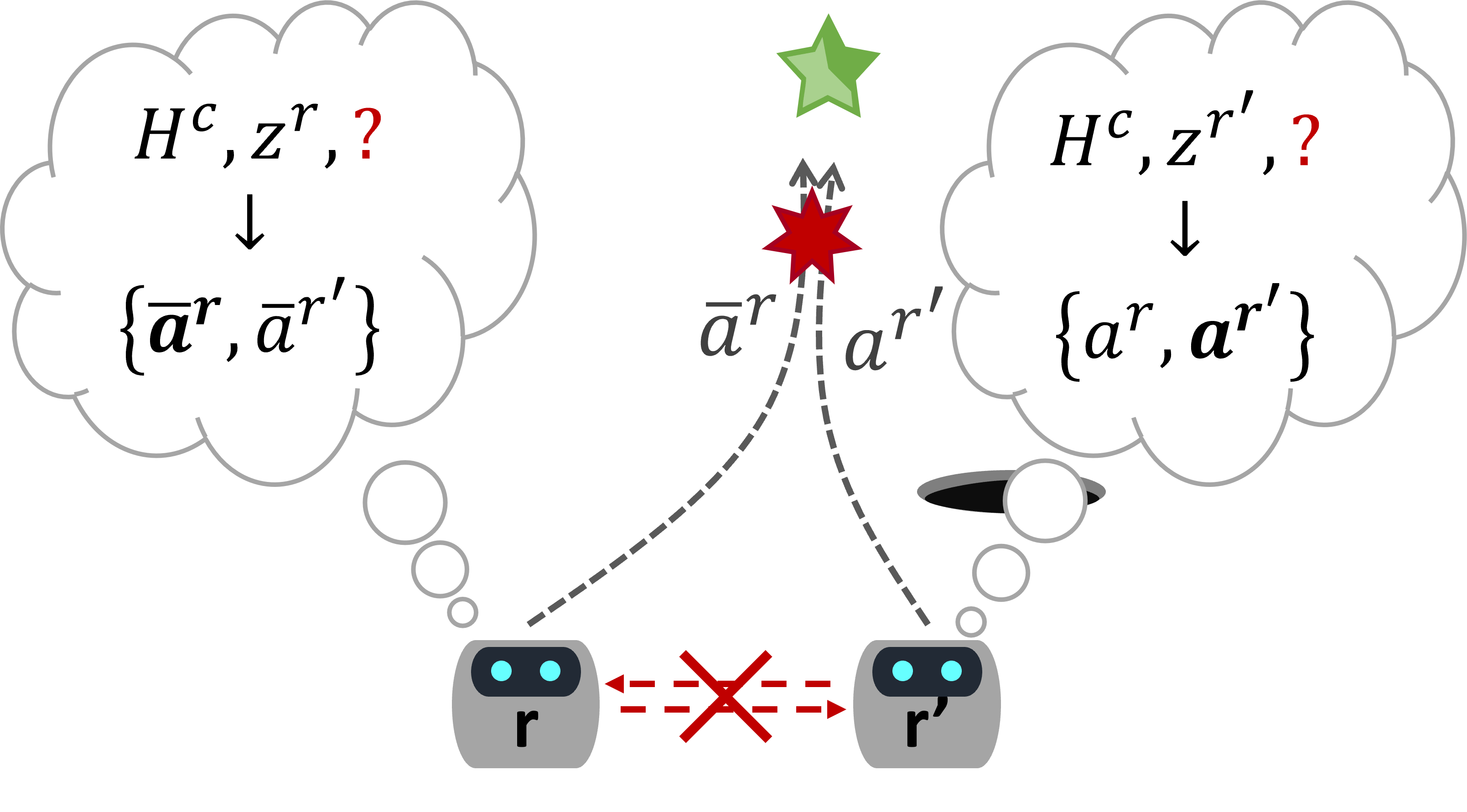}\label{fig:motivational-example-c}}
	\caption{\label{fig:concept}\scriptsize 
		\textbf{(a)} Two robots, $r$ and $r'$, acquire separate observations ($z^r, z^{r'}$) and begin a planning session. The robots aim to cooperatively reach the green star while satisfying a safety property of avoiding obstacles and collisions; each agent has two candidate actions (\{$a^r, \bar{a}^r$\} for robot $r$, and $\{a^{r'}, \bar{a}^{r'}\}$ for robot $r'$). \textbf{(b)} The robot communicate their observations to each other (red color), after which their histories, and beliefs, become consistent. Decentralized MR-BSP in such case yields the same best joint action for both robots. Here, $H^c$ represents the common history between the robots prior to acquiring the observations $z^r$ and $z^{r'}$. \textbf{(c)} The robots do not communicate their observations, and as a result, the robots' beliefs become inconsistent. This causes the robots to conclude inconsistent joint-actions, which leads to a collision. \vspace{-10pt}
	}
\end{figure*}

Despite this, multi-robot planning with \emph{inconsistent} beliefs has not been explicitly addressed thus far.
In this paper we address this crucial gap of MR-BSP with inconsistent beliefs of the robots.
To our knowledge, this work is the first to address this gap. We develop a novel decentralized algorithm named \textsc{VerifyAC} that implements an MR-BSP framework that explicitly accounts for inconsistent beliefs and self-triggers communication only when the same joint action selection by different robots cannot be guaranteed.
Considering \textsc{VerifyAC} as the base algorithm, we develop two extended versions \textsc{R-VerifyAC} and \textsc{R-VerifyAC-simp} to further reduce the number of communications and reduce the computation time while ensuring action consistency.

At the core of our proposed approach is the notion of \emph{action consistency} \citep{Indelman16ral, Elimelech22ijrr,  Kitanov24ijrr}, which captures the observation that decision making involves identifying which action is preferable over other actions: 
If in two decision making problems a certain action is preferred over other candidate actions, then this action would be identified as the best action in both problems, regardless of the actual objective function values.
In such  case, the two problems are action-consistent.

Our key observation is that when the robots perform MR-BSP with inconsistent beliefs, each robot solves a \emph{different} planning problem; nevertheless, these problems can still yield the \emph{same} best joint action if they are action-consistent.
Leveraging this key observation, in this paper we develop an approach that detects if a consistent decision making among robots can be guaranteed albeit the inconsistent beliefs.
This involves reasoning about the missing information (observations) and the corresponding beliefs of the other robot(s).
In the case where an action-consistent decision-making cannot be guaranteed, our approach self-triggers communication of the information that will eventually lead to an action-consistent decision-making. 
The communications are self-triggered because each robot reasons about when it has to initiate a communication without being triggered by any other robot.


Specifically, Algorithm \textsc{VerifyAC} deterministically guarantees \mrac according to the following three steps. For each given robot, \textsc{VerifyAC} reasons for action preferences about 1) its local information, 2) what it perceives about the reasoning of the other robot, and 3) what it perceives about the reasoning of itself perceived by the other robot. However, it requires all the observations in steps 2 and 3 to be in favor of a particular joint action selected in step 1.  
This is quite challenging to satisfy consistency requirements for \emph{all} the observations in steps 2 and 3 and less likely to happen without frequent communications. 

We design an extended version of \textsc{VerifyAC} named \textsc{R-VerifyAC} which relaxes the satisfaction criteria for \mrac. \textsc{R-VerifyAC} requires some, instead of all, the observations to be in favor of the step 1 action.
\textsc{R-VerifyAC} contributes to further reduction in the number of communications. In terms of performance guarantees, \textsc{R-VerifyAC} provides deterministic and probabilistic guarantees on \mrac based on some conditions.

Although \textsc{R-VerifyAc} reduces the number of communications with performance guarantees, it still requires to compute over all the observations in steps 2 and 3. We address this issue by proposing a simplified variant of \textsc{R-VerifyAC} named \textsc{R-VerifyAC-simp} that aims to reduce the computation time by computing over a smaller subset of the possible observations.


To summarize, the \textbf{main contributions} of this paper are as follows:
\begin{enumerate}[(a)]
	\item \label{contribution:a} We introduce a formulation of a new problem, i.e.~MR-BSP with inconsistent beliefs.
	\item \label{contribution:b} We develop a novel algorithm \textsc{VerifyAC} to address this problem by leveraging the concept of action consistency and extending it to the multi-robot setting.
	A key innovation here is that we can often have the same joint action selection calculated by different robots, despite having inconsistent beliefs, even without any communication. Otherwise, communications are self-triggered to ensure action consistency.
	We provide a theoretical guarantee that our approach will eventually identify a consistent joint action for the robots. 
	\item \label{contribution:c} We design two extended versions \textsc{R-VerifyAC} and \textsc{R-VerifyAC-simp} of the base \textsc{VerifyAC} algorithm. These extended algorithms further reduce the number of communications and improve the computation time compared to the base algorithm \textsc{VerifyAC}, with performance guarantees.
	\item \label{contribution:d} We formulate an estimator for applying our algorithms in high-dimensional and continuous state and observation spaces. 
	\item \label{contribution:e}  We evaluate our methods and corroborate the provided MR-AC guarantees considering two problems, both with limited data sharing capabilities. First, we consider in simulation a multi-robot search and rescue  problem, formulating it in  discrete spaces. Second, we provide experimental results with real robots considering an active multi-robot visual SLAM problem that is formulated in continuous state and observation spaces.  
\end{enumerate}
A  conference version of this paper has appeared in \citep{Kundu24iros}. The current paper extends upon that conference paper with contributions \eqref{contribution:c}-\eqref{contribution:e}. 

This paper is organized as follows. Section \ref{sec:related-work} provides an overview of related work. In Section  \ref{section:prelim} we provide preliminaries and formulate the addressed problem. In Section \ref{sec:approach} we develop our \textsc{VerifyAC} approach, while in Section \ref{sec:simplification} we design two extended versions,  \textsc{R-VerifyAC} and \textsc{R-VerifyAC-simp}, and develop corresponding performance guarantees. In Section \ref{sec:continuous-and-high-dim-cases} we develop estimators to support  continuous and high-dimensional state and observation spaces. In Section \ref{sec:results} we evaluate our approaches in simulation and  real-robot experiments. Finally, in Section \ref{sec:conclusion} we conclude the discussion.



\section[related-work]{Related Work}\label{sec:related-work}
Multi-robot decision making with partially observability (POMDP) and belief space planning have received considerable attention in recent times. In multi-robot settings, significant research efforts have been dedicated broadly towards centralized and decentralized control of robots, cooperative and non-cooperative type of robots.
While, historically, solving a POMDP for a single robot is computationally hard~\citep{Pineau06jair} due to curse of dimensionality and curse of history, solving a POMDP for multiple robots is even harder~\citep{Bernstein02math}. The work in~\cite{Bernstein02math} proves that optimal decentralized planning is computationally intractable even with two agents in the system.
Nevertheless, progress has been made considering a decentralized general-purpose paradigm, Dec-POMDP, and macro-actions 
\citep{Amato16ijrr, Amato09aamas2, Capitan13ijrr, Pliehoek12chapter, Omidshafiei15icra}.


The concept of action consistency has been previously introduced and investigated in the context of simplification of single-robot belief space planning problems \citep{Indelman16ral, Elimelech22ijrr, Kitanov24ijrr}. The fundamental principle states that if the original and simplified problems have the same action preferences, then they also share the same optimal action (or policy). Since it is often not possible to guarantee the same action preference between the original and simplified decision making problems, in further research planning performance guarantees, in terms of adaptive deterministic and probabilistic bounds, were developed considering different simplification techniques and under varying problem setups (see e.g.~\citep{Barenboim23ral2, LevYehudi24aaai, Shienman22isrr, Zhitnikov22ai, Barenboim23nips, Zhitnikov24ijrr, Yotam24tro, Kong24isrr}). In our work, we focus on ensuring action consistency for multiple robots under uncertainties. 


Approaches that consider a non-cooperative multi-robot system, where each robot has its own reward function (representing a different task), typically tackle the problem within the framework of dynamic games by reasoning about the Nash equilibrium under various settings \citep{Mehr23tro, Schwarting21tro, So23icra}.
Multi-Robot BSP (MR-BSP) approaches have also been investigated considering a cooperative setting, i.e.~all robots have the same reward function (same task).
For instance, the works in \citep{Atanasov15icra, Indelman17arj, Regev17arj} consider MR-BSP with Gaussian high-dimensional distributions in the context of cooperative active SLAM and active inference. These works with cooperative robots mostly focus on developing simplification algorithms that reduce computational burden in high-dimensional state spaces, since exact algorithms for solving these problems are computationally intractable. In contrast, in this paper, we delve into the computationally intractable problem of multi-robot coordination with partial observability and inconsistent beliefs due to limited data sharing capabilities. We provide simplification algorithms that relax the base criteria of action consistency in order to considerably reduce the computational burden underlying it.

In the past, there has been extensive research on multi-robot coordination with connectivity constraints. In a collaborative setting, approaches involving periodic connectivity at fixed intervals different \citep{Hollinger12tro} as opposed to continual connectivity, robots with communication capabilities within a common radius \citep{Dutta19icra}, interaction constraints due to limited computational resources \citep{Heintzman21ral}, topology correction controller to accommodate a surrogate robot to replace a robot failing due to limited connectivity \citep{Yi21icra} have been proposed. In \citep{Khodayi-mehr19tro}, the authors consider a scenario where the robots can communicate only when they are physically close to each other. Their approach divides the robots into sub-teams with frequent intra-team connectivity, resulting in consistent beliefs of the robots in a given team at all times. So, the common assumption in the previous works is that the robots maintain consistent beliefs with other robots in the team, as opposed to our work where we handle planning with inconsistent beliefs of the robots, arising from reduced number of communications.

To the best of our knowledge, the closest work to our paper is \citep{Wu11ai}, where the authors explicitly consider the beliefs of different robots to be inconsistent.
However, that work requires communication whenever the beliefs of different robots are detected to be inconsistent, and hence, eventually, planning is done with consistent beliefs. In this paper, we deal with the setting wherein the robots are barred from doing frequent communication and have inconsistent beliefs. Despite inconsistent beliefs, our work ensures consistent decision making by the robots. Our approaches decide when to communicate with a goal to reduce the number of communications. Thus, we relax the criteria of consistent beliefs of the robots and handle the problem of multi-robot coordination with inconsistent belief of the robots.



\section{Preliminaries \& Problem Formulation}\label{section:prelim}
\subsection{Preliminaries}
Consider a team of $u$ robots $\Gamma= \{r_1,r_2,\dots, r_u\}$ performing some task(s).
We pick a robot $r$ arbitrarily from $\Gamma$, and denote by $-r$ the rest of the robots in the group. From the perspective of robot $r$, we define a \emph{decentralized} multi-robot POMDP as a $7$-tuple:  $\langle \mathcal{X}, \mathcal{Z}, \mathcal{A}, T, O, \rho^r, b_k^r \rangle$. Here, $\mathcal{X}$ is the application-dependent joint state space, and $\mathcal{Z}$ and $\mathcal{A}$ are the joint observation and joint action spaces. $T(x' \mid x, a)$ and $O(z \mid x)$ are the joint transition and observation models, where $x\in \mathcal{X}$, $a\in \mathcal{A}$ and $z \in \mathcal{Z}$ are, respectively, the joint state,  action and observation. Furthermore, we assume that the observations of different robots are independent conditioned on the state, i.e.~$O(z \mid x)=\prod_{r \in \Gamma} O^r(z^r \mid x)$  where $z^r \in \mathcal{Z}^r$ is the local observation of robot $r$, and $\mathcal{Z}^r$ and $O^r(.)$ are the corresponding observation space and model of robot $r$. $\rho^r$ is a general belief-dependent reward function $\rho^r: \mathcal{B}\times \mathcal{A} \mapsto \mathbb{R}$, where $\mathcal{B}$ is the belief space.  In this work we consider a \emph{cooperative} setting, i.e.~each robot has the same reward function $\rho$ that  describes a joint task allocated to the group (e.g.~information gathering). Therefore, $\rho^r(b,a)=\rho^{r'}(b,a)$ for any $r, r' \in \Gamma$.

We denote by $b_k^r$  the belief of robot $r$ at time $k$ over the state $x_k \in \mathcal{X}$,
\begin{equation}
	b^r_k[x_k] \triangleq  \prob{x_k \mid H_k^{r}},
\end{equation}
where $H_k^{r}$ is the history available to robot $r$ at time $k$, which includes its own actions and observations, as well as those of other robots in the group. In the specific case where  actions and observations of all robots in the group are available to robot $r$, $H_k^{r}$ is $\{a_{0:k-1}, z_{1:k}^r, z_{1:k}^{-r}\}$.


In a \emph{collaborative} setting robot $r$ reasons over the joint actions of the robots, instead of its individual actions. The joint action $a_{\ell}$ at any time $\ell$ is defined as \mbox{$a_{\ell} \triangleq (a_{\ell}^{r_1},\dots , a_{\ell}^{r_u}) = (a_{\ell}^r,a_{\ell}^{-r}) \in \mathcal{A}^{r_1} \times  \dots \times \mathcal{A}^{r_u} \equiv \mathcal{A}$}, where  $\mathcal{A}^{r}$ is the individual action space of robot $r\in \Gamma$.

For ease of exposition we shall consider an open loop setting, although this is not a limitation of our proposed concept. Let  $\mathcal{A}_{k+} =\mathcal{A}_{k:k+L-1}=\{a_{k:k+L-1}\}$ denote a set of $L$-step  joint action sequences  formed from the joint action space $\mathcal{A}_{k+}$. 
Under these assumptions, the objective function of robot $r$ for a horizon of $L$ time steps and a candidate joint action sequence $a_{k+} \triangleq a_{k:k+L-1} \in \mathcal{A}_{k+}$ is defined as 
\begin{equation}\label{eq-objfun}
	\begin{split}
		J(b_k^r, a_{k+}) = \!
		\expt{z_{k+1:k+L}}{\sum_{l=0}^{L-1} \rho_l(b_{k+l}^r, a_{k+l}) + \rho_L(b_{k+L}^r)},
	\end{split}	
\end{equation}
where the expectation is over future observations $z_{k+1:k+L}$ of all  robots in the group with respect to the distribution $\prob{z_{k+1:k+L}\mid b^r_k, a_{k+}}$.  The optimal joint action sequence is:
\begin{equation}\label{eq-objargmax}
	a_{k+}^*  = \argmax\limits_{a_{k+} \in \mathcal{A}_{k+} }\ J(b_k^r, a_{k+}).
\end{equation}
In this paper we use the terms ``action sequence'' and ``action'' interchangeably.

\subsection{Problem Formulation}
A typical assumption in existing multi-robot belief space planning  approaches is that of  \emph{consistent histories} across all the robots in $\Gamma$ at any planning time instant $k$, i.e.
\begin{equation}
	\forall r, r'\in \Gamma, \quad H_k^{r} \equiv H_k^{r'},
	\label{eq:consistenthis}
\end{equation}
which corresponds to the 
assumption that each robot has access to the observations of all other robots. Yet, in numerous real world problems and scenarios, such an assumption is clearly unrealistic (see Section \ref{sec:intro}).

We shall use the term \emph{inconsistent beliefs} whenever \eqref{eq:consistenthis} is not satisfied\footnote{Note that in nonparametric inference methods beliefs are generally inconsistent also given consistent histories (as given the same history the belief could be represented by different sets of particles). We leave the extension of our approach to such a setting to future work, and consider herein deterministic inference methods.}. If any two robots $r$ and $r'$ have inconsistent beliefs, $b^r_k$ and $b^{r'}_k$, their theoretical objective function values \eqref{eq-objfun} for the same joint action $a_{k+}$ are not necessarily the same. There are two reasons for this. First, the expectation in \eqref{eq-objfun} is taken with respect to two different distributions i.e.~$\prob{z_{k+1:k+L}\mid b^r_k, a_{k+}}$ and $\prob{z_{k+1:k+L}\mid b^{r'}_k, a_{k+}}$. 
Second, even when conditioned on the same realization of a future observation sequence $z_{k+1:k+l}$ for any time step $l\in[1,L]$, the theoretical posterior future beliefs $b^r_{k+l}$ and $b^{r'}_{k+l}$ are still inconsistent, and hence their corresponding rewards are different.

As a result, it is no longer guaranteed that different robots will indeed be coordinated on the theoretical level as the optimal joint action to be identified by robots $r$ and $r'$ are no longer necessarily identical. In other words,  generally,  $\argmax\limits_{a_{k+}} J(b_k^r, a_{k+}) \neq \argmax\limits_{a_{k+}} J(b_k^{r'}, a_{k+})$. Such a situation is clearly undesired as it may lead to sub-optimal planning performance, and to dangerous, unsafe decision making. 

Specifically, consider any two robots $r,r'\in \Gamma$. In a limited communication setting, at time $k$, 
consider that the last time instant when the beliefs  of these two robots were consistent is $p\in [1,k]$ time steps behind $k$. In other words, at time instant $k-p$, robots $r$ and $r'$ communicated with each other, resulting in $b^r_{k-p} = \prob{x_{k-p} \mid H^{r}_{k-p}}$ and $b^{r'}_{k-p} = \prob{x_{k-p} \mid H^{r'}_{k-p}}$ with $H^{r}_{k-p}=\{a_{0:k-p-1}, z_{1:k-p}^r, z_{1:k-p}^{-r}\}\equiv \{a_{0:k-p-1}, z_{1:k-p}^{r'}, z_{1:k-p}^{-r'}\}=H^{r'}_{k-p}$.  $H_{k-p}\triangleq H_{k-p}^r = H_{k-p}^{r'}$ for any $r,r'\in \Gamma$. 
In particular, if there are only two robots in the group, then $H_{k-p}= \{a_{0:k-p-1}, z_{1:k-p}^r, z_{1:k-p}^{r'}\}$.

During time period $[k-p+1,k]$, there was no communication and any robots $r, r' \in \Gamma$ do not have access to the non-local observations from these time instances. 
In other words, their beliefs 
\begin{equation}
	b^r_k = \prob{x_k \mid H^{r}_{k}},  \quad 
	b^{r'}_k = \prob{x_k \mid H^{r'}_{k}}, \label{eq:br_brtag}
\end{equation}
are inconsistent since robot $r$ does not have access to $z^{-r}_{k-p+1:k}$, and robot $r'$ does not have access to $z^{-r'}_{k-p+1:k}$. Currently we assume the actions performed by each robot by time instant $k$ are known. Thus,  $H_{k}^r \neq H_{k}^{r'}$ for any two robots $r, r'\in \Gamma$, where
\begin{align}
	H^{r}_{k} &= H^{r}_{k-p} \cup \{a_{k-p:k-1}, z^r_{k-p+1:k}\}, \label{eq:inconsistenthis}
	\\
	H^{r'}_{k} &= H^{r'}_{k-p}\cup \{a_{k-p:k-1}, z^{r'}_{k-p+1:k}\}.
	\label{eq:inconsistenthistag}
\end{align}

The challenge addressed in this paper is to select the same (consistent) joint action sequence $a_{k+}^*$ for all the robots in the group even though their beliefs are inconsistent.



\section{Approach}\label{sec:approach}


Our key objective is to guarantee action consistency for multiple robots with inconsistent beliefs. With inconsistent beliefs of the robots, the $J$-values for a given joint action evaluated by different robots, each with its own belief, will generally be different. If we can guarantee the \emph{same preference ordering} of the candidate joint actions derived by all the robots, then we yield a consistent best joint action regardless of the magnitude of the corresponding $J$-values. This is in striking contrast to existing approaches that implicitly ensure multi-robot action consistency (\mrac) by requiring the robots to have consistent beliefs, i.e.~assuming all the data between robots are communicated.

Specifically, we propose to utilize the concept of action consistency to address multi-robot decision making problems with inconsistent beliefs. We extend the definition of action consistency considering a multi-robot setting.
\begin{definition}[Multi-Robot Action Consistency (\mrac)]\label{def:mrac} Consider two robots $r,r'\in \Gamma$ where $r\neq r'$. At time $k$, the joint actions selected by $r$ and $r'$ are $a\in \mathcal{A}_{k+}$ and $a'\in \mathcal{A}_{k+}$ respectively. Robots $r$ and $r'$ are \emph{action consistent} at time $k$ if and only if $a = a'$. If, at time $k$, action consistency is satisfied for any two robots $r,r'\in \Gamma$, then the system of robots $\Gamma$ is action consistent at that time.
\end{definition}

\subsection{Action preferences with different beliefs}\label{subsec:ACwithDiffBeliefs}
We use the notion of comparing $J$-values where the \textit{order} of the values matters, and not the magnitude.
We define \emph{action preference} as a binary relation $\succcurlyeq$.
Consider two joint actions $a, a' \in \mathcal{A}_{k+}$.
The joint action $a$ is preferred over $a'$ w.r.t.~a given set of beliefs $\mathcal{B}_{Z}$  when action $a$ dominates $a'$ for all the beliefs in $\mathcal{B}_{Z}$: 
\begin{equation}\label{consistent-obs}
	\begin{split}
		\forall b \in \mathcal{B}_{Z} \ \ \ 
		a \succcurlyeq a' \iff J(b, a) \ge J(b, a').
	\end{split}
\end{equation}
While this is valid for any set of beliefs $\mathcal{B}_{Z}$, in this paper we shall consider \emph{a given set of observations $Z$}, and each belief  $b \in \mathcal{B}_{Z}$ results from Bayesian inference considering a particular observation  $z\in Z$.  
The joint action $a^* \in \mathcal{A}$ is most preferred if 
$a^* \succcurlyeq a$ holds for all other $a\in \mathcal{A}$.

\begin{definition}[Consistent observations]\label{def:consis-obs}
	Consider a set of observations $Z$, a set of joint actions $\mathcal{A}_{k+}$ and an objective function $J(.)$. If there exists an $a^*\in \mathcal{A}_{k+}$ satisfying $a^* \succcurlyeq a$ for all $a\in \mathcal{A}_{k+}$, we call $Z$ to be consistent in favor of $a^*$. We denote the consistency of $Z$ favoring action $a^*$ as $\mathtt{con}_{a^*}(Z) = \mathtt{true}$.
\end{definition}

Define $\mathtt{cobs}_{a^*}(Z)$ as the consistent set of observations in $Z$ favoring action $a^*$:
\begin{equation}\label{eq:conobs}
	\mathtt{cobs}_{a^*}(Z) \!=\!\! \{z\in Z' \!\!\mid\!\!\; Z'\subseteq Z  \ \land  \  \mathtt{con}_{a^*}(Z')=\mathtt{true}\}.
\end{equation}
When $\mathtt{con}_{a^*}(Z) = \mathtt{true}$, $\mathtt{cobs}_{a^*}(Z)$ contains the entire $Z$ because all the observations in $Z$ are consistent in favor of $a^*$.
When $\mathtt{con}_{a^*}(Z) = \mathtt{false}$, 
$\mathtt{cobs}_{a^*}(Z)$ contains a proper subset of observations  $Z'\subset Z$ (instead of the entire $Z$) consistent in favor of $a^*$.

Next, we present our approach to check if \mrac exists despite the robots having inconsistent beliefs, and then describe a mechanism for self-triggered communications until \mrac is achieved.

\subsection{MR-AC for robots with inconsistent beliefs}\label{mrac}
\label{approach-mrac}
Consider a group of two robots $\Gamma = \{r,r'\}$.
In a decentralized setting, we propose a mechanism to ensure \mrac for $\Gamma$ from the perspective of an arbitrarily chosen robot $r\in \Gamma$.
The aim of $r$ is to select, at planning time $k$, a joint action which is necessarily the same with the one chosen by robot $r'$, though $r$ and $r'$ have inconsistent beliefs $b^r_k$ and $b^{r'}_k$ from \eqref{eq:br_brtag}. \vspace{-5pt}


\begin{figure}[h]
	\centering
	\begin{tikzpicture}
		\begin{scope} [fill opacity = .5]
			\draw[fill=white, draw = white](-3.7,1.0) rectangle (3.7,0.0);
			\draw[fill=yellow, draw = black] (-1,1) ellipse (2cm and 0.4cm);
			\draw[fill=cyan, draw = black] (1,1) ellipse (2cm and 0.4cm);
			
			\node at (-1.7,1.0) {\textbf{$\Delta\mathcal{H}^{r',r}$}};
			\node at (1.7,1.0) {\textbf{$\Delta\mathcal{H}^{r,r'}$}};
			\node at (0,1.0) {\textbf{$\leftidx{^c}{\mathcal{H}}^{r,r'}$}};
			\node at (-1.1,0.3) {\textbf{$H^{r}$}};
			\node at (1.1,0.3) {\textbf{$H^{r'}$}};
		\end{scope}
	\end{tikzpicture}
	\caption{\scriptsize Illustration  of $H^r, H^{r'}, \leftidx{^c}{\mathcal{H}}, \Delta \mathcal{H}^{r,r'}$, and $\Delta \mathcal{H}^{r',r}$. See text for details. 
	}
	\label{fig:history}
\end{figure}
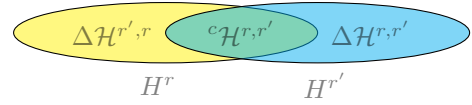

Though robots $r$ and $r'$ have inconsistent histories at planning time $k$, they have a common part of history that we shall denote as $\leftidx{^c}{\mathcal{H}}{_k^{r,r'}} \triangleq H^r_k \cap H^{r'}_k$.
Accordingly, we define by $\Delta \mathcal{H}^{r,r'}_k \triangleq H^{r'}_k \setminus \leftidx{^c}{\mathcal{H}}{_k^{r',r}}$ the part in history of robot $r'$, i.e.~an observation sequence, that is \emph{unavailable} to robot $r$.
As discussed below, robot $r$ will have to reason about these missing observations of robot $r'$.
Similarly we define  $\Delta \mathcal{H}^{r',r}_k$.
Therefore, $H^{r}_k = \{\leftidx{^c}{\mathcal{H}}{_k^{r,r'}} , \Delta \mathcal{H}^{r',r}_k\}$ and $H^{r'}_k = \{\leftidx{^c}{\mathcal{H}}{_k^{r,r'}}, \Delta \mathcal{H}^{r,r'}_k\}$ as illustrated in Figure~\ref{fig:history}.

The beliefs \eqref{eq:br_brtag} can then be expressed as:
\begin{equation}\label{eq:br2}
	\begin{split}
		b^r_k &= \prob{x_k \mid \leftidx{^c}{\mathcal{H}}{_k^{r,r'}}, \Delta \mathcal{H}^{r',r}_k}\\
		b^{r'}_k &= \prob{x_k \mid \leftidx{^c}{\mathcal{H}}{_k^{r',r}}, \Delta \mathcal{H}^{r,r'}_k}.
	\end{split}
\end{equation}
Recall we assumed that the robots have consistent histories until time $k-p$.

In the general case, some of the observations from time steps $[k-p+1, k]$ could have been communicated between the agents up to planning time $k$.
In this case, we denote the missing observations sets as
\begin{align}
	\Delta \mathcal{H}^{r',r}_k &= \{ z^{r}_{i_1}, z^{r'}_{i_2}, \ldots, z^{r}_{i_C} \}\label{eq:dHrtagr}
	\\
	\Delta \mathcal{H}^{r,r'}_k &= \{ z^{r'}_{i'_1}, z^{r'}_{i'_2}, \ldots, z^{r'}_{i'_{C'}} \}, \label{eq:dHrrtag}
\end{align}
with 
\begin{align}
	k-p+1 &\leq i_1 < i_2 < ... < i_C \leq k
	\\
	k-p+1 &\leq i'_1 < i'_2 < ... < i'_{C'} \leq k.
\end{align}
%
A specific case can be when no communication was triggered in the time steps $[k-p+1, k]$, as described in \eqref{eq:inconsistenthis} and \eqref{eq:inconsistenthistag},
\mbox{$\leftidx{^c}{\mathcal{H}}{_k^{r,r'}}=\mathcal{H}_{k-p} \cup \{a_{k-p:k-1}\}$},
$\Delta \mathcal{H}^{r',r}_k= \{z^{r}_{k-p+1:k}\}$ and
$\Delta \mathcal{H}^{r,r'}_k = \{z^{r'}_{k-p+1:k}\}$.

We propose the following steps to identify \mrac by robot $r$ despite inconsistent beliefs $b^r_k$ and $b^{r'}_k$ \eqref{eq:br2} of the robots.
Conceptually, robot $r$ needs to analyze the joint action preferences from different perspectives: i) its own perspective, ii) perspective of the other robot $r'$, and iii) its own perspective reasoned by the other robot $r'$.
If $r$ finds the same best joint action from all the above perspectives, then it can be assured that the other robot has also calculated the same best joint action;  hence, in such case,  both robots are action-consistent, i.e.~choose the same joint action.


Since robot $r$ does not have access to $\Delta \mathcal{H}^{r,r'}_k$, and it is aware that robot $r'$ does not have access to $\Delta \mathcal{H}^{r',r}_k$, these steps involve reasoning about all possible values of these missing observations.  
We denote the corresponding joint observation spaces, that represent these possible values, by $\Delta \mathcal{Z}^{r,r'}_k$ and $\Delta \mathcal{Z}^{r',r}_k$. Thus, $\Delta \mathcal{H}^{r,r'}_k \in \Delta \mathcal{Z}^{r,r'}_k$ and $\Delta \mathcal{H}^{r',r}_k \in \Delta \mathcal{Z}^{r',r}_k$. 
Also, define $\Delta \mathcal{Z}_k$ as
\begin{align}
	\Delta \mathcal{Z}_k &=\Delta \mathcal{Z}^{r,r'}_k \cup \Delta \mathcal{Z}^{r',r}_k.
	\label{eq:dZ}
\end{align}
%
According to the general definitions of the missing observations sets,
\eqref{eq:dHrtagr} and \eqref{eq:dHrrtag},
the missing observations spaces are defined as
\begin{align}
	\Delta \mathcal{Z}^{r,r'}_k &= \mathcal{Z}^{r'}_{i'_1} \times \mathcal{Z}^{r'}_{i'_2} \times \ldots \times \mathcal{Z}^{r'}_{i'_{C'}} \label{eq:dZrrtag}
	\\
	\Delta \mathcal{Z}^{r',r}_k &=\mathcal{Z}^{r}_{i_1}\times \mathcal{Z}^{r}_{i_2} \times \ldots \times \mathcal{Z}^{r}_{i_C}.
	\label{eq:dZrtagr}
\end{align}
%
%
In Sections \ref{sec:approach} and \ref{sec:simplification} we assume these observation spaces to be discrete. In Section \ref{sec:continuous-and-high-dim-cases} we extend the approach to continuous and high-dimensional spaces.

\subsection{Algorithm \textsc{VerifyAC}}\label{subsec:verifyac}
We present an algorithm named \textsc{VerifyAC} that verifies \mrac from the perspective of robot $r$.
The algorithm captures the above concept and we now present  it in detail. 
Figure~\ref{ac-concept} illustrates \emph{conceptually} the mentioned steps in a toy example that has only two possible joint actions and two possible observations for each robot.
The steps of \textsc{VerifyAC} are described below.



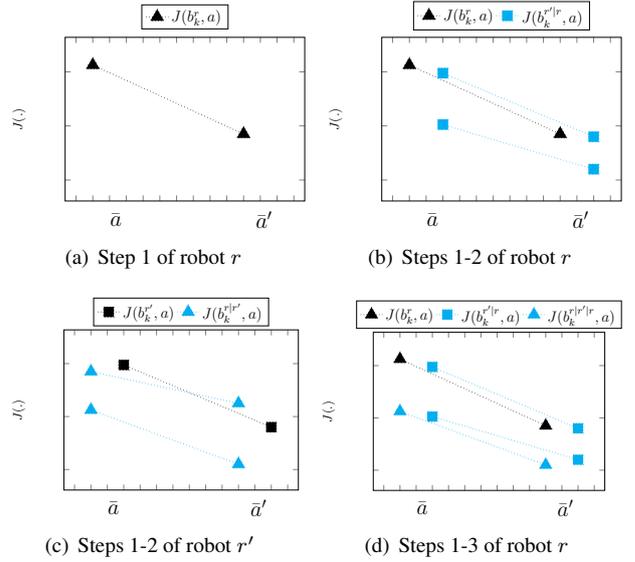
\begin{figure}[t]
	\begin{center}
		\subfigure[Step 1 of robot $r$]{
			\resizebox{0.47\linewidth}{!}{
				\begin{tikzpicture}
					\pgfplotsset{every tick label/.append style={font=\Large}}
					\begin{axis}[
						symbolic x coords={2,3,4,5,a,b,6,7,8,c,9,10,11,12},
						enlargelimits=0.05,
						legend style={at={(0.5,1.20)}, anchor=north,legend columns=-1},
						ylabel=$J(.)$,
						ymax=90, ymin=40,
						x tick label style={rotate=0, anchor=east, align=center, yshift=-0.5cm, font=\LARGE},
						y tick label style={color=white, font=\normalsize},
						xtick={2,3,4,5,a,b,6,7,8,c,9,10,11,12},
						xticklabels={,,,$\bar{a}$,,,,,,,,,$\bar{a}'$,,},
						xmin=2, xmax=12,
						ymin=35, ymax=90,
						height=6cm, width=8cm,
						]
						\addplot[black, mark=triangle*, mark options={scale=3}, dotted]
						coordinates { (3, 82.5) (9, 57.0)};
						\addlegendentry{\large $J(b_k^r,a)$}
						
					\end{axis}
					\label{ac-1-obs}
				\end{tikzpicture}
		}} 
		\subfigure[Steps 1-2 of robot $r$]{
			\resizebox{0.47\linewidth}{!}{
				\begin{tikzpicture}
					\pgfplotsset{every tick label/.append style={font=\Large}}
					\begin{axis}[
						symbolic x coords={2,3,4,5,a,b,6,7,8,c,9,10,11,12},
						enlargelimits=0.05,
						legend style={at={(0.5,1.22)}, anchor=north,legend columns=-1},
						ylabel=$J(.)$,
						ymax=90, ymin=40,
						x tick label style={rotate=0, anchor=east, align=center, yshift=-0.5cm, font=\LARGE},
						y tick label style={color=white, font=\normalsize},
						xtick={2,3,4,5,a,b,6,7,8,c,9,10,11,12},
						xticklabels={,,,$\bar{a}$,,,,,,,,,$\bar{a}'$,,},
						xmin=2, xmax=12,
						ymin=35, ymax=90,
						height=6cm, width=8cm,
						]
						\addplot[black, mark=triangle*, mark options={scale=3}, dotted]
						coordinates { (3, 82.5) (9, 57.0)};
						\addlegendentry{\large $J(b_k^r,a)$}

						\addplot[cyan, mark=square*, mark options={scale=2}, dotted]
						coordinates { (5, 79.5) (11, 56.0)};
						\addlegendentry{\large$J(b_k^{r'|r},a)$}
						
						\addplot [cyan, mark=square*, mark options={scale=2}, dotted]
						coordinates { (5, 60.5) (11, 44.0)};
						
					\end{axis}
					\label{ac-2-obs}
				\end{tikzpicture}
		}}
		\subfigure[Steps 1-2 of robot $r'$]{
			\resizebox{0.46\linewidth}{!}{
				\begin{tikzpicture}
					\pgfplotsset{every tick label/.append style={font=\normalsize}}
					\begin{axis}[
						symbolic x coords={2,3,4,5,a,b,6,7,8,c,9,10,11,12},
						enlargelimits=0.05,
						legend style={at={(0.5,1.20)}, anchor=north,legend columns=-1},
						ylabel=$J(.)$,
						ymax=90, ymin=40,
						x tick label style={rotate=0, anchor=east, align=center, yshift=-0.5cm, font=\LARGE},
						y tick label style={color=white, font=\normalsize},
						xtick={2,3,4,5,a,b,6,7,8,c,9,10,11,12},
						xticklabels={,,,$\bar{a}$,,,,,,,,,$\bar{a}'$,,},
						xmin=2, xmax=12,
						ymin=35, ymax=90,
						height=6cm, width=8cm,
						]
						\addplot[black, mark=square*, mark options={scale=2}, dotted]
						coordinates { (5, 79.5) (11, 56.0)};
						\addlegendentry{\large$J(b_k^{r'},a)$}

						\addplot [cyan, mark=triangle*, mark options={scale=3}, dotted]
						coordinates { (3, 62.5) (9, 42.0)};
						\addlegendentry{\large $J(b_k^{r|r'},a)$}
						
						\addplot[cyan, mark=triangle*, mark options={scale=3}, dotted]
						coordinates { (3, 77) (9, 65.0)};
						
					\end{axis}
				\end{tikzpicture}
				\label{ac-2-obs-other}
		}}
		\subfigure[Steps 1-3 of robot $r$]{
			\resizebox{0.48\linewidth}{!}{
				\begin{tikzpicture}
					\pgfplotsset{every tick label/.append style={font=\normalsize}}
					\begin{axis}[
						symbolic x coords={2,3,4,5,a,b,6,7,8,c,9,10,11,12},
						enlargelimits=0.05,
						legend style={at={(0.5,1.20)}, anchor=north,legend columns=-1},
						ylabel=$J(.)$,
						ymax=90, ymin=40,
						x tick label style={rotate=0, anchor=east, align=center, yshift=-0.5cm, font=\LARGE},
						y tick label style={color=white, font=\normalsize},
						xtick={2,3,4,5,a,b,6,7,8,c,9,10,11,12},
						xticklabels={,,,$\bar{a}$,,,,,,,,,$\bar{a}'$,,},
						xmin=2, xmax=12,
						ymin=35, ymax=90,
						height=6cm, width=8cm,
						]
						\addplot[black, mark=triangle*, mark options={scale=3}, dotted]
						coordinates { (3, 82.5) (9, 57.0)};
						\addlegendentry{\large$J(b_k^r,a)$}
						
						\addplot[cyan, mark=square*, mark options={scale=2}, dotted]
						coordinates { (5, 79.5) (11, 56.0)};
						\addlegendentry{\large$J(b_k^{r'|r},a)$}
						
						\addplot [cyan, mark=triangle*, mark options={scale=3}, dotted]
						coordinates { (3, 62.5) (9, 42.0)};
						\addlegendentry{\large $J(b_k^{r|r'|r},a)$}
						
						\addplot [cyan, mark=square*, mark options={scale=2}, dotted]
						coordinates { (5, 60.5) (11, 44.0)};  
						
					\end{axis}
				\end{tikzpicture}
				\label{ac-3-obs}
		}}
		\caption{Illustration of \textsc{VerifyAC} from the perspective of robot $r$. Robots $r$ and $r'$ have inconsistent beliefs $b^r_k$ and $b^{r'}_k$ at time $k$.
			Candidate joint actions are $\bar{a}$ and $\bar{a}'$. 
			Triangles and squares denote objective function ($J(.)$) evaluations for $r$ and $r'$ respectively.
			\textbf{(a)} Step 1 of $r$: Robot $r$ computes its belief for its actual observation. Chooses $\bar{a}$ as the best action. 
			\textbf{(b)} Step 1-2 of $r$: In Step 2, $r$ computes $J(.)$ for each possible observation of $r'$. All the observations are consistent in favor of $\bar{a}$.
			\textbf{(c)} Step 1-2 of $r'$: Similarly, robot $r'$ computes Step 1 for its actual observation, and Step 2 for all possible observations of $r$.
			\textbf{(d)} Step 3 of $r$: Combines (a)-(c) and verifies that the observations at each step are consistent in favor of action $\bar{a}$.
			Hence, $r$ can be assured that $r'$ also has chosen $\bar{a}$. Thus $r$ chooses action $\bar{a}$ at time $k$.
		}
		\label{ac-concept}
	\end{center}
\end{figure}

\subsubsection*{\textbf{Step 1: }\textit{Robot $r$ calculates the best joint action given its own belief $b^r_k$ via \eqref{eq-objargmax}}}\label{step-1}  
This involves evaluation of the objective function $J(b^r_k, \bar{a})$ for different candidate joint actions in $\mathcal{A}_{k+}$.
In other words, this involves evaluation of the objective function considering the belief $b^r_k$ from \eqref{eq:br2} which is conditioned on the consistent history $\leftidx{^c}{\mathcal{H}}{_k^{r,r'}}$ and on the actual local observation(s) $\Delta \mathcal{H}^{r',r}_k$ of robot $r$.
Finally, robot $r$ selects the best action $\bar{a}\in \mathcal{A}_{k+}$ such that $J(b^r_k, \bar{a}) > J(b^r_k, \bar{a}')\ \ \forall \bar{a}'\in \mathcal{A}_{k+} \text{ and } \bar{a}\neq \bar{a}'$.
The concept is illustrated in Figure \ref{ac-1-obs} by black triangles. 

However, since robots $r$ and $r'$ have inconsistent beliefs, it is not guaranteed that the joint action chosen by robot $r$ will be the same as chosen by $r'$. So, we move to Step 2.

\noindent
\subsubsection*{\textbf{Step 2: }Robot $r$ mimics the reasoning done by robot $r'$}\label{step-2}
The belief $b^{r'}_k$  \eqref{eq:br2} of the other robot $r'$ is conditioned on $\Delta \mathcal{H}^{r,r'}_k$ 
which is unavailable to robot $r$. Moreover, $b^{r'}_k$ is not conditioned on $\Delta \mathcal{H}^{r',r}_k$, 
which is unavailable to robot $r'$. Explicitly, the two beliefs  are given by \eqref{eq:br2}.

As $\Delta \mathcal{H}^{r,r'}_k$
is unavailable to robot $r$, it has now to reason over all the possible observation  realizations in $\Delta \mathcal{Z}^{r,r'}_k$  of robot $r'$. For instance, 
in the case of \eqref{eq:inconsistenthis}-\eqref{eq:inconsistenthistag}, i.e.~prior to any communication since the time instant $k-p$, 
this corresponds to all the possible realizations of  observation sequences of robot $r'$ between time instances $k-p+1$ and $k$ (one of which is the actually captured sequence of observations, $\Delta \mathcal{H}^{r,r'}_k = \{ z^{r'}_{k-p+1:k} \}\in \Delta \mathcal{Z}^{r,r'}_k$). 

Robot $r$ verifies the consistency of observations $\Delta \mathcal{Z}^{r,r'}_k$, i.e. $\mathtt{con}_{\bar{a}}(\Delta \mathcal{Z}^{r,r'}_k)=\mathtt{true}$ (Definition~\ref{def:consis-obs}), in favor of the action $\bar{a}$ derived in Step 1. For each such possible realization, denoted abstractly by $\tilde{z}^{r'}\in \Delta \mathcal{Z}^{r,r'}_k$ 
, robot $r$ first constructs  a plausible corresponding belief of robot $r'$, denoted as 
\begin{equation}
	b_k^{r'\mid r}(\tilde{z}^{r'}) \triangleq \prob{x_k \mid \leftidx{^c}{\mathcal{H}}{_k^{r,r'}}, \tilde{z}^{r'}}.
	\label{eq:brtagr}
\end{equation}		
Note the belief is still over the state $x_k$, and it varies for different values of $\tilde{z}^{r'}$ it is conditioned upon.

In practice, the belief \eqref{eq:brtagr} can be calculated 
in a Bayesian manner either by down-dating the observations $\Delta \mathcal{H}^{r',r}_k$
from $b^r_k$ \eqref{eq:br2}  and updating with $\tilde{z}^{r'}$, or equivalently, directly from  $\prob{x_k \mid \leftidx{^c}{\mathcal{H}}{_k^{r,r'}}}$, which would have to be maintained. For instance,  for $p=1$, these two alternatives correspond to
\begin{align}
	\label{eq:brtagr-bayesian-update}
	\begin{split}
		b_k^{r'\mid r}(\tilde{z}^{r'}) &= b^r_k \cdot \frac{ \prob{z^{r}_k\mid \leftidx{^c}{\mathcal{H}}{_k^{r,r'}}} \prob{ \tilde{z}^{r'}_k \mid x_k} }{\prob{z^{r}_k \mid x_k}\prob{\tilde{z}^{r'}_k\mid \leftidx{^c}{\mathcal{H}}{_k^{r,r'}}}} \\
		&= \prob{x_k \mid \leftidx{^c}{\mathcal{H}}{_k^{r,r'}}} \cdot \frac{\prob{ \tilde{z}^{r'}_k \mid x_k}}{\prob{\tilde{z}^{r'}_k\mid \leftidx{^c}{\mathcal{H}}{_k^{r,r'}}}}.
	\end{split}
\end{align}
Then, for each  $\tilde{z}^{r'} \in \Delta \mathcal{Z}^{r,r'}_k$ of $r'$, robot $r$ evaluates the objective function $J(b_k^{r'\mid r}(\tilde{z}^{r'}), a)$ for different candidate joint actions $a\in \mathcal{A}_{k+}$. 
This is illustrated in Figure \ref{ac-2-obs} using blue squares.  Generally, each $\tilde{z}^{r'} \in \Delta \mathcal{Z}^{r,r'}_k$  yields its own $J$-values.  
Moreover,  we do not necessarily  expect either of these values to match the objective values $J(b^r_k, \bar{a})$ calculated by robot $r$ in Step 1 (black triangles), since generally the observation models and spaces of different robots could vary. Importantly, with this formulation, the actual observation that robot $r'$ captured will be considered, since $\Delta \mathcal{H}^{r,r'}_k \in \Delta \mathcal{Z}^{r,r'}_k$.

Regardless of the magnitude of $J$-values, it may happen that for all $\tilde{z}^{r'}$ the same joint action is chosen, and that action is identical to the one chosen in Step 1. Such a situation is depicted in Figure \ref{ac-2-obs} where $\bar{a}$ is the best joint action in both steps 1 and 2. In other words, in this scenario, regardless of what the actual observation of $r'$ is, robot $r$ can be assured that when $r'$ performs its own decision making, i.e. step 1, it will necessarily choose the same joint action as the one chosen by $r$. Therefore, $r$ checks if the best action selected for each $\tilde{z}^{r'}\in \Delta \mathcal{Z}_{k}^{r,r'}$ is the action $\bar{a}$ derived in Step 1, i.e.~if $\mathtt{con}_{\bar{a}}(\Delta \mathcal{Z}^{r,r'}_k)=\mathtt{true}$ holds.
Thus, $r$ captures the reasoning about the action selection by $r'$. 

Yet, at this point, robot $r$ cannot guarantee that robot $r'$ will also reach the same conclusion, i.e. regardless of the actual observation of robot $r$ (that is unavailable to robot $r'$), the same joint action will be selected by robots $r'$ and $r$. Therefore, Steps 1 and 2 are insufficient to guarantee \mrac between the two robots, which brings us to Step 3.

\subsubsection*{\textbf{Step 3: }\textit{Robot $r$ mimics the reasoning done by robot $r'$ that mimics the reasoning done by robot $r$}}\label{step-3}
Robot $r'$, on its side, similarly performs Steps 1 and 2. In Step 1, $r'$ evaluates the objective function for different candidate joint actions based on its own belief $b^{r'}_k$ conditioned on $\leftidx{^c}{\mathcal{H}}{_k^{r',r}}$ and $\Delta \mathcal{H}^{r,r'}_k$ (see \eqref{eq:br2}). 
Refer to Figure~\ref{ac-2-obs-other}. Since 
$\Delta \mathcal{H}^{r,r'}_k \in \Delta \mathcal{Z}^{r,r'}_k$, this calculation will be considered by robot $r$ as a part of its Step 2. 

Robot $r'$ also performs Step 2, on its side, in which $r'$ reasons about all possible observations of $r$, i.e. $\Delta \mathcal{Z}_{k}^{r',r}$.
Robot $r'$ thus calculates $b_k^{r\mid r'}(\tilde{z}^{r}) \ \forall \tilde{z}^{r}\!\!\!\in\!\!\! \Delta \mathcal{Z}_{k}^{r',r}$. 
This is illustrated in Figure~\ref{ac-2-obs-other} by blue triangles.  Now, if robot $r$, on its side, mimics this reasoning done by $r'$, robot $r$ can perceive what $r'$ thinks about the reasoning done by $r$. 

Put formally, robot $r$ verifies if all observations in $\Delta \mathcal{Z}^{r',r}_k$ are in favor of the action $\bar{a}$ derived in Step 1 of $r$, i.e.~$\mathtt{con}_{\bar{a}}(\Delta \mathcal{Z}^{r',r}_k)=\mathtt{true}$.
So, $r$ computes $b_k^{r\mid r'\mid r}(\tilde{z}^{r})$,
\begin{equation}
	b_k^{r\mid r'\mid r}(\tilde{z}^{r})\triangleq \prob{x_k \mid \leftidx{^c}{\mathcal{H}}{_k^{r',r}}, \tilde{z}^{r}},
\end{equation}
and evaluates $J(b_k^{r\mid r'\mid r}(\tilde{z}^{r}), a)$ for each $\tilde{z}^{r}\in \Delta \mathcal{Z}_{k}^{r',r}$ and for all candidate joint actions $a\in \mathcal{A}_{k+}$. 
Thus, $r$ captures the reasoning about the action selection by itself reasoned by $r'$.

Combining Steps 1-3, robot $r$ checks for \mrac by reasoning about selecting a consistent joint action  by $r$ and $r'$, which involves considering the observations in $\Delta\mathcal{Z}_k$ (defined in~\eqref{eq:dZ}).  When the same joint action $\bar{a}$ is chosen in Steps 1-3, as illustrated in Figure~\ref{ac-3-obs}, \mrac is identified by robot $r$ using \textsc{VerifyAC}. Thus, despite having inconsistent beliefs, robots $r$ and $r'$ are action consistent, i.e. identify the same joint action chosen by both the robots at time $k$.

\begin{theorem}\label{Thm:Verifymrac}
	Steps 1-3 of \textsc{VerifyAC} are  necessary and sufficient for any robot $r$ to find \mrac, if \mrac exists, with the robots in $\Gamma = \{r,r'\}$ having  inconsistent beliefs.
\end{theorem}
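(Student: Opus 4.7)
The plan is to prove both directions by exploiting a symmetry between the reasoning that $r$ performs and the reasoning that $r'$ would perform if it were executing \textsc{VerifyAC} itself. The crucial observation is that the common history satisfies $\leftidx{^c}{\mathcal{H}}{_k^{r,r'}}=\leftidx{^c}{\mathcal{H}}{_k^{r',r}}$, and hence the belief $b_k^{r\mid r'\mid r}(\tilde{z}^{r})=\prob{x_k \mid \leftidx{^c}{\mathcal{H}}{_k^{r',r}}, \tilde{z}^{r}}$ that $r$ constructs in Step 3 coincides with the belief $b_k^{r\mid r'}(\tilde{z}^{r})$ that $r'$ would construct in its own Step 2 for the same $\tilde{z}^{r} \in \Delta \mathcal{Z}^{r',r}_k$. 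By the same reasoning, the belief $b_k^{r'\mid r}(\tilde{z}^{r'})$ used in $r$'s Step 2 equals the belief $r'$ would use in its own Step 3. Hence the three checks performed by $r$ simulate \emph{all} three steps of $r'$'s \textsc{VerifyAC}, with no further recursion.

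For sufficiency I would assume Steps 1--3 all succeed with the same joint action $\bar{a}$. Step 1 establishes $\bar{a} = \argmax_{a} J(b^r_k, a)$, so $r$ itself selects $\bar{a}$. Because $\Delta \mathcal{H}^{r,r'}_k \in \Delta \mathcal{Z}^{r,r'}_k$, the consistency assertion $\mathtt{con}_{\bar{a}}(\Delta \mathcal{Z}^{r,r'}_k)=\mathtt{true}$ obtained in Step 2 together with Definition \ref{def:consis-obs} implies that $\bar{a}$ maximizes $J(b^{r'}_k,\cdot)$, so $r'$'s own Step 1 also selects $\bar{a}$. By the symmetry noted above, Step 3 of $r$ is precisely the statement that $r'$'s Step 2 passes with $\bar{a}$, while $r'$'s Step 3 reduces to $r$'s Step 2 and therefore also passes. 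Thus $r'$'s \textsc{VerifyAC} terminates with $\bar{a}$ as well, and Definition \ref{def:mrac} yields \mrac.

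For necessity I would argue by contrapositive on each individual step. If Step 1 were omitted, there is no candidate action to certify. If Step 2 were dropped, one can construct a scenario with some $\tilde{z}^{r'} \in \Delta \mathcal{Z}^{r,r'}_k$ whose maximizer of $J(b_k^{r'\mid r}(\tilde{z}^{r'}),\cdot)$ differs from $\bar{a}$; in the adversarial case $\Delta \mathcal{H}^{r,r'}_k = \tilde{z}^{r'}$, robot $r'$'s Step 1 would select a different action, so $r$'s declaration of \mrac would be incorrect. Analogously, dropping Step 3 leaves $r$ unable to detect some realization $\tilde{z}^{r} \in \Delta \mathcal{Z}^{r',r}_k$ for which $r'$'s Step 2 rejects $\bar{a}$; in that case $r'$ would trigger communication rather than commit to $\bar{a}$, again breaking joint-action agreement.

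The main obstacle I anticipate is pinning down the belief identification $b_k^{r\mid r'\mid r}(\cdot) \equiv b_k^{r\mid r'}(\cdot)$ rigorously, and showing that one round of mutual reasoning suffices so that no further nested ``$r$ about $r'$ about $r$ about $r'$'' layer is required. Once this Bayesian-bookkeeping step is established from the equality of common histories together with the conditional-independence of observations given the state, the remainder of the argument is a direct application of Definition \ref{def:consis-obs} to the observation spaces $\Delta \mathcal{Z}^{r,r'}_k$ and $\Delta \mathcal{Z}^{r',r}_k$.
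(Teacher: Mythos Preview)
Your proposal is correct and arguably more illuminating than the paper's own proof, though the two take somewhat different routes.

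The paper's sufficiency argument is terse: since the actual observations $\Delta \mathcal{H}^{r,r'}_k$ and $\Delta \mathcal{H}^{r',r}_k$ lie inside $\Delta \mathcal{Z}_k$, the condition $\mathtt{con}_{\bar{a}}(\Delta \mathcal{Z}_k)=\mathtt{true}$ automatically covers them, and the actual observations determine the robots' action preferences ``with zero uncertainty.'' You instead make the symmetry explicit---identifying $r$'s Step~3 with $r'$'s Step~2 and $r$'s Step~2 with $r'$'s Step~3 via the equality of common histories---and thereby show directly that $r'$'s entire \textsc{VerifyAC} run terminates with $\bar{a}$. This buys you a clean explanation of why the recursion bottoms out after one round of mutual reasoning, a point the paper leaves implicit.

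For necessity, the paper argues at the granularity of individual observations (removing any single $z \in \Delta \mathcal{Z}_k$ risks discarding the actual one), whereas you argue at the granularity of steps (dropping Step~2 or Step~3 admits an adversarial realization that breaks agreement). Both readings of ``necessary'' are defensible given the theorem's phrasing, and both arguments are sound; yours aligns more closely with the literal ``Steps 1--3 are necessary'' wording, while the paper's emphasizes exhaustiveness over the observation space.
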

\begin{proof}
	\textit{Steps 1-3 are sufficient:}
	In Steps 1-3 of \textsc{VerifyAC}, robot $r$ analyzes the observation spaces of the robots $\Delta \mathcal{Z}_k$ (defined in \eqref{eq:dZ}) as per the algorithm. 
	Among the observations in $\Delta \mathcal{Z}_k$ we have the actual local observations $\Delta \mathcal{H}^{r,r'}_k$ and $\Delta \mathcal{H}^{r',r}_k$.
	When \mrac exists, 
	$\mathtt{con}_{\bar{a}}(\Delta \mathcal{Z}_k)$ becomes $\mathtt{true}$ in favor of some joint action $\bar{a}$. 
	This implies that the $J$-values corresponding to the actual observation values are also consistent in favor of $\bar{a}$.
	We know that the actual observations give the joint action preferences with zero uncertainty.
	Therefore, $\mathtt{con}_{\bar{a}}(\Delta \mathcal{Z}_k)=\mathtt{true}$ in Steps 1-3 implies \mrac for robots $r$ and $r'$.
	
	\textit{Steps 1-3 are necessary:}
	Without communication, robot $r$ is not aware of the actual observation $\Delta \mathcal{H}^{r,r'}_k$ of the other robot $r'$.
	In Steps 1-3 of our algorithm, robot $r$ exhaustively considers all observations in $\Delta \mathcal{Z}_k$ that include the actual observations of both the robots in $\Gamma$.
	The actual observations give the joint action preferences of the robots with zero uncertainty.
	If we remove a randomly selected observation $z\in \Delta \mathcal{Z}_k$ in any of the steps in \textsc{VerifyAC}, there is a possibility of $z$ being the actual observation of a robot. This does not guarantee selecting the joint action preference correctly.
	Hence, Steps 1-3 are \emph{necessary} to verify \mrac.
\end{proof}

However, it may often happen that for given beliefs $b^r_k$ and $b^{r'}_k$,  an \mrac does not exist, i.e.~after performing Steps 1-3 of \textsc{VerifyAC} we cannot find a joint action $\bar{a}$ which is chosen by all the steps of \textsc{VerifyAC}. Mathematically,
\begin{equation}\label{no-mrac}
	\nexists \bar{a}\in \mathcal{A}_{k+}\ \ \mathtt{con}_{\bar{a}}(\Delta \mathcal{Z}_k) = \mathtt{true}.
\end{equation}
In the next section, we discuss such scenarios when \mrac is not satisfied, and describe our approach to initiate different communications (\textsc{comm}s) until \mrac is enforced.

\subsection{Self-triggered decision for communication}\label{sec:SelfComm}
We present an algorithm \textsc{EnforceAC} that enforces \mrac via \textsc{comm}s when \textsc{VerifyAC} fails to find \mrac.
Each robot assesses the requirement of a \textsc{comm} by its own reasoning and it \emph{self-triggers} a \textsc{comm} whenever needed.

When \eqref{no-mrac} holds, a \textsc{comm} is required. A \textsc{comm} can send a local observation, either from $r$ to $r'$, from $r'$ to $r$, or in both directions, based on the conditions specified below, and presented in Figure~\ref{fig:verify-ac-flow}.

Let $\bar{a}$ be the best joint action calculated by robot $r$ in Step 1. Robot $r$ reasons about necessity of a \textsc{comm} from itself to robot $r'$ and self-triggers the \textsc{comm} if:
\begin{itemize}
	\item From the perspective of $r$, the observations in Step 3 of \textsc{VerifyAC} are \emph{not consistent} in favor of the same action $\bar{a}$, i.e.~$\mathtt{con}_{\bar{a}}(\Delta \mathcal{Z}_k^{r',r}) = \mathtt{false}$.
\end{itemize}
In such a case, robot $r$ deduces that robot $r'$, which reasons about observations of robot $r$ by considering the observation space $\Delta \mathcal{Z}^{r',r}_k$ (as part of Step 2 of $r'$), will find some inconsistent observation realizations of $r$ in $\Delta \mathcal{Z}^{r',r}_k$. Therefore,  $r$ sends its local observation(s) from $\Delta \mathcal{H}^{r',r}_k$ to  $r'$.

Additionally, a \textsc{comm} from $r$ to $r'$ will be triggered if:
\begin{itemize}
	\item Step 2 of $r$ gives $\mathtt{con}_{a'}(\Delta \mathcal{Z}_k^{r,r'}) = \mathtt{true}$ where $a'\neq \bar{a}$ ($\bar{a}$ is the action chosen in Step 1 of $r$).
	In the case when $\Delta \mathcal{Z}_k^{r,r'} = \emptyset$, i.e. all the observations of $r'$ were communicated to $r$, this calculation is performed relatively just to the common history.
\end{itemize}
In this case, $r$ perceives that the observations in Step 2 are consistent in favor of $a'$, though $a'$ does not match with the best action $\bar{a}$ in Step 1 of $r$.
Also, $r$ does not know whether $r'$ detects the same inconsistency on its side (comparing Step 1 and Step 3 of $r'$).
However, it is required to modify the action preferences in Step 2 of $r$ via \textsc{comm}.
So, $r$ sends its local observation to $r'$.
This \textsc{comm} modifies $\leftidx{^c}{\mathcal{H}}^{r,r'}_k$ which, in turn, modifies the action preferences in Step 2 of $r$.

Similarly, robot $r$ reasons about a \textsc{comm} from $r'$ to $r$ if:
\begin{itemize}
	\item From the perspective of $r$, the observations in step 2 of \textsc{VerifyAC} are not consistent in favor of action $\bar{a}$, i.e. $\mathtt{con}_{\bar{a}}(\Delta \mathcal{Z}_k^{r,r'}) = \mathtt{false}$.
\end{itemize}
Due to the inconsistent observations in Step 2, robot $r$ needs access to more observations of $r'$; in other words, robot $r$ understands the necessity of a \textsc{comm} from $r'$.
For robot $r$, one possibility is to ask $r'$ to initiate a \textsc{comm}. However, $r$ knows that the \textsc{comm} from $r'$ to $r$ will happen automatically without any intervention by $r$.
This is because robot $r$ deduces that robot $r'$, on its side, analyzes the necessity of the same \textsc{comm}.
That is, when $r'$ will execute its Step 3 it will find inconsistent observations of $r$ and then $r'$ will trigger a \textsc{comm} from itself to $r$.

\begin{figure}[t]
	\centering
	\includegraphics[width=0.45\textwidth]{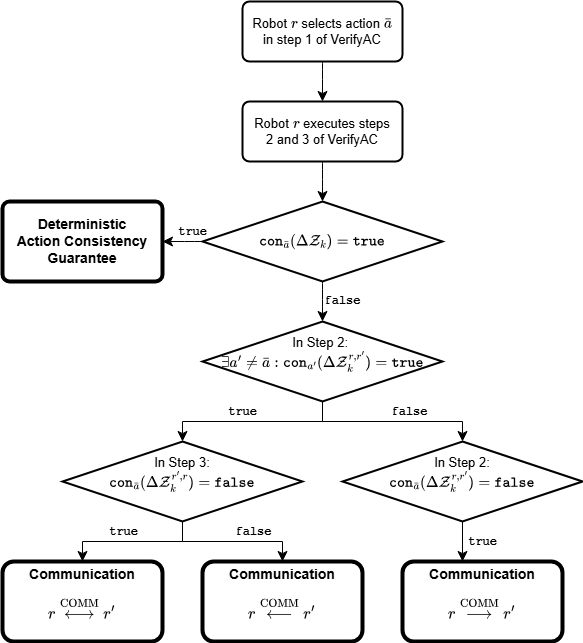}
	\caption{Flowchart of \textsc{VerifyAC} and the conditions for communications.}
	\label{fig:verify-ac-flow}
\end{figure}

After a \textsc{comm}, $\leftidx{^c}{\mathcal{H}}^{r,r'}_k$, $\Delta \mathcal{H}^{r',r}_k$, $\Delta \mathcal{H}^{r,r'}_k$, $\Delta \mathcal{Z}^{r',r}_k$ and $\Delta \mathcal{Z}^{r,r'}_k$ are updated with the transmitted observations. As a result, at least one of the histories $H^r_k$ and $H^{r'}_k$ gets updated.

For instance, if robot $r$ communicated some observation $z^r_j \in \Delta \mathcal{H}^{r',r}_k$ to robot $r'$, then, from the perspective of robot $r$, it updates  $\leftidx{^c}{\mathcal{H}}^{r,r'}_k \leftarrow \leftidx{^c}{\mathcal{H}}^{r,r'}_k \cup \{z^r_j\}$, $\Delta \mathcal{H}^{r,r'}_k \leftarrow \Delta \mathcal{H}^{r,r'}_k \setminus \{z^r_j\}$. In this case,  $H^{r}_k = \{\leftidx{^c}{\mathcal{H}}{_k^{r,r'}} , \Delta \mathcal{H}^{r',r}_k\}$ remains the same as robot $r$ did not receive any new observation(s). Robot $r$ also updates the joint observation space $\Delta \mathcal{Z}^{r',r}_k$ from \eqref{eq:dZrtagr} to exclude the corresponding observation space $\mathcal{Z}^r_j$ of the actual observation $z^r_j$. Robot $r'$, upon receiving the observation $z^r_j$, does a similar update to $\leftidx{^c}{\mathcal{H}}^{r,r'}_k$, $\Delta \mathcal{H}^{r,r'}_k$ and $\Delta \mathcal{Z}^{r',r}_k$. Consequently, $H^{r'}_k = \{\leftidx{^c}{\mathcal{H}}{_k^{r,r'}} , \Delta \mathcal{H}^{r,r'}_k\}$ is updated.

Given the updated histories $H^{r}_k$ and $H^{r'}_k$, we update the beliefs $b^r_k$ and $b^{r'}_k$ according to \eqref{eq:br2}, and execute \textsc{VerifyAC} to check if \mrac exists with the updated beliefs. If \mrac is not satisfied, again \textsc{comm}s are triggered.
This continues until \mrac is achieved.
Thus, \textsc{EnforceAC} enforces \mrac via \textsc{comm}s even though \mrac is not satisfied initially.

%

\textit{Time complexity of \textsc{EnforceAC}:} The worst case time complexity of \textsc{EnforceAC} is $\mathcal{O} (p n)$, where $p$ is the number of previous time points before which the robots had consistent history and $n \triangleq |\Delta \mathcal{Z}_k|$.

\begin{proof}
	\textsc{EnforceAC} calls \textsc{VerifyAC}. First we analyze the time complexity of \textsc{VerifyAC}, which iterates over all possible observations in $\Delta \mathcal{Z}_{k}^{r,r'}$ and $\Delta \mathcal{Z}_{k}^{r',r}$ in Steps 2 and 3, respectively. For each such observation \textsc{VerifyAC} calculates and compares between the corresponding $J$-values for different candidate joint actions. Whenever \textsc{VerifyAC} finds an inconsistent observation, a \textsc{comm} is triggered. Overall, this incurs a runtime of $\mathcal{O}(|\Delta \mathcal{Z}_{k}^{r',r}| + |\Delta \mathcal{Z}_{k}^{r,r'}|) = \mathcal{O}(|\Delta \mathcal{Z}_{k}|) = \mathcal{O}(n)$ of \textsc{VerifyAC}. Note that calculation and comparison of $J$-values for each observation takes a constant amount of time. 
	
	Inconsistent observations in \textsc{VerifyAC} lead to triggering at most $2p$ number of \textsc{comm}s by \textsc{EnforceAC}. 
	This happens when during every \textsc{comm} each of the two robots sends its unshared local observation at every single time point in the time range $[k-p+1,k]$.
	After each such \textsc{comm}, \textsc{VerifyAC} is invoked again by \textsc{EnforceAC} to check for \mrac.
	So, at most $2p n$ number of comparisons of $J$-values in total and hence complexity of \textsc{EnforceAC} is $\mathcal{O}(p n)$.
\end{proof}


\begin{theorem}
	\textsc{EnforceAC} converges to \mrac in a finite amount of time, even if \mrac does not exist initially.
\end{theorem}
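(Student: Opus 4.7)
The plan is to exhibit a non-negative integer-valued potential that strictly decreases with every iteration of \textsc{EnforceAC} in which \mrac is not yet identified, and then to argue that when the potential reaches zero \mrac necessarily holds. A natural choice of potential is $\Phi \triangleq |\Delta \mathcal{H}^{r,r'}_k| + |\Delta \mathcal{H}^{r',r}_k|$, the total number of observations captured during the interval $[k-p+1,k]$ that have not yet been shared between robots $r$ and $r'$. Since the horizon $p$ is finite and both observation sets are drawn from it, $\Phi \leq 2p$ at the outset.

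First I would show that on every call to \textsc{VerifyAC} there are only two mutually exclusive outcomes: either the three verification steps succeed with a common joint action $\bar{a}$ (in which case Theorem~\ref{Thm:Verifymrac} certifies \mrac and \textsc{EnforceAC} terminates), or at least one of the conditions identified in Section~\ref{sec:SelfComm} and Figure~\ref{fig:verify-ac-flow} holds, in which case a \textsc{comm} is self-triggered. By the update rules described after Figure~\ref{fig:verify-ac-flow}, every such \textsc{comm} moves one previously unshared observation from $\Delta \mathcal{H}^{r',r}_k$ or $\Delta \mathcal{H}^{r,r'}_k$ into the common history $\leftidx{^c}{\mathcal{H}}_k^{r,r'}$, so $\Phi$ decreases by at least one. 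Hence at most $\Phi_{\text{init}} \leq 2p$ iterations can occur before $\Phi = 0$, and each iteration runs in bounded time by the complexity analysis of \textsc{EnforceAC}.

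Next I would handle the terminal case $\Phi = 0$. When no observations remain unshared, $\Delta \mathcal{H}^{r,r'}_k = \Delta \mathcal{H}^{r',r}_k = \emptyset$, so from \eqref{eq:br2} the two beliefs coincide, $b^r_k = b^{r'}_k = \prob{x_k \mid \leftidx{^c}{\mathcal{H}}_k^{r,r'}}$. The missing observation spaces $\Delta \mathcal{Z}^{r,r'}_k$ and $\Delta \mathcal{Z}^{r',r}_k$ are then trivial, so Steps 2 and 3 of \textsc{VerifyAC} reduce to the single evaluation carried out in Step 1, and both robots select the same $\bar{a} = \argmax_{a_{k+}} J(b^r_k,a_{k+})$. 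By Definition~\ref{def:mrac} this is precisely \mrac, so \textsc{VerifyAC} succeeds and \textsc{EnforceAC} exits.

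The main obstacle I anticipate is verifying that the \textsc{comm}-triggering rules in Section~\ref{sec:SelfComm} truly cover every failure mode of \textsc{VerifyAC}, i.e.\ that whenever \eqref{no-mrac} holds, at least one of the three conditions in Figure~\ref{fig:verify-ac-flow} is met and therefore at least one observation is necessarily transmitted. A careful case analysis over the consistency status of $\Delta \mathcal{Z}^{r,r'}_k$ and $\Delta \mathcal{Z}^{r',r}_k$ relative to the Step~1 action $\bar{a}$ should suffice, using the fact that either an inconsistent observation exists in Step~2 or Step~3 (direct triggers), or every observation is consistent but in favor of a different action than $\bar{a}$ (the second bullet for the $r \to r'$ direction). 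Once this case analysis is in place, combining strict monotone decrease of $\Phi$ with the terminal argument yields convergence in at most $2p$ communication rounds, hence in finite time.
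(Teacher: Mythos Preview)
Your proposal is correct and follows essentially the same approach as the paper: both arguments rest on the finiteness of the unshared observations over $[k-p+1,k]$ and the fact that once all of them have been communicated the beliefs coincide and \mrac is immediate. The paper's proof is a brief two-sentence sketch of this idea, whereas you formalize it via the potential $\Phi = |\Delta \mathcal{H}^{r,r'}_k| + |\Delta \mathcal{H}^{r',r}_k|$, explicitly verify strict decrease and the terminal case, and flag the case analysis needed to ensure every failure mode triggers a \textsc{comm}; that last point is a genuine detail the paper glosses over, so your treatment is the more careful of the two.
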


\begin{proof}
	During a \textsc{comm}, the unknown observation sequences can be shared partially, i.e. some of the observations in the time range $[k-p+1:k]$ can be shared to the other robot.
	In the worst case, to ensure \mrac, robot $r$ may have to send its entire actual observation in time range $[k-p+1,k]$ to $r'$.
	Sharing all the observations takes finite amount of time. 
	Hence, \textsc{EnforceAC} achieves \mrac in finite time.
\end{proof}




\section{Relaxation and Simplification of \textsc{VerifyAC} with performance guarantees}
\label{sec:simplification}

In the previous section we described the algorithm \textsc{VerifyAC} in which robot $r$ requires \emph{all the observations} in steps 2 and 3 to be in favor of a particular joint action (the action selected in step 1).
In the variants of \textsc{VerifyAC} discussed thus far, all the observations favoring a particular action contribute to deterministic guarantee on action consistency (\mrac).
However, satisfying consistency for \emph{all} the observations in step 2 ($\Delta \mathcal{Z}^{r,r'}_{k}$) and step 3 ($\Delta \mathcal{Z}^{r',r}_{k}$) is quite demanding and less likely to happen without a good number of \textsc{comm}s.
Thus, a high number of \textsc{comm}s is inevitable for satisfying \mrac with the previous variants of \textsc{VerifyAC}.

In section~\ref{subsec:sverifyac}, we design a new variant of \textsc{VerifyAC} named \textsc{R-VerifyAC} that offers a relaxation for satisfying \mrac when \emph{some}, instead of all, the observations are in favor of a particular action.
This relaxation in \mrac satisfaction contributes to further reduction in the number of \textsc{comm}s.
Here, 'R' in \textsc{R-VerifyAC} stands for relaxation of the notion of action consistency.
Though \textsc{R-VerifyAc} provides performance gain in terms of reduced \textsc{comm}s it still needs to iterate over all the observations in $\Delta \mathcal{Z}^{r,r'}_k$ and $\Delta \mathcal{Z}^{r',r}_k$.
In Section~\ref{subsec:rverifyac-simp} we propose a simplified variant of \textsc{R-VerifyAC} called \textsc{R-VerifyAC-simp} that aims to reduce the computation time. \textsc{R-VerifyAC-simp}, in contrary to \textsc{R-VerifyAC}, iterates over a smaller subset of all possible observations in steps 2 and 3 of \textsc{R-VerifyAC}.

\subsection{Algorithm \textsc{R-VerifyAC}}\label{subsec:sverifyac}
Algorithm \textsc{R-VerifyAC} modifies the criteria of satisfying \mrac in \textsc{VerifyAC}.
Suppose, in step 1, robot $r$ selects the most preferred (highest $J(.)$ value) action $\bar{a}\in \mathcal{A}_{k+}$.
Unlike \textsc{VerifyAC}, algorithm \textsc{R-VerifyAC} requires only some of the observations in steps 2 and 3 to be in favor of action $\bar{a}$, instead of having exhaustively all the observations in favor of $\bar{a}$. 
Consequently, more than one rank-1 actions may exist because different subsets of observations may choose different rank-1 actions, instead of choosing a single unanimous rank-1 action corresponding to all the observations.  
To address this, the algorithm takes into account the \emph{likelihood} of each observation in steps 2 and 3, and computes the \emph{cumulative likelihood of the observations} favoring individual actions.
The rank-1 actions with the highest cumulative likelihood values, in each of step 2 and 3, is considered to be the most preferred one.

Next, we define the cumulative likelihood of observations for an action $a$. Let $Z$ be the entire set of observations (in step 2 or 3), and $Z' \triangleq \mathtt{cobs}_a(Z) \subseteq Z$ the consistent set of observations in favor of $a\in \mathcal{A}_{k+}$.  
We have already the latter in Section \ref{subsec:ACwithDiffBeliefs} (Definition \ref{def:consis-obs}). 

\begin{definition}[Cumulative likelihood of obs. favoring $a$]\label{def:lkl}
	From the perspective of robot $r$, consider a set of observations $Z$, a consistent set of observations $\mathtt{cobs}_a (Z)\in Z$ in favor of  the joint action $a$ and the consistent history $\leftidx{^c}{\mathcal{H}}{_k^{r,r'}}$ of the robots. 
	The cumulative likelihood of the observations in favor of a action $a\in \mathcal{A}_{k+}$, denoted $\mathtt{Cl}_a(Z)$, is the summation of likelihood of the observations in $\mathtt{cobs}_a (Z)$, i.e. :
	\begin{equation}\label{eq:cumulative-likelihood}
		\mathtt{Cl}_a(Z) \triangleq \sum_{z\in \mathtt{cobs}_{a}(Z)}^{} \prob{z\mid \leftidx{^c}{\mathcal{H}}{_k^{r,r'}}}.
	\end{equation}
\end{definition}
Now, $\prob{z\mid  \leftidx{^c}{\mathcal{H}}{_k^{r,r'}}}$ is the likelihood of an observation sequence $z$ from the set of observations $\mathtt{cobs}_a(Z)$. This can be calculated by marginalizing over the corresponding sequence of states $x$ as follows: 
\begin{equation}\label{meas-likelihood}
	\prob{z\mid  \leftidx{^c}{\mathcal{H}}{_k^{r,r'}}} =\sum_{x\in \mathcal{X}} \prob{z \mid x} \prob{x \mid  \leftidx{^c}{\mathcal{H}}{_k^{r,r'}}}.
\end{equation}


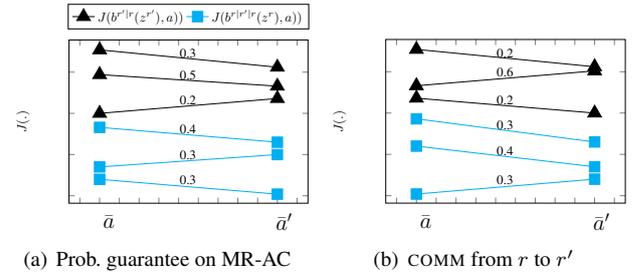
\begin{figure}[t]
	\begin{center}
		\subfigure[Prob. guarantee on \mrac]{
			\resizebox{0.47\linewidth}{!}{
				\begin{tikzpicture}
					\pgfplotsset{every tick label/.append style={font=\Large}}
					\begin{axis}[
						symbolic x coords={1,2,3,4,5,6,7,8,9,10,11,12},
						enlargelimits=0.05,
						legend style={at={(0.5,1.22)}, anchor=north,legend columns=-1},
						ylabel=$J(.)$,
						ymax=90, ymin=40,
						x tick label style={rotate=0, anchor=east, align=center, yshift=-0.5cm, font=\LARGE},
						y tick label style={color=white, font=\normalsize},
						xtick={1,2,3,4,5,6,7,8,9,10,11,12},
						xticklabels={,,$\bar{a}$,,,,,,,,,$\bar{a}'$,},
						xmin=1, xmax=12,
						ymin=0, ymax=180,
						height=6cm, width=8cm,
						]
						\addplot[black, mark=triangle*, mark options={scale=3}]
						coordinates { (2, 177) (11,156)};
						\addlegendentry{\normalsize$J(b^{r'\mid r}(z^{r'}),a))$}
						
						\addplot[cyan, mark=square*, mark options={scale=2}]
						coordinates { (2, 83) (11,65)};
						\addlegendentry{\normalsize$J(b^{r\mid r' \mid r}(z^{r}),a))$}
						
						\addplot[black, mark=triangle*, mark options={scale=3}]
						coordinates { (2, 147) (11,133)};
						
						\addplot[black, mark=triangle*, mark options={scale=3}]
						coordinates { (2, 100) (11,118)};

						\addplot[cyan, mark=square*, mark options={scale=2}]
						coordinates { (2, 35) (11,50)};
						\addplot[cyan, mark=square*, mark options={scale=2}]
						coordinates { (2, 20) (11,2)};
						
						
						
						\draw (50,180) node [anchor=north west][inner sep=0.75pt]   [align=left] {0.3};
						\draw (50,153) node [anchor=north west][inner sep=0.75pt]   [align=left] {0.5};
						\draw (50,123) node [anchor=north west][inner sep=0.75pt]   [align=left] {0.2};
						
						\draw (50,88) node [anchor=north west][inner sep=0.75pt]   [align=left] {0.4};
						\draw (50,56) node [anchor=north west][inner sep=0.75pt]   [align=left] {0.3};
						\draw (50,26) node [anchor=north west][inner sep=0.75pt]   [align=left] {0.3};		
						
					\end{axis}
					\label{rverify-det}
				\end{tikzpicture}
		}}
		\subfigure[\textsc{comm} from $r$ to $r'$]{
			\resizebox{0.47\linewidth}{!}{
				\begin{tikzpicture}
					\pgfplotsset{every tick label/.append style={font=\Large}}
					\begin{axis}[
						symbolic x coords={1,2,3,4,5,6,7,8,9,10,11,12},
						enlargelimits=0.05,
						legend style={at={(0.5,1.22)}, anchor=north,legend columns=-1},
						ylabel=$J(.)$,
						ymax=90, ymin=40,
						x tick label style={rotate=0, anchor=east, align=center, yshift=-0.5cm, font=\LARGE},
						y tick label style={color=white, font=\normalsize},
						xtick={1,2,3,4,5,6,7,8,9,10,11,12},
						xticklabels={,,$\bar{a}$,,,,,,,,,$\bar{a}'$,},
						xmin=1, xmax=12,
						ymin=0, ymax=180,
						height=6cm, width=8cm,
						]
						\addplot[black, mark=triangle*, mark options={scale=3}]
						coordinates { (2, 177) (11,156)};
						
						\addplot[black, mark=triangle*, mark options={scale=3}]
						coordinates { (2, 133) (11,151)};
						
						\addplot[black, mark=triangle*, mark options={scale=3}]
						coordinates { (2, 118) (11,100)};
						
						\addplot[cyan, mark=square*, mark options={scale=2}]
						coordinates { (2, 93) (11,65)};
						
						\addplot[cyan, mark=square*, mark options={scale=2}]
						coordinates { (2, 60) (11,35)};
						\addplot[cyan, mark=square*, mark options={scale=2}]
						coordinates { (2, 2) (11,20)};
						
						
						
						\draw (50,180) node [anchor=north west][inner sep=0.75pt]   [align=left] {0.2};
						\draw (50,156) node [anchor=north west][inner sep=0.75pt]   [align=left] {0.6};
						\draw (50,123) node [anchor=north west][inner sep=0.75pt]   [align=left] {0.2};
						
						\draw (50,93) node [anchor=north west][inner sep=0.75pt]   [align=left] {0.3};
						\draw (50,61) node [anchor=north west][inner sep=0.75pt]   [align=left] {0.4};
						\draw (50,26) node [anchor=north west][inner sep=0.75pt]   [align=left] {0.3};		
						
					\end{axis}
					\label{rverify-prob}
				\end{tikzpicture}
		}}
		\caption{Illustration of \textsc{R-VerifyAC} from the perspective of robot $r$. Consider, robot $r$ selects action $\bar{a}$ in step 1. Even though all the observations are not in favor of $\bar{a}$, \textsc{R-VerifyAC} provides \textbf{(a)} probabilistic guarantee on \mrac, or \textbf{(b)} a \textsc{comm} is triggered from $r$ to $r'$. Triangles and squares represent the $J$-values corresponding to the observations in steps 2 and 3 respectively. Let, $1-\epsilon = 0.1$. 
			\textbf{(a)} For action $\bar{a}$, in step 2, $\leftidx{^2}{\mathtt{Cl}_{\bar{a}}} = \leftidx{^2}{\mathtt{Cl}^*} = 0.3+0.5 = 0.8 > 1-\epsilon$; in step 3, $\leftidx{^3}{\mathtt{Cl}_{\bar{a}}} = 0.4+0.3 = 0.7 > 1-\epsilon$. So, according to Definition \ref{def:epsilon-mrac}, $\emrac^r = \mathtt{true}$, and according to Theorem \ref{thm:probabilistic-guarantees-rverifyac}, robot $r$ will not trigger a \textsc{comm}. Probability of \mrac with both robots choosing $\bar{a}$ is $\leftidx{^2}{\mathtt{Cl}_{\bar{a}}} = 0.8$; probability of robot $r$ choosing $\bar{a}$ but $r'$ choosing an inconsistent action $a$ ($\neq \bar{a}$) is $0.2$; and probability of $r'$ triggering a \textsc{comm} is $1-0.8-0.2 = 0$.
			\textbf{(b)} For action $\bar{a}$, in step 2, $\leftidx{^2}{\mathtt{Cl}_{\bar{a}}} = 0.2+0.2 = 0.4 < 1-\epsilon$ and $\leftidx{^2}{\mathtt{Cl}_{\bar{a}}} \not = \leftidx{^2}{\mathtt{Cl}^*}$. So, according to Definition \ref{def:epsilon-mrac}, $\emrac^r = \mathtt{false}$, and according to Theorem \ref{thm:probabilistic-guarantees-rverifyac}, robot $r$ will trigger a \textsc{comm} from itself to $r'$. 
		}
		\label{ex-relaxed-policy}
	\end{center}
\end{figure}

%
\noindent Here, we describe in more details how the criteria of satisfying \mrac is relaxed in steps 2 and 3 of \textsc{R-VerifyAC}, using the notion of cumulative likelihood of observations.
Recall, robot $r$ chooses the joint action $\bar{a}\in \mathcal{A}_{k+}$ in its step 1.
Consider, in steps 2 and 3, the consistent sets of observations in favor of action $\bar{a}$ are
\begin{equation}\label{cobs}
	\begin{split}
		\mathtt{cobs}_{\bar{a}}(\Delta \mathcal{Z}^{r,r'}_k) \subseteq \Delta \mathcal{Z}^{r,r'}_k,\\
		\mathtt{cobs}_{\bar{a}}(\Delta \mathcal{Z}^{r',r}_k) \subseteq \Delta \mathcal{Z}^{r',r}_k.
	\end{split}
\end{equation} respectively.
This implies there may exist a non-empty set of observations $\Delta \mathcal{Z}^{r,r'}_k \setminus\mathtt{cobs}_{\bar{a}}(\Delta \mathcal{Z}^{r,r'}_k)$ and $\Delta \mathcal{Z}^{r',r}_k \setminus\mathtt{cobs}_{\bar{a}}(\Delta \mathcal{Z}^{r',r}_k)$ which are not in favor of $\bar{a}$.
Even though some of the observations are not in favor of $\bar{a}$, using the notion of $\mathtt{Cl}(.)$, \textsc{R-VerifyAC} provides guarantees on \mrac, deterministically or probabilistically, while reducing the number of \textsc{comm}s.

For brevity, we denote the $\mathtt{Cl}(.)$ values in steps 2 and 3 as $\leftidx{^2}{\mathtt{Cl}}$ and $\leftidx{^3}{\mathtt{Cl}}$ respectively. For step $i\in \{2,3\}$, the likelihood of some action $a\in \mathcal{A}_{k+}$ is denoted as $\leftidx{^i}{\mathtt{Cl}_{a}}$. In step $i$, the action having the highest $\mathtt{Cl}$ value in step $i$ is denoted $\leftidx{^i}{\mathtt{Cl}^*}$. When the action $a$ has the highest $\mathtt{Cl}$ value:
\begin{equation}
	\begin{split}
		\leftidx{^i}{\mathtt{Cl}^*} = \leftidx{^i}{\mathtt{Cl}_{a}}
		\text{ such that } 
		\leftidx{^i}{\mathtt{Cl}_{a}} > \leftidx{^i}{\mathtt{Cl}_{a'}} \ \ \forall a' \in \mathcal{A}_{k+}\setminus \{a\}.	
	\end{split}
\end{equation}
Note:
\begin{equation}\label{eq:cumulative-likelihood-consistent-observations-relation}
	\begin{split}
		\mathtt{con}_{a}(\Delta \mathcal{Z}^{r,r'}_k) = \mathtt{true} \Longleftrightarrow \leftidx{^2}{\mathtt{Cl}}_a = \leftidx{^2}{\mathtt{Cl}}^* \equiv 1, \\
		\mathtt{con}_{a}(\Delta \mathcal{Z}^{r',r}_k) = \mathtt{true} \Longleftrightarrow \leftidx{^3}{\mathtt{Cl}}_a = \leftidx{^3}{\mathtt{Cl}}^* \equiv 1.
	\end{split}
\end{equation}


	From the perspective of robot $r$, we define an indicator function $\emrac^r$ that indicates if a given joint action $a$ satisfies consistency criteria for both the robots, based on cumulative likelihood of the joint action $a$.

\begin{definition}[$\emrac^r$]\label{def:epsilon-mrac}
	Consider robot $r$ executes steps 2 and 3 of \textsc{R-VerifyAC}.
	For any action $a \in \mathcal{A}_{k+}$ and a user-provided threshold value $1-\epsilon \in (0,1]$, we define the function $\emrac^r : a \mapsto \{ \mathtt{false},\mathtt{true} \}$ as:
	\begin{equation}\label{eq:epsilon-mrac}
		\begin{aligned}
			\emrac^r(a) \triangleq &
			\{ \{ \leftidx{^2}{\mathtt{Cl}}_{a} = \leftidx{^2}{\mathtt{Cl}}^{*} \} \lor \{ \leftidx{^2}{\mathtt{Cl}}_{a} > 1-\epsilon \} \} \land \\
			& \{ \{ \leftidx{^3}{\mathtt{Cl}}_{a} = \leftidx{^3}{\mathtt{Cl}}^{*}\} \lor \{ \leftidx{^3}{\mathtt{Cl}}_{a} > 1-\epsilon \} \}.
		\end{aligned}
	\end{equation}
\end{definition}

\begin{theorem}[$\emrac$ Symmetry]\label{cor:epsilon-mrac-symmetry}
	Consider two robots $\{r,r'\}$ executing individually steps 2 and 3 of \textsc{R-VerifyAC}.
	Then, for any joint	action $a \in \mathcal{A}_{k+}$:
	\begin{equation}\label{eq:epsilon-mrac-symmetry}
		\emrac^{r}(a) = \emrac^{r'}(a).
	\end{equation}
\end{theorem}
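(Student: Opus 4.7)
The plan is to exploit the role-swap between steps 2 and 3 when moving from one robot's perspective to the other's. The key observation is that step 2 of robot $r$ and step 3 of robot $r'$ reason about the \emph{same} observation space (the missing local observations of $r'$), and symmetrically step 3 of $r$ and step 2 of $r'$ both reason about $\Delta \mathcal{Z}^{r',r}_k$. Since the construction of $\mathtt{Cl}$ depends only on the observation set, the common history $\leftidx{^c}{\mathcal{H}}{_k^{r,r'}}$, and the induced beliefs (which are all symmetric quantities between $r$ and $r'$), the numerical quantities appearing in the definition of $\emrac$ are permuted but not changed.

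First, I would set up notation by writing, for each robot $\rho \in \{r,r'\}$, explicit expressions $\leftidx{^2}{\mathtt{Cl}}^{\rho}_a$ and $\leftidx{^3}{\mathtt{Cl}}^{\rho}_a$ following Definition~\ref{def:lkl}. I would then verify the two identities
\begin{equation}
\leftidx{^2}{\mathtt{Cl}}^{r}_a \;=\; \leftidx{^3}{\mathtt{Cl}}^{r'}_a, \qquad \leftidx{^3}{\mathtt{Cl}}^{r}_a \;=\; \leftidx{^2}{\mathtt{Cl}}^{r'}_a,
\end{equation}
by checking that (i) both sides sum over the same observation space ($\Delta \mathcal{Z}^{r,r'}_k$ in the first identity, $\Delta \mathcal{Z}^{r',r}_k$ in the second); (ii) the hypothetical beliefs used to determine membership in $\mathtt{cobs}_a(\cdot)$ coincide, since they are all of the form $\prob{x_k \mid \leftidx{^c}{\mathcal{H}}{_k^{r,r'}}, \tilde{z}}$ and hence depend only on the (shared) common history and the queried observation; and (iii) the likelihood weights $\prob{z \mid \leftidx{^c}{\mathcal{H}}{_k^{r,r'}}}$ in \eqref{meas-likelihood} are computed from the same common history and therefore agree. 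The same argument applied to the maximizers yields $\leftidx{^2}{\mathtt{Cl}}^{r,*} = \leftidx{^3}{\mathtt{Cl}}^{r',*}$ and $\leftidx{^3}{\mathtt{Cl}}^{r,*} = \leftidx{^2}{\mathtt{Cl}}^{r',*}$.

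Substituting these identities into the right-hand side of \eqref{eq:epsilon-mrac} written for robot $r'$ turns each conjunct into the corresponding conjunct for robot $r$ with the step indices $2$ and $3$ swapped. Since the outer connective in \eqref{eq:epsilon-mrac} is $\land$, which is commutative, the two Boolean expressions $\emrac^r(a)$ and $\emrac^{r'}(a)$ coincide.

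The main obstacle is point (ii) above: carefully justifying that the membership of an observation $z$ in $\mathtt{cobs}_a(\cdot)$ is perspective-independent. This requires unwinding the three beliefs $b_k^{r'\mid r}$, $b_k^{r\mid r' \mid r}$ and their counterparts computed by $r'$, and noting that all of them are Bayesian posteriors over $x_k$ given the common history together with a hypothetical observation; the preference orderings derived from $J(\cdot,\cdot)$ on these beliefs are therefore identical for both robots because the reward $\rho$ and observation/transition models are shared in the cooperative setting introduced in Section~\ref{section:prelim}. Once this is established, the remainder is a direct symbolic manipulation.
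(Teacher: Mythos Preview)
Your proposal is correct and follows essentially the same approach as the paper's own proof: both exploit that step~2 of robot $r$ coincides with step~3 of robot $r'$ (and vice versa), and then use the commutativity of $\land$ in Definition~\ref{def:epsilon-mrac} to conclude. Your version is more explicit --- spelling out the identities $\leftidx{^2}{\mathtt{Cl}}^{r}_a = \leftidx{^3}{\mathtt{Cl}}^{r'}_a$, $\leftidx{^3}{\mathtt{Cl}}^{r}_a = \leftidx{^2}{\mathtt{Cl}}^{r'}_a$ and justifying perspective-independence of $\mathtt{cobs}_a(\cdot)$ via the shared common history --- whereas the paper simply asserts that the calculations are ``identical'' without unpacking why.
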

\begin{proof}
	Due to the symmetry of the calculations in steps 2 and 3, calculations of step 2 of robot $r$ are identical to calculations of step 3 of robot $r'$, and calculations of step 3 of robot $r$ are identical to calculations of step 2 of robot $r'$.
	Therefore, for any action $a \in \mathcal{A}_{k+}$, the conditions of $\emrac^{r}$ for step 2 of robot $r$ are identical to the conditions of $\emrac^{r'}$ for step 3 of $r'$, and the conditions of $\emrac^{r}$ for step 3 of $r$ are identical to the conditions of $\emrac^{r'}$ for step 2 of $r'$.
	Additionally, according to Equation~\eqref{eq:epsilon-mrac}, we note that the conditions of $\emrac^{r}$ for step 2 and for step 3 are the same.
	Therefore, the conditions are symmetric between steps 2 and 3 and between robots $r$ and $r'$, so $\emrac^{r}(a) = \emrac^{r'}(a)$.
\end{proof}
%
%
Since theorem~(\ref{cor:epsilon-mrac-symmetry}) indicates that $\emrac^{r}(a)$ and $\emrac^{r'}(a)$ are the same, we can drop the indices and remain with a unified notation of $\emrac(a)$ for both robots.
Still, we keep the indices for the sake of emphasizing which robot executes the calculations.

Consider robot $r$ selects action $\bar{a}$ in step 1 of \textsc{R-VerifyAC}.
According to definition \ref{def:epsilon-mrac}, if $\emrac(\bar{a}) = \mathtt{true}$, then robot $r$ will declare \mrac and provides \emph{probabilistic guarantees} without triggering a \textsc{comm} to robot $r'$.
Otherwise, a \textsc{comm} will be triggered from $r$ to $r'$.

When robot $r$ selects action $\bar{a}$ and does not trigger a \textsc{comm} to $r'$, there are 3 possible outcomes from the perspective of robot $r'$:
\begin{itemize}
	\item Robot $r'$ selects in its step 1 the same action $\bar{a}$ as robot $r$. Therefore, Action Consistency occurs.
	\item Robot $r'$ selects in its step 1 an action $a$ different than $\bar{a}$ for which $\emrac^{r'}(a) = \mathtt{true}$. Therefore, $r'$ will not trigger a \textsc{comm} to $r$, and Action Inconsistency occurs.
	\item Robot $r'$ selects in its step 1 an action $a$ different than $\bar{a}$, for which $\emrac^{r'}(a) = \mathtt{false}$. Therefore, a \textsc{comm} from $r'$ to $r$ will be triggered.
\end{itemize}
For each of the above outcomes, \textsc{R-VerifyAC} provides \emph{probabilistic guarantees} on their occurrence from the perspective of robot $r$.
	

We give a theorem related to the probabilistic guarantees on \mrac from the perspective of robot $r$.



\begin{theorem}[Probabilistic guarantees on  \textsc{R-VerifyAC}]\label{thm:probabilistic-guarantees-rverifyac}
	Consider two robots $\{r,r'\}$ executing algorithm \textsc{R-VerifyAC} in a decentralized way. Consider, robot $r$ chooses action $\bar{a}\in \mathcal{A}_{k+}$ in step 1.
	If $\emrac^{r}(\bar{a}) = \mathtt{false}$, robot $r$ triggers a \textsc{comm} to robot $r'$.
	If $\emrac^{r}(\bar{a}) = \mathtt{true}$, robot $r$ does not trigger a \textsc{comm} to robot $r'$ and \textsc{R-VerifyAC} provides the following probabilistic guarantees:
	\begin{itemize}
		\item Multi-Robot Action Consistency:
		\begin{equation}\label{eq:r-verifyac-ac-prob}
			\begin{aligned}[t]
				\prob{\textsc{\mrac} \mid \emrac^{r}(\bar{a}) = \mathtt{true}, H_k^r} = \leftidx{^2}{\mathtt{Cl}}_{\bar{a}}.
			\end{aligned}
		\end{equation}
		\item Multi-Robot Action Inconsistency:
		\begin{equation}\label{eq:r-verifyac-not-ac-prob}
			\begin{aligned}[t]
				\prob{\neg \textsc{\mrac} \mid \emrac^{r}(\bar{a}) = \mathtt{true}, H_k^r} = & \\
				\sum_{a \in \mathcal{A}_{k+}} \leftidx{^2}{\mathtt{Cl}}_{a} 
				\cdot \mathbb{I} \{ a \neq \bar{a} \}
				\cdot \mathbb{I} \{ \emrac^{r}(a) \} &.
			\end{aligned}
		\end{equation}
		
		
		\item Communication from $r'$ to $r$:
		\begin{equation}\label{eq:r-verifyac-comm-prob}
			\begin{aligned}[t]
				\prob{r \xleftarrow{\textsc{comm}} r' \mid \emrac^{r}(\bar{a}) = \mathtt{true}, H_k^r} = & \\
				\sum_{a \in \mathcal{A}_{k+}} \leftidx{^2}{\mathtt{Cl}}_{a}
				\cdot \mathbb{I} \{ a \neq \bar{a} \}
				\cdot \mathbb{I} \{ \neg \emrac^{r}(a) \}
			\end{aligned}
		\end{equation}
		
		
	\end{itemize}
\end{theorem}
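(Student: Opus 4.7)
The plan is to show that the three outcomes described just before the theorem statement partition the sample space (conditional on $r$ not triggering a \textsc{comm}), and then compute each probability by marginalizing over the unknown observation $\Delta \mathcal{H}^{r,r'}_k \in \Delta \mathcal{Z}^{r,r'}_k$ that $r'$ actually captured. The main work is a single clean probabilistic identity; the rest is bookkeeping using Theorem~\ref{cor:epsilon-mrac-symmetry}.

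First, I would formalize the key identity. From robot $r$'s perspective, the only random quantity determining what $r'$ selects in its own step~1 is $r'$'s actual unshared observation $\tilde{z}^{r'} \in \Delta \mathcal{Z}^{r,r'}_k$, distributed according to $\mathbb{P}(\tilde{z}^{r'} \mid \leftidx{^c}{\mathcal{H}}{_k^{r,r'}})$. For each candidate $\tilde{z}^{r'}$, $r$ has computed in its step~2 the belief $b_k^{r'\mid r}(\tilde{z}^{r'})$, which, via \eqref{eq:brtagr-bayesian-update}, coincides with the belief $r'$ would actually form were $\Delta \mathcal{H}^{r,r'}_k = \tilde{z}^{r'}$. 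Hence the event ``$r'$ selects action $a$ in its step~1'' is precisely the event $\{\tilde{z}^{r'} \in \mathtt{cobs}_a(\Delta \mathcal{Z}^{r,r'}_k)\}$, and Definition~\ref{def:lkl} gives
\begin{equation*}
\mathbb{P}\bigl(r' \text{ selects } a \text{ in step 1} \,\big|\, H_k^r\bigr) \;=\; \leftidx{^2}{\mathtt{Cl}}_a.
\end{equation*}

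Second, I would argue that, conditioned on $r'$ selecting $a$ in its own step~1, whether $r'$ subsequently triggers a \textsc{comm} is determined entirely by the truth value of $\emrac^{r'}(a)$, and that Theorem~\ref{cor:epsilon-mrac-symmetry} makes this equal to $\emrac^r(a)$ --- a quantity already available to $r$. Consequently, the three outcomes (MR-AC, action inconsistency with no \textsc{comm}, \textsc{comm} from $r'$ to $r$) correspond respectively to the events $\{r' \text{ picks } \bar{a}\}$, $\{r' \text{ picks some } a\neq\bar{a} \text{ with } \emrac^r(a) = \mathtt{true}\}$, and $\{r' \text{ picks some } a\neq\bar{a} \text{ with } \emrac^r(a) = \mathtt{false}\}$. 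For the first event, the symmetry gives $\emrac^{r'}(\bar{a}) = \emrac^r(\bar{a}) = \mathtt{true}$ by hypothesis, so $r'$ also refrains from communicating and MR-AC holds. Summing $\leftidx{^2}{\mathtt{Cl}}_a$ over the actions in each of these three mutually exclusive and exhaustive classes, and writing the class membership as indicator functions $\mathbb{I}\{a \neq \bar{a}\}\cdot \mathbb{I}\{\emrac^r(a)\}$ (or its negation), yields exactly \eqref{eq:r-verifyac-ac-prob}, \eqref{eq:r-verifyac-not-ac-prob}, and \eqref{eq:r-verifyac-comm-prob}.

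The main obstacle I anticipate is the first step: rigorously justifying that $r$'s step-2 construction $b_k^{r'\mid r}(\tilde{z}^{r'})$ genuinely replicates $r'$'s step-1 belief, so that the predictive statement ``$r'$ selects $a$ in step~1'' acquires well-defined probabilities under $\mathbb{P}(\,\cdot \mid H_k^r)$. This requires verifying that the update in \eqref{eq:brtagr-bayesian-update} is identical in form to what $r'$ itself would compute from $\leftidx{^c}{\mathcal{H}}{_k^{r,r'}}$ and $\Delta \mathcal{H}^{r,r'}_k$, and that the likelihood $\mathbb{P}(\tilde{z}^{r'} \mid \leftidx{^c}{\mathcal{H}}{_k^{r,r'}})$ in Definition~\ref{def:lkl} is indeed the correct conditional law of $r'$'s unshared observation given $r$'s information $H_k^r$. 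Once this measure-theoretic identification is made, the three formulas follow by the elementary law of total probability applied to the partition of $\Delta \mathcal{Z}^{r,r'}_k$ induced by step-1 action choices and $\emrac^r(\cdot)$.
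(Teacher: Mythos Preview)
Your proposal is correct and follows essentially the same route as the paper: both arguments partition the outcomes according to which action $r'$ selects in its own step~1, invoke Theorem~\ref{cor:epsilon-mrac-symmetry} to determine from $r$'s side whether $r'$ will communicate, and identify $\leftidx{^2}{\mathtt{Cl}}_{a}$ as the probability that $r'$ selects $a$. Your write-up is in fact more explicit than the paper's about why $\leftidx{^2}{\mathtt{Cl}}_{a}$ is the right probability (via the identification of $b_k^{r'\mid r}(\tilde z^{r'})$ with $r'$'s actual step-1 belief and the marginalization over $\tilde z^{r'}$), a point the paper simply asserts.
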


\begin{proof}
	Let, robot $r$ selects action $\bar{a}$ in step 1, and $\emrac^{r}(a) = \mathtt{true}$, meaning the action satisfies the conditions for declaring \mrac.
	According to Theorem~\ref{cor:epsilon-mrac-symmetry}, robot $r$ can calculate $\emrac^{r'}(a)$ for any $a \in \mathcal{A}_{k+}$, by calculating $\emrac^{r}(a)$.
	Therefore, if robot $r'$ selects action $\bar{a}$, it will also declare \mrac, since $\emrac^{r'}(\bar{a})=\emrac^{r}(\bar{a})$, and \emph{Action Consistency} will occur.
	From the perspective of robot $r$, the probability of $r'$ selecting action $\bar{a}$ in its step 1 is given by $\leftidx{^2}{\mathtt{Cl}}_{\bar{a}}$.
	
	If robot $r'$ selects an action $a\neq \bar{a}$, it will not trigger a \textsc{comm} to $r$ if the action satisfies the conditions for declaring \mrac, i.e. if $\emrac^{r'}(a) = \mathtt{true}$. In this case, \emph{Action Inconsistency} will occur.
	Therefore, from the perspective of robot $r$, the probability of action inconsistency is given by the summation of probabilities of actions which are not $\bar{a}$ and for which $\emrac^{r'}(a) = \mathtt{true}$.
	
	If robot $r'$ selects an action $a\neq \bar{a}$, it will trigger a \textsc{comm} to $r$ if the action does not satisfy the conditions for declaring \mrac, i.e. if $\emrac^{r'}(a) = \mathtt{false}$. In this case, \emph{Communication} from $r'$ to $r$ will occur.
	Therefore, from the perspective of robot $r$, the probability of communication from $r'$ to $r$ is given by the summation of probabilities of actions which are not $\bar{a}$ and for which $\emrac^{r'}(a) = \mathtt{false}$.
\end{proof}

Now, we discuss the criteria for some special cases of the general probabilistic guarantees given by Theorem \ref{thm:probabilistic-guarantees-rverifyac}.

\begin{corollary}[Deterministic guarantee on \mrac]\label{cor:r-verifyac-deterministic-guarantee}
	Consider, robot $r$ chooses action $\bar{a}\in \mathcal{A}_{k+}$ in step 1 of \textsc{R-VerifyAC}, and $\epsilon\textsc{-MRAC}(\bar{a}) = \mathtt{true}$. \textsc{R-VerifyAC} can provide a \emph{deterministic guarantee} on \mrac, i.e. probability of Action Consistency is 1, similarly to \textsc{VerifyAC}, when $\leftidx{^2}{\mathtt{Cl}}_{\bar{a}} = 1$ and $\leftidx{^3}{\mathtt{Cl}}_{\bar{a}} = 1$.
	These conditions correspond to the conditions presented in \textsc{VerifyAC}, according to \eqref{eq:cumulative-likelihood-consistent-observations-relation}.
	%
\end{corollary}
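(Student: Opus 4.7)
The plan is to obtain this corollary as a direct specialization of Theorem~\ref{thm:probabilistic-guarantees-rverifyac} combined with the equivalence \eqref{eq:cumulative-likelihood-consistent-observations-relation}. First I would verify that the hypothesis $\emrac^{r}(\bar{a}) = \mathtt{true}$ is automatic under the stated assumptions: the observation likelihoods $\prob{z \mid \leftidx{^c}{\mathcal{H}}{_k^{r,r'}}}$ entering Definition~\ref{def:lkl} sum to one over the whole observation space, and each observation lies in $\mathtt{cobs}_{a}(\cdot)$ of at most one action, so $\sum_{a \in \mathcal{A}_{k+}} \leftidx{^i}{\mathtt{Cl}}_{a} = 1$ in each step $i \in \{2,3\}$. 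Hence $\leftidx{^2}{\mathtt{Cl}}_{\bar{a}} = 1$ and $\leftidx{^3}{\mathtt{Cl}}_{\bar{a}} = 1$ force $\leftidx{^i}{\mathtt{Cl}}_{\bar{a}} = \leftidx{^i}{\mathtt{Cl}}^{*}$ for $i \in \{2,3\}$, so both clauses of \eqref{eq:epsilon-mrac} are satisfied and Theorem~\ref{thm:probabilistic-guarantees-rverifyac} applies.

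Next I would substitute the two cumulative-likelihood values into the three probability expressions of Theorem~\ref{thm:probabilistic-guarantees-rverifyac}. The Action Consistency probability \eqref{eq:r-verifyac-ac-prob} immediately reduces to $\leftidx{^2}{\mathtt{Cl}}_{\bar{a}} = 1$. For the Action Inconsistency and communication probabilities \eqref{eq:r-verifyac-not-ac-prob} and \eqref{eq:r-verifyac-comm-prob}, I would use the summation identity above to conclude that $\leftidx{^2}{\mathtt{Cl}}_{a} = 0$ for every $a \neq \bar{a}$, so every term in the two summations vanishes. This yields probability $1$ of Action Consistency and probability $0$ for each of the other two outcomes, which is the claimed deterministic guarantee.

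Finally, to match the wording ``the conditions correspond to the conditions presented in \textsc{VerifyAC}'', I would invoke \eqref{eq:cumulative-likelihood-consistent-observations-relation}: $\leftidx{^2}{\mathtt{Cl}}_{\bar{a}} = 1 \iff \mathtt{con}_{\bar{a}}(\Delta \mathcal{Z}^{r,r'}_k) = \mathtt{true}$ and $\leftidx{^3}{\mathtt{Cl}}_{\bar{a}} = 1 \iff \mathtt{con}_{\bar{a}}(\Delta \mathcal{Z}^{r',r}_k) = \mathtt{true}$. Together with the step-1 selection of $\bar{a}$, these are exactly the three-step conditions that Theorem~\ref{Thm:Verifymrac} identifies as necessary and sufficient for \textsc{VerifyAC} to declare \mrac, confirming that \textsc{R-VerifyAC} recovers the behavior of \textsc{VerifyAC} in this regime.

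I do not anticipate a genuine obstacle, since the corollary is essentially a boundary case of the theorem; the only subtlety worth highlighting in the write-up is the normalization argument $\sum_{a} \leftidx{^i}{\mathtt{Cl}}_{a} = 1$, which is what lets us pass from ``$\bar{a}$ has cumulative likelihood $1$'' to ``every other action has cumulative likelihood $0$'' and thereby collapse the two summations in \eqref{eq:r-verifyac-not-ac-prob} and \eqref{eq:r-verifyac-comm-prob} to zero.
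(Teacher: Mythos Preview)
Your proposal is correct and follows the same route as the paper, which in fact gives no separate proof for this corollary: the paper treats it as immediate from Theorem~\ref{thm:probabilistic-guarantees-rverifyac} together with the equivalence \eqref{eq:cumulative-likelihood-consistent-observations-relation}. Your write-up simply makes explicit the normalization $\sum_{a} \leftidx{^i}{\mathtt{Cl}}_{a} = 1$ (which the paper states later, in Section~\ref{subsec:rverifyac-simp}) and the resulting vanishing of the sums in \eqref{eq:r-verifyac-not-ac-prob} and \eqref{eq:r-verifyac-comm-prob}, which is a welcome clarification but not a different argument.
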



\begin{corollary}[Zero Probability for Action Inconsistency]\label{cor:r-verifyac-no-inconsistency-guarantee}
	Consider, robot $r$ chooses action $\bar{a}\in \mathcal{A}_{k+}$ in step 1 of \textsc{R-VerifyAC}, and $\epsilon\textsc{-MRAC}(\bar{a}) = \mathtt{true}$.
	If all other actions do not satisfy the conditions for declaring \mrac, i.e. $\emrac^{r}(a) = \mathtt{false}$ for all $a \neq \bar{a}$, the probability of \emph{Action Inconsistency} is 0.
\end{corollary}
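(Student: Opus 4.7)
The plan is to obtain this corollary as a direct specialization of Theorem~\ref{thm:probabilistic-guarantees-rverifyac}. Since the hypothesis $\emrac^{r}(\bar{a}) = \mathtt{true}$ is in force, the probabilistic formula \eqref{eq:r-verifyac-not-ac-prob} applies, giving
\[
\prob{\neg \textsc{MRAC} \mid \emrac^{r}(\bar{a}) = \mathtt{true}, H_k^r} = \sum_{a \in \mathcal{A}_{k+}} \leftidx{^2}{\mathtt{Cl}}_{a} \cdot \mathbb{I}\{a \neq \bar{a}\} \cdot \mathbb{I}\{\emrac^{r}(a)\}.
\]
So the entire task reduces to showing that under the extra assumption of the corollary, every term in this summation vanishes.

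I would split the sum into two parts. First, for the term $a = \bar{a}$, the factor $\mathbb{I}\{a \neq \bar{a}\}$ is zero, so this term contributes nothing. Second, for every $a \in \mathcal{A}_{k+}$ with $a \neq \bar{a}$, the hypothesis of the corollary states $\emrac^{r}(a) = \mathtt{false}$, which makes $\mathbb{I}\{\emrac^{r}(a)\} = 0$, so these terms also contribute nothing. Consequently every summand is zero, and the probability of action inconsistency equals $0$.

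There is essentially no technical obstacle here: the work of establishing the probabilistic decomposition has already been done in Theorem~\ref{thm:probabilistic-guarantees-rverifyac} (in particular, the symmetry property from Theorem~\ref{cor:epsilon-mrac-symmetry} was used there to let robot $r$ reason about robot $r'$'s decision via $\emrac^{r}(a)$). The only intuition worth emphasizing in the write-up is the interpretation: action inconsistency requires robot $r'$ to independently select some $a \neq \bar{a}$ \emph{and} silently declare $\emrac$ on it; ruling out the latter possibility for all alternative actions closes off every path to an inconsistent silent decision, so the only remaining non-consistent outcomes must trigger a communication from $r'$ to $r$ (cf.~\eqref{eq:r-verifyac-comm-prob}).
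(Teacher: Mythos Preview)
Your proposal is correct and takes essentially the same approach as the paper: the paper does not give a formal proof of this corollary but simply notes that Theorem~\ref{thm:probabilistic-guarantees-rverifyac} degenerates in this case, with the inconsistency probability collapsing to $0$. Your argument spells out exactly why each summand in \eqref{eq:r-verifyac-not-ac-prob} vanishes under the added hypothesis, which is the intended reasoning.
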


In other words, in such a case, Theorem~\ref{thm:probabilistic-guarantees-rverifyac} degenerates to the following:
the probability of an inconsistent action selection is 0, while the probability for a consistent action selection remains $\leftidx{^2}{\mathtt{Cl}_{\bar{a}}}$ and the probability for communication from $r'$ to $r$ becomes the complement, $1-\leftidx{^2}{\mathtt{Cl}_{\bar{a}}}$.
\begin{remark}
	Moreover, when provided with the threshold value $\epsilon = 0$, probability of an inconsistent action selection is \emph{always} 0.
	This is because, in this case, the condition $\leftidx{^i}{\mathtt{Cl}}_{a} > 1-\epsilon$ is never satisfied (by probability property),
	so all other action are below the threshold.
\end{remark}

\begin{figure}\label{fig:r-verifyac-flow-chart}
	\includegraphics[width=0.45\textwidth]{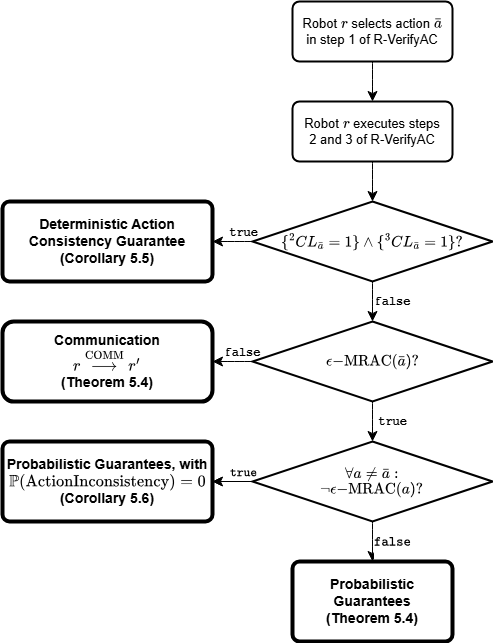}
	\caption{Flowchart of guarantees provided in algorithm \textsc{R-VerifyAC}.}
\end{figure}

\subsection{\textsc{R-VerifyAC-simp}: Simplification of \textsc{R-VerifyAC}} \label{subsec:rverifyac-simp}

While \textsc{R-VerifyAC} relaxes the criteria for \mrac and reduces the number of communications between the robots, it still requires computing exhaustively over the joint observation spaces $\Delta \mathcal{Z}_k^{r,r'}$ and $\Delta \mathcal{Z}_k^{r',r}$ in steps 2 and 3 respectively. 
We now propose another variant of \textsc{R-VerifyAC}, named \textsc{R-VerifyAC-simp} which computes over a \emph{reduced} set of observations $\Delta \mathcal{Z}' \subset \Delta \mathcal{Z}_k^{r,r'}$  in step 2 and $\Delta \mathcal{Z}'' \subset \Delta \mathcal{Z}_k^{r',r}$ in step 3. We assume that there is a mechanism that enables each robot $r$ to choose the same subset $\Delta \mathcal{Z}'$ and $\Delta \mathcal{Z}''$. 
We will describe how the algorithm computes on the reduced set of observations in steps 2 and 3.

Since $\Delta \mathcal{Z}' \subset \Delta \mathcal{Z}_k^{r,r'}$, for any joint action $a$, the consistent set of observations $\mathtt{cobs}$~\eqref{cobs} are:
\begin{equation}\label{eq:Consistent_subset}
	\mathtt{cobs}_{a}(\Delta \mathcal{Z}') \subseteq \mathtt{cobs}_{a}(\Delta \mathcal{Z}_k^{r,r'}).
\end{equation}
Consider, action $\bar{a}$ is selected in step 1.
The reduced observation space $\Delta \mathcal{Z}'$ ($\subset \Delta \mathcal{Z}_k^{r,r'}$) will yield only some of the consistent observations in favor of action $\bar{a}$, compared to the entire observation space $\Delta \mathcal{Z}_k^{r,r'}$. 

Therefore, accounting for a subset $\Delta \mathcal{Z}' \subset \Delta \mathcal{Z}_k^{r,r'}$ provides a \emph{lower bound} $\mathtt{lb}_{\bar{a}}(\Delta \mathcal{Z}')$ on the likelihood of consistency in the original set $\Delta \mathcal{Z}_k^{r,r'}$, i.e.~a lower bound on the  probability of choosing the same action $\bar{a}$ by the other robot $r'$ (by the description of Step 2):
\begin{align}\label{eq:lbcomparison}
	\mathtt{lb}_{\bar{a}}(\Delta \mathcal{Z}') \  &\triangleq  \!\!\!\!\!\!\!\!\sum_{z\in \mathtt{cobs}_{\bar{a}}(\Delta \mathcal{Z}')}^{} \!\!\!\! \!\!\!\!\!\!\!\! \prob{z\mid  		\leftidx{^c}{\mathcal{H}}{_k^{r,r'}}}
	\\
	&\leq
	\sum_{z\in \mathtt{cobs}_{\bar{a}}(\Delta \mathcal{Z}_k^{r,r'})}^{} \!\!\!\!\!\!\!\!\!\!\!\!\prob{z\mid  		\leftidx{^c}{\mathcal{H}}{_k^{r,r'}}} \equiv \mathtt{Cl}_{\bar{a}}( \Delta \mathcal{Z}_k^{r,r'}).
\end{align}
Moreover, by noting that $\sum_{a \in \mathcal{A}_{k+}}\mathtt{Cl}_a( \Delta \mathcal{Z}_k^{r,r'}) =1$, we can also define an \emph{upper bound} $\mathtt{ub}_{\bar{a}}(\Delta \mathcal{Z}')$ on $\mathtt{Cl}_{\bar{a}}( \Delta \mathcal{Z}_k^{r,r'})$ as a function of the lower bounds of all other actions in $\mathcal{A}_{k+}$, 
\begin{align}
	\mathtt{ub}_{\bar{a}}({\Delta \mathcal{Z}'}) \triangleq 1- \sum_{a \in \mathcal{A}_{k+} \setminus \{ \bar{a} \} } \mathtt{lb}_a({\Delta \mathcal{Z}'}).
\end{align}

\begin{lemma}\label{lem:ub-sverifyacpart}
	Consider two actions $a, a'\in \mathcal{A}_{k+}$ and $a\neq a'$. If $\mathtt{lb}_{a}({\Delta \mathcal{Z}'}) > \mathtt{ub}_{a'}({\Delta \mathcal{Z}'})$ then action $a$ can be given higher preference over $a'$ and hence action $a'$ can be pruned out from the list of candidate actions.
\end{lemma}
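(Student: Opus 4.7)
The plan is to show that the hypothesis $\mathtt{lb}_{a}(\Delta \mathcal{Z}') > \mathtt{ub}_{a'}(\Delta \mathcal{Z}')$ forces a strict ordering of the full-space cumulative likelihoods, namely $\mathtt{Cl}_{a}(\Delta \mathcal{Z}_k^{r,r'}) > \mathtt{Cl}_{a'}(\Delta \mathcal{Z}_k^{r,r'})$, and then to appeal to the preference rule underlying \textsc{R-VerifyAC}, which ranks candidate joint actions by their cumulative likelihood of consistent observations. Since this rule is precisely how rank-1 actions are identified in Step 2 (and Step 3), a strict gap in $\mathtt{Cl}$ between $a$ and $a'$ immediately rules out $a'$ as a rank-1 candidate.

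First I would invoke the already-established inequality in~\eqref{eq:lbcomparison} to get the lower bound $\mathtt{lb}_{a}(\Delta \mathcal{Z}') \leq \mathtt{Cl}_{a}(\Delta \mathcal{Z}_k^{r,r'})$, which follows from the inclusion $\mathtt{cobs}_{a}(\Delta \mathcal{Z}') \subseteq \mathtt{cobs}_{a}(\Delta \mathcal{Z}_k^{r,r'})$ in~\eqref{eq:Consistent_subset}. Next I would establish the complementary bound $\mathtt{ub}_{a'}(\Delta \mathcal{Z}') \geq \mathtt{Cl}_{a'}(\Delta \mathcal{Z}_k^{r,r'})$. For this, I would note that each observation realization in the full joint space produces a unique rank-1 action (under any fixed tie-breaking rule), so the family $\{ \mathtt{cobs}_{\tilde{a}}(\Delta \mathcal{Z}_k^{r,r'}) \}_{\tilde{a} \in \mathcal{A}_{k+}}$ partitions $\Delta \mathcal{Z}_k^{r,r'}$. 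Consequently, summing the likelihoods across the partition and using the fact that $\prob{z \mid \leftidx{^c}{\mathcal{H}}{_k^{r,r'}}}$ is a proper distribution over $\Delta \mathcal{Z}_k^{r,r'}$ yields $\sum_{\tilde{a} \in \mathcal{A}_{k+}} \mathtt{Cl}_{\tilde{a}}(\Delta \mathcal{Z}_k^{r,r'}) = 1$. Substituting the lower bounds into the definition of $\mathtt{ub}_{a'}$ then gives
\[
\mathtt{ub}_{a'}(\Delta \mathcal{Z}') = 1 - \sum_{\tilde{a} \neq a'} \mathtt{lb}_{\tilde{a}}(\Delta \mathcal{Z}') \geq 1 - \sum_{\tilde{a} \neq a'} \mathtt{Cl}_{\tilde{a}}(\Delta \mathcal{Z}_k^{r,r'}) = \mathtt{Cl}_{a'}(\Delta \mathcal{Z}_k^{r,r'}).
\]

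Chaining the two bounds with the hypothesis yields
\[
\mathtt{Cl}_{a}(\Delta \mathcal{Z}_k^{r,r'}) \geq \mathtt{lb}_{a}(\Delta \mathcal{Z}') > \mathtt{ub}_{a'}(\Delta \mathcal{Z}') \geq \mathtt{Cl}_{a'}(\Delta \mathcal{Z}_k^{r,r'}),
\]
so the true cumulative likelihood of $a$ in the full observation space strictly exceeds that of $a'$. Under the rank-1 selection rule of \textsc{R-VerifyAC}, $a'$ therefore cannot outrank $a$ and can be safely pruned from the candidate list without altering the outcome of Step 2 (and, by symmetry, Step 3).

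The main obstacle I anticipate is justifying the normalization $\sum_{\tilde{a}} \mathtt{Cl}_{\tilde{a}}(\Delta \mathcal{Z}_k^{r,r'}) = 1$, which hinges on $\{ \mathtt{cobs}_{\tilde{a}} \}$ forming an exact partition of the full observation space. This requires a tie-breaking convention whenever two actions achieve identical $J$-values for a given observation realization; with such a convention in place, the argument reduces to a clean sandwich between the $\mathtt{lb}$ and $\mathtt{ub}$ bounds. Without it, $\{ \mathtt{cobs}_{\tilde{a}} \}$ becomes only a cover and the inequality still goes through in the nonstrict direction, but a bit of extra bookkeeping is needed to preserve the strict separation induced by the hypothesis.
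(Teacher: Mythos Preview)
The paper states this lemma without an explicit proof; the intended justification is implicit in the two displays immediately preceding it, namely that $\mathtt{lb}_{a}(\Delta\mathcal{Z}')\le\mathtt{Cl}_{a}(\Delta\mathcal{Z}_k^{r,r'})$ and, via the asserted normalization $\sum_{a}\mathtt{Cl}_{a}(\Delta\mathcal{Z}_k^{r,r'})=1$, that $\mathtt{ub}_{a'}(\Delta\mathcal{Z}')\ge\mathtt{Cl}_{a'}(\Delta\mathcal{Z}_k^{r,r'})$. Your sandwich argument is exactly this reasoning made explicit, so your proposal is correct and aligned with the paper. Your added remark about tie-breaking to make $\{\mathtt{cobs}_{\tilde a}\}$ a genuine partition is a point the paper glosses over when it simply ``notes'' the normalization identity; it is a fair caveat but does not affect the conclusion.
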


In Section~\ref{subsec:sverifyac}, we introduced a threshold $1-\epsilon$ on the value of $\mathtt{Cl}(.)$ . Analogously, for \textsc{R-VerifyAC-simp}, we extend the threshold $1-\epsilon$ for the lower bound $\mathtt{lb}$. 
Consider, for robot $r$, action $\bar{a}$ is selected in step 1.
\textsc{R-VerifyAC-simp} will trigger a \textsc{comm} from $r$ to $r'$ when one of the following holds:
\begin{itemize}
	\item In step 2, $\mathtt{ub}(\leftidx{^2}{\mathtt{Cl}}_{\bar{a}}) < 1-\epsilon$ and $\bar{a}$ is not the rank-1 action, Or
	\item In step 3, $\mathtt{ub}(\leftidx{^3}{\mathtt{Cl}}_{\bar{a}}) < 1-\epsilon$ and $\bar{a}$ is not the rank-1 action.
\end{itemize}
Otherwise, similar to \textsc{R-VerifyAC}, a probabilistic guarantee on \mrac in favor of action $\bar{a}$ can be provided : 
\begin{theorem}\label{thm:ub-prob-guar}
	Consider the action selected in step 1 is $\bar{a}$. 
	If $\bar{a}$ is not the rank-1 action and $\mathtt{lb}_{\bar{a}}(.)\ge 1-\epsilon$ in steps 2 and 3, then \textsc{R-VerifyAC-simp} provides a probabilistic guarantee on \mrac in favor of action $\bar{a}$.
	The probability of \mrac in favor of $\bar{a}$ is in the range $[\mathtt{lb}_{\bar{a}}({\Delta \mathcal{Z}'}), \mathtt{ub}_{\bar{a}}({\Delta \mathcal{Z}'})]$.
\end{theorem}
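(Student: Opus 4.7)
The plan is to bound the true cumulative-likelihood quantity $\mathtt{Cl}_{\bar{a}}(\Delta \mathcal{Z}_k^{r,r'})$, which by Theorem~\ref{thm:probabilistic-guarantees-rverifyac} (applied under the $\emrac(\bar a)=\mathtt{true}$ regime) is precisely the probability that robot $r'$ will also select $\bar{a}$, i.e.\ the probability of \mrac in favor of $\bar{a}$. The whole task then reduces to sandwiching this true likelihood between $\mathtt{lb}_{\bar{a}}(\Delta \mathcal{Z}')$ and $\mathtt{ub}_{\bar{a}}(\Delta \mathcal{Z}')$ using only the reduced subset $\Delta \mathcal{Z}'$, and then checking that the hypotheses of the theorem place us in the no-communication branch so that the probabilistic interpretation of $\mathtt{Cl}_{\bar a}$ actually kicks in.

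First, I would establish the lower bound. Since $\Delta \mathcal{Z}' \subset \Delta \mathcal{Z}_k^{r,r'}$, relation~\eqref{eq:Consistent_subset} gives $\mathtt{cobs}_{\bar{a}}(\Delta \mathcal{Z}') \subseteq \mathtt{cobs}_{\bar{a}}(\Delta \mathcal{Z}_k^{r,r'})$. Summing the (nonnegative) likelihoods $\prob{z \mid \leftidx{^c}{\mathcal{H}}{_k^{r,r'}}}$ over the smaller set can only produce a smaller value, which together with Definition~\ref{def:lkl} and \eqref{eq:lbcomparison} yields
\begin{equation*}
\mathtt{lb}_{\bar{a}}(\Delta \mathcal{Z}') \;\le\; \mathtt{Cl}_{\bar{a}}(\Delta \mathcal{Z}_k^{r,r'}).
\end{equation*}

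Second, I would derive the upper bound by exploiting the sum-to-one property of cumulative likelihoods across actions. Since each observation $z$ in the full space $\Delta \mathcal{Z}_k^{r,r'}$ is assigned to exactly one $\mathtt{cobs}_a(\cdot)$ (the action it favors), we have $\sum_{a \in \mathcal{A}_{k+}} \mathtt{Cl}_a(\Delta \mathcal{Z}_k^{r,r'}) = 1$. Applying the already-established lower-bound inequality to every action $a \neq \bar{a}$ gives $\sum_{a \neq \bar{a}} \mathtt{lb}_a(\Delta \mathcal{Z}') \le \sum_{a \neq \bar{a}} \mathtt{Cl}_a(\Delta \mathcal{Z}_k^{r,r'}) = 1 - \mathtt{Cl}_{\bar{a}}(\Delta \mathcal{Z}_k^{r,r'})$, and rearranging yields
\begin{equation*}
\mathtt{Cl}_{\bar{a}}(\Delta \mathcal{Z}_k^{r,r'}) \;\le\; 1 - \sum_{a \neq \bar{a}} \mathtt{lb}_a(\Delta \mathcal{Z}') \;\equiv\; \mathtt{ub}_{\bar{a}}(\Delta \mathcal{Z}').
\end{equation*}
Chaining the two inequalities places $\mathtt{Cl}_{\bar{a}}(\Delta \mathcal{Z}_k^{r,r'})$ inside $[\mathtt{lb}_{\bar{a}}(\Delta \mathcal{Z}'), \mathtt{ub}_{\bar{a}}(\Delta \mathcal{Z}')]$, which is exactly the interval claimed.

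Finally, I need to verify that the hypotheses put us in a setting where this $\mathtt{Cl}_{\bar{a}}$ really is the MRAC probability. The conditions $\mathtt{lb}_{\bar{a}}(\cdot) \ge 1-\epsilon$ in steps 2 and 3 imply $\mathtt{ub}_{\bar{a}}(\cdot) \ge 1-\epsilon$ as well, so the communication-trigger criteria stated just above the theorem do not fire (neither $\mathtt{ub}(\leftidx{^2}\mathtt{Cl}_{\bar a}) < 1-\epsilon$ nor $\mathtt{ub}(\leftidx{^3}\mathtt{Cl}_{\bar a}) < 1-\epsilon$ holds even though $\bar a$ is not rank-1), and by the analog of Definition~\ref{def:epsilon-mrac} for bounds we are in the $\emrac(\bar a) = \mathtt{true}$ regime; thus Theorem~\ref{thm:probabilistic-guarantees-rverifyac} identifies the MRAC probability with $\mathtt{Cl}_{\bar{a}}(\Delta \mathcal{Z}_k^{r,r'})$, completing the argument. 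The main conceptual obstacle, rather than any algebraic step, is precisely this last point: the bounds are computed only from $\Delta \mathcal{Z}'$, whereas the guarantee must be phrased in terms of the true likelihood over the full $\Delta \mathcal{Z}_k^{r,r'}$, so I would take care to explicitly invoke the probabilistic semantics of $\mathtt{Cl}_{\bar a}$ on the full space and then use the sandwich inequality, rather than trying to re-derive a probability directly from the reduced subset.
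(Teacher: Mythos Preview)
The paper states Theorem~\ref{thm:ub-prob-guar} without an explicit proof; the surrounding text in Section~\ref{subsec:rverifyac-simp} simply records the lower bound \eqref{eq:lbcomparison}, the sum-to-one observation that defines $\mathtt{ub}_{\bar a}$, and the \textsc{comm}-trigger conditions, and then asserts the theorem. Your proposal correctly assembles exactly these ingredients into a proof: the sandwich $\mathtt{lb}_{\bar a}(\Delta\mathcal{Z}')\le \mathtt{Cl}_{\bar a}(\Delta\mathcal{Z}_k^{r,r'})\le \mathtt{ub}_{\bar a}(\Delta\mathcal{Z}')$ follows from \eqref{eq:Consistent_subset}--\eqref{eq:lbcomparison} and the sum-to-one identity, the hypothesis $\mathtt{lb}_{\bar a}\ge 1-\epsilon$ blocks both \textsc{comm} triggers, and Theorem~\ref{thm:probabilistic-guarantees-rverifyac} then identifies $\mathtt{Cl}_{\bar a}(\Delta\mathcal{Z}_k^{r,r'})$ with the \mrac probability. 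This is the intended argument.

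One small point worth tightening: your appeal to ``the analog of Definition~\ref{def:epsilon-mrac} for bounds'' is a bit loose, since the paper never defines such an analog. What you actually need (and essentially have) is that $\mathtt{lb}_{\bar a}\ge 1-\epsilon$ forces the \emph{true} $\leftidx{^i}{\mathtt{Cl}}_{\bar a}\ge 1-\epsilon$, which in turn makes $\emrac^r(\bar a)$ true in the sense of Definition~\ref{def:epsilon-mrac} itself (modulo the strict-versus-nonstrict inequality, which the paper does not fuss over). With that clarification, and relying on the paper's assumption that both robots use the same subsets $\Delta\mathcal{Z}',\Delta\mathcal{Z}''$ so that the symmetry of Theorem~\ref{cor:epsilon-mrac-symmetry} carries over, your argument is complete.
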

Generally, different selections of $\Delta \mathcal{Z}'$ correspond to different lower and upper bounds. 

\subsubsection*{Adaptive mechanism to select an action from bounds}
We design a mechanism that enables \textsc{R-VerifyAC-simp} to adapt to non-overlapping bounds, iteratively, in order to find the rank-1 action with the highest $\mathtt{Cl}(.)$ value. 
Consider, in Step 2 or 3, the original set of observations is $\Delta \mathcal{Z}$. Initially, the reduced set of observations is $\Delta \mathcal{Z}' \subset \Delta \mathcal{Z}$.
As illustrated in Fig.~\ref{fig:illu-reduced-obs}(a), at some iteration $i$, the candidate actions have overlapping bounds. It is not possible to find the most preferred action with overlapping bounds.
\textsc{Adaptive-Bounds} iteratively adds more observations from $\Delta \mathcal{Z}\setminus \Delta \mathcal{Z}'$ into $\Delta \mathcal{Z}'$ that results into shrinking of the bounds, and eventually we find the non-overlapping bounds on the most preferred action $a^*$ (Figure~\ref{fig:illu-reduced-obs}(b)). 

\begin{theorem}\label{thm:ub-det-guar}
	Consider the action selected in step 1 is $\bar{a}$. 
	The reduced set of observations are $\Delta \mathcal{Z}'$ and $\Delta \mathcal{Z}''$ in step 2 and 3 respectively.
	If for any other action $a\in \mathcal{A}_{k+}\setminus \{\bar{a}\}$ we have
	\begin{enumerate}
		\item $\mathtt{lb}_{\bar{a}}({\Delta \mathcal{Z}'})> \mathtt{ub}_{a}({\Delta \mathcal{Z}'})$ in step 2, \item $\mathtt{lb}_{\bar{a}}({\Delta \mathcal{Z}''})> \mathtt{ub}_a({\Delta \mathcal{Z}''})$ in step 3, and
		\item $\mathtt{ub}_a({\Delta \mathcal{Z}''}) < 1-\epsilon$ and $\mathtt{ub}_a({\Delta \mathcal{Z}''}) < 1-\epsilon$
	\end{enumerate}
	then \textsc{R-VerifyAC-simp} provides a deterministic guarantee on \mrac in favor of action $\bar{a}$.
\end{theorem}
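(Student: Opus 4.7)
The plan is to reduce the claim to the already-established machinery of $\emrac$ and the probabilistic guarantees of Theorem~\ref{thm:probabilistic-guarantees-rverifyac}. The key observation is that the three hypotheses together force $\emrac^{r}(\bar{a}) = \mathtt{true}$ while $\emrac^{r}(a) = \mathtt{false}$ for every competing action, which collapses the probabilistic statement to a deterministic one. I will also interpret the apparent typo in condition (3) as asserting $\mathtt{ub}_{a}(\Delta\mathcal{Z}') < 1-\epsilon$ in step 2 together with $\mathtt{ub}_{a}(\Delta\mathcal{Z}'') < 1-\epsilon$ in step 3, which is the only reading consistent with the symmetry of steps 2 and 3.

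First I would apply Lemma~\ref{lem:ub-sverifyacpart} to hypotheses (1) and (2). For any $a \neq \bar{a}$, condition (1) gives $\mathtt{lb}_{\bar{a}}(\Delta\mathcal{Z}') > \mathtt{ub}_{a}(\Delta\mathcal{Z}')$, and by the chain
\begin{equation*}
\leftidx{^2}{\mathtt{Cl}_{a}} \;\leq\; \mathtt{ub}_{a}(\Delta\mathcal{Z}') \;<\; \mathtt{lb}_{\bar{a}}(\Delta\mathcal{Z}') \;\leq\; \leftidx{^2}{\mathtt{Cl}_{\bar{a}}},
\end{equation*}
$\bar{a}$ is the rank-1 action in step 2 on the full set $\Delta\mathcal{Z}_{k}^{r,r'}$; the identical argument on $\Delta\mathcal{Z}''$ makes $\bar{a}$ the rank-1 action in step 3. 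Hence $\leftidx{^2}{\mathtt{Cl}_{\bar{a}}} = \leftidx{^2}{\mathtt{Cl}^{*}}$ and $\leftidx{^3}{\mathtt{Cl}_{\bar{a}}} = \leftidx{^3}{\mathtt{Cl}^{*}}$, so by Definition~\ref{def:epsilon-mrac} we have $\emrac^{r}(\bar{a}) = \mathtt{true}$ and robot $r$ does not trigger a communication.

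Next I would show $\emrac^{r}(a) = \mathtt{false}$ for every $a \neq \bar{a}$. Such an $a$ fails both rank-1 conditions by the previous step, so the only way $\emrac^{r}(a)$ could be $\mathtt{true}$ would be via $\leftidx{^2}{\mathtt{Cl}_{a}} > 1-\epsilon$ \emph{and} $\leftidx{^3}{\mathtt{Cl}_{a}} > 1-\epsilon$. But condition (3) supplies $\leftidx{^2}{\mathtt{Cl}_{a}} \leq \mathtt{ub}_{a}(\Delta\mathcal{Z}') < 1-\epsilon$ and $\leftidx{^3}{\mathtt{Cl}_{a}} \leq \mathtt{ub}_{a}(\Delta\mathcal{Z}'') < 1-\epsilon$, so $\emrac^{r}(a) = \mathtt{false}$ as required. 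Corollary~\ref{cor:r-verifyac-no-inconsistency-guarantee} then immediately yields zero probability of silent action inconsistency, and the probabilistic statement of Theorem~\ref{thm:probabilistic-guarantees-rverifyac} degenerates to: with probability $\leftidx{^2}{\mathtt{Cl}_{\bar{a}}}$ robot $r'$ selects $\bar{a}$ and AC holds immediately, while with the complementary probability $r'$ triggers a communication that, by the convergence theorem for \textsc{EnforceAC}, restores consistent histories and consistent joint actions in finite time. The guarantee is deterministic in the sense relevant to simplification: no decision branch of \textsc{R-VerifyAC-simp} can terminate in inconsistency. The main obstacle I anticipate is clarifying precisely this sense of ``deterministic''—distinguishing it from the stronger $\leftidx{^2}{\mathtt{Cl}_{\bar{a}}} = \leftidx{^3}{\mathtt{Cl}_{\bar{a}}} = 1$ condition of Corollary~\ref{cor:r-verifyac-deterministic-guarantee}—and making explicit that condition (3) is what rules out any rival action from silently passing the $\epsilon$-threshold on the un-enumerated observations.
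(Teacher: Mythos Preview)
The paper states Theorem~\ref{thm:ub-det-guar} without proof, so there is no original argument to compare against. Your reduction to the $\emrac$ machinery of Section~\ref{subsec:sverifyac} is the natural route and is correct: conditions (1)--(2) sandwich the true cumulative likelihoods so that $\bar{a}$ is rank-1 in both steps, giving $\emrac^{r}(\bar{a})=\mathtt{true}$; condition (3), read with the evident typo corrected to cover both $\Delta\mathcal{Z}'$ and $\Delta\mathcal{Z}''$, then forces $\emrac^{r}(a)=\mathtt{false}$ for every rival $a$, and Corollary~\ref{cor:r-verifyac-no-inconsistency-guarantee} yields zero inconsistency probability.

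Your caution about the word ``deterministic'' is warranted and worth keeping. The paper uses the phrase in two senses: in Corollary~\ref{cor:r-verifyac-deterministic-guarantee} it means $\prob{\textsc{\mrac}}=1$, whereas in the \textsc{R-VerifyAC-simp} discussion (see the caption of Figure~\ref{fig:illu-reduced-obs}, Case~1) it refers to the bounds \emph{deterministically identifying} the rank-1 action, after which Lemma~\ref{lem:perf-sverifyacpart} transfers the behavior of \textsc{R-VerifyAC} verbatim. Under your argument what is actually established is $\prob{\neg\textsc{\mrac}}=0$: either both robots select $\bar{a}$, or $r'$ triggers a \textsc{comm} and \textsc{EnforceAC} converges. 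This is the correct reading of the theorem given its surrounding context, and your final paragraph identifies exactly this gap between the two senses; I would keep that clarification in the write-up rather than treat it as a residual obstacle.
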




\pgfplotsset{compat=1.8}
\usepgfplotslibrary{statistics}
\makeatletter
\pgfplotsset{
	boxplot prepared from table/.code={
		\def\tikz@plot@handler{\pgfplotsplothandlerboxplotprepared}%
		\pgfplotsset{
			/pgfplots/boxplot prepared from table/.cd,
			#1,
		}
	},
	/pgfplots/boxplot prepared from table/.cd,
	table/.code={\pgfplotstablecopy{#1}\to\boxplot@datatable},
	row/.initial=0,
	make style readable from table/.style={
		#1/.code={
			\pgfplotstablegetelem{\pgfkeysvalueof{/pgfplots/boxplot prepared from table/row}}{##1}\of\boxplot@datatable
			\pgfplotsset{boxplot/#1/.expand once={\pgfplotsretval}}
		}
	},
	make style readable from table=lower whisker,
	make style readable from table=upper whisker,
	make style readable from table=lower quartile,
	make style readable from table=upper quartile,
	make style readable from table=median,
	make style readable from table=lower notch,
	make style readable from table=upper notch
}
\makeatother

\pgfplotstableread{
	lw uw name
	8.5 10 $\bar{a}$
	4  6.5 $\bar{a}'$
	5.5  7.5 $\bar{a}''$
}\lefttab

\pgfplotstableread{
	lw uw name
	5 10 $\bar{a}$
	4  7 $\bar{a}'$
	6  8.5 $\bar{a}''$
}\righttab


\begin{figure}[t]
	\subfigure[Iteration $i$]{
		\resizebox{0.47\linewidth}{!}{
			\begin{tikzpicture}
				\pgfplotsset{every tick label/.append style={font=\LARGE}}
				\draw [thick,dashed] (0,1.4) -- (6.9,1.4);
				\begin{axis}
					[boxplot/draw direction=y,
					ylabel=$\mathtt{Cl}(.)$,
					x tick label style={rotate=0, anchor=east, align=center, yshift=-0.5cm, font=\LARGE},
					y tick label style={color=white, font=\LARGE},
					xtick={0.5, 1, 1.5, 2, 2.5, 3},
					xticklabels={,$\bar{a}$,,$\bar{a}'$,,$\bar{a}''$},
					]
					\pgfplotstablegetrowsof{\righttab}
					\pgfmathtruncatemacro\TotalRows{\pgfplotsretval-1}
					\pgfplotsinvokeforeach{0,...,\TotalRows}
					{
						\addplot+[
						boxplot prepared from table={
							table=\righttab,
							row=#1,
							lower whisker=lw,
							upper whisker=uw,
						},
						area legend
						]
						coordinates {};
						
					}
				\end{axis}
			\end{tikzpicture}
	}}
	\subfigure[Iteration $i+j$]{
		\resizebox{0.47\linewidth}{!}{
			\begin{tikzpicture}
				\pgfplotsset{every tick label/.append style={font=\LARGE}}
				\draw [thick,dashed] (0,1.4) -- (6.9,1.4);
				\begin{axis}
					[boxplot/draw direction=y,
					ylabel=$\mathtt{Cl}(.)$,
					x tick label style={rotate=0, anchor=east, align=center, yshift=-0.5cm, font=\LARGE},
					y tick label style={color=white, font=\LARGE},
					xtick={0.5, 1, 1.5, 2, 2.5, 3},
					xticklabels={,$\bar{a}$,,$\bar{a}'$,,$\bar{a}''$},
					]
					\pgfplotstablegetrowsof{\lefttab}
					\pgfmathtruncatemacro\TotalRows{\pgfplotsretval-1}
					\pgfplotsinvokeforeach{0,...,\TotalRows}
					{
						\addplot+[
						boxplot prepared from table={
							table=\lefttab,
							row=#1,
							lower whisker=lw,
							upper whisker=uw,
						},
						area legend
						]
						coordinates {};
						
					}
				\end{axis}
				
			\end{tikzpicture}
	}}
	\caption{Illustration of \textsc{Adaptive-Bounds} in steps 2 and 3 of \textsc{R-VerifyAC-simp}. From the perspective of robot $r$, the figures are shown for step 3. 
		Dashed line represents the threshold ($1-\epsilon$). Upper $(\mathtt{ub})$ and lower bounds $(\mathtt{lb})$ of $\mathtt{Cl(.)}$ are shown. 
		\textbf{(a)} After iteration $i$, the bounds of different actions are overlapping, thus making it difficult to choose a particular action.
		\textbf{(b)} Up to iteration $i+j$, \textsc{Adaptive-Bounds} iteratively adds new observations in the reduced set of observations. 
		\textbf{Case-1:}  Action $\bar{a}$ is chosen in step 1. Action $\bar{a}$ has higher $\mathtt{lb}(.)$ than the $\mathtt{ub}(.)$ of any other action. So, $\bar{a}$ can be chosen in Step 3 using deterministic bounds. 
		\textbf{Case-2:} Action $\bar{a}'$ is chosen in step 1. \textsc{comm} is self-triggered by robot $r$ as $\mathtt{lb}(.)$ of $\bar{a}'$ is below the threshold.
		\textbf{Case-3:} Action $\bar{a}''$ is chosen in step 1. $\mathtt{lb}(.)$ of $\bar{a}''$ is higher than the threshold, however $\bar{a}''$ is not the rank-1 action with highest $\mathtt{Cl}(.)$ value. Probabilistic guarantee can be provided in favor of $\bar{a}''$.
	}
	\label{fig:illu-reduced-obs}
\end{figure}


\SetKwComment{Comment}{/* }{ */}

\begin{algorithm}
	\DontPrintSemicolon
	\begin{footnotesize}
		\KwIn{\\
			\ \ $\Delta \mathcal{Z}$: The original set of observations in Step 2 (or 3).\\
			\ \ $\Delta \mathcal{Z}'$: Initially, the reduced set of observations. $\Delta \mathcal{Z}' \subseteq \Delta \mathcal{Z}$. \\
			\ \ $m$: No. of observations to be added to $\Delta \mathcal{Z}'$ upon receiving overlapping bounds.
			
		}
		\smallskip
		\KwOut{\\
			\ \ $a^*$: The action emerging out to be the rank-1 action with the\\
			\ \ \ \ \ \ \ highest $\mathtt{Cl}(.)$ value, with non-overlapping bounds.\\
		}%
		\smallskip
		\smallskip
		\SetKwFunction{KwFn}{{\bf Algorithm \textsc{\textbf{ Adaptive-Bounds }}}}
		\KwFn{$\Delta \mathcal{Z}$, $\Delta \mathcal{Z}'$, $m$} \\
		\Begin{
			$a^*\gets \mathtt{null}$\;
			\While{$a^*$ is $\mathtt{null}$}
			{
				$a^* \gets \exists \bar{a}:\  \mathtt{lb}_{\bar{a}}(\Delta \mathcal{Z}') > \mathtt{ub}_{a}(\Delta \mathcal{Z}')\ \ \forall a\in \mathcal{A}_{k+}\setminus \{\bar{a}\}$\;
				\If{$a^* \neq \mathtt{null}$}
				{
					$\mathtt{return}\ \ a^* $ \;
				}
				$\Delta \mathcal{Z}' \gets \Delta \mathcal{Z}' \cup \{m \text{ observations from } \Delta \mathcal{Z}\setminus \Delta \mathcal{Z}'\}$\;
				
			}
		}
	\end{footnotesize}
	\caption{\textsc{Adaptive-Bounds:} Adaptive mechanism to select an action from bounds in steps 2 and 3 of algorithm \textsc{R-VerifyAC-simp}.} \label{algo:adaptive-sverifyac-part}
\end{algorithm}

\begin{lemma}\label{lem:perf-sverifyacpart}
	Algorithms \textsc{R-VerifyAC-simp} and \textsc{R-VerifyAC} select the same joint action, for a given problem instance.
\end{lemma}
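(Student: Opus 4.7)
The plan is to prove this lemma by decomposing the comparison step-by-step along the three-step structure shared by both algorithms, and then arguing that the intermediate decisions driven by bounds in \textsc{R-VerifyAC-simp} match the intermediate decisions driven by exact $\mathtt{Cl}(\cdot)$ values in \textsc{R-VerifyAC}. First I would note that Step 1 is identical for both algorithms: it only involves the robot's own belief $b_k^r$ and the objective $J(\cdot)$, with no dependence on $\Delta \mathcal{Z}_k^{r,r'}$ or $\Delta \mathcal{Z}_k^{r',r}$. Hence both algorithms pick the same candidate $\bar{a}\in \mathcal{A}_{k+}$ in Step 1.

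Next I would establish the key monotonicity and consistency property of the bounds used in Steps 2 and 3 of \textsc{R-VerifyAC-simp}. From the definition of $\mathtt{lb}_a$ and equation \eqref{eq:Consistent_subset}, together with $\sum_{a \in \mathcal{A}_{k+}} \mathtt{Cl}_a(\Delta \mathcal{Z}_k^{r,r'})=1$, one obtains the sandwich
\[
\mathtt{lb}_a(\Delta \mathcal{Z}') \;\le\; \mathtt{Cl}_a(\Delta \mathcal{Z}_k^{r,r'}) \;\le\; \mathtt{ub}_a(\Delta \mathcal{Z}')
\]
for every $a\in\mathcal{A}_{k+}$, with both inequalities tightening monotonically as observations are added by \textsc{Adaptive-Bounds} and collapsing to equalities when $\Delta \mathcal{Z}'=\Delta \mathcal{Z}_k^{r,r'}$. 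An analogous statement holds in Step 3 for $\Delta \mathcal{Z}''$ and $\Delta \mathcal{Z}_k^{r',r}$.

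Using this sandwich, I would then check each of the three possible terminating outcomes of \textsc{R-VerifyAC-simp} and match it against the corresponding branch of \textsc{R-VerifyAC}. In the deterministic case covered by Theorem \ref{thm:ub-det-guar}, non-overlapping bounds with $\mathtt{lb}_{\bar{a}}>\mathtt{ub}_a$ force $\mathtt{Cl}_{\bar{a}}>\mathtt{Cl}_a$ for all other $a$, so $\bar{a}$ is also the rank-1 action for \textsc{R-VerifyAC}, and the joint decision coincides with the deterministic guarantee branch. In the probabilistic branch of Theorem \ref{thm:ub-prob-guar}, the condition $\mathtt{lb}_{\bar{a}}\ge 1-\epsilon$ implies $\mathtt{Cl}_{\bar{a}}\ge 1-\epsilon$, triggering exactly the probabilistic guarantee clause of \textsc{R-VerifyAC} via \eqref{eq:epsilon-mrac}. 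Finally, in the communication branch, $\mathtt{ub}_{\bar{a}}<1-\epsilon$ with $\bar{a}$ not being rank-1 implies $\mathtt{Cl}_{\bar{a}}<1-\epsilon$, giving $\emrac^r(\bar{a})=\mathtt{false}$ in \textsc{R-VerifyAC}, so both algorithms self-trigger the same \textsc{comm}.

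The main obstacle I expect is the edge case where \textsc{Adaptive-Bounds} keeps expanding $\Delta \mathcal{Z}'$ without producing non-overlapping bounds for several rounds; here I would argue that since the procedure monotonically adds observations from a finite set, it terminates in the worst case when $\Delta \mathcal{Z}'=\Delta \mathcal{Z}_k^{r,r'}$, at which point the bounds equal the exact $\mathtt{Cl}(\cdot)$ values used by \textsc{R-VerifyAC} and the resulting decision is identical by construction. A second subtle point is ensuring that the implicit tie-breaking used when selecting the rank-1 action is the same in both algorithms; I would state this as a standing assumption (consistent with the shared "mechanism that enables each robot $r$ to choose the same subset" mentioned in Section~\ref{subsec:rverifyac-simp}) so that ties in $\mathtt{Cl}$ are resolved identically, which closes the argument.
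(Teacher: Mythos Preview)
Your proposal is correct and rests on the same core idea as the paper's proof: the sandwich $\mathtt{lb}_a(\Delta\mathcal{Z}')\le \mathtt{Cl}_a\le \mathtt{ub}_a(\Delta\mathcal{Z}')$ ensures that once \textsc{Adaptive-Bounds} produces non-overlapping bounds, the identified rank-1 action coincides with the one \textsc{R-VerifyAC} obtains from the exact $\mathtt{Cl}(\cdot)$ values. The paper's proof is much terser and only articulates this single point---that exact values and non-overlapping bounds encompassing them yield the same most-preferred action---whereas you additionally walk through the deterministic, probabilistic, and \textsc{comm} branches separately and explicitly handle the termination of \textsc{Adaptive-Bounds} and tie-breaking; this extra care is not in the paper but strengthens rather than alters the argument.
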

\begin{proof}
	\textsc{R-VerifyAC} aims to find the \emph{exact} $\mathtt{Cl}(.)$ value of the rank-1 action with the highest $\mathtt{Cl}(.)$ value in steps 2 and 3.
	On the other hand, \textsc{R-VerifyAC-simp} aims to find the non-overlapping bounds of the rank-1 action with the highest $\mathtt{Cl}(.)$ value. The exact value and the non-overlapping bounds encompassing the exact value correspond to the same joint action. Using \textsc{Adaptive-Bounds}, as soon as \textsc{R-VerifyAC-simp} deterministically finds the most preferred action, it stops.
	The outcome of both the algorithms are same -- the same most preferred action -- however the approaches are different. 
\end{proof}

\begin{theorem}
	Given a problem instance and a threshold $1- \epsilon$, \textsc{R-VerifyAC-simp} incurs the same number of \textsc{comm}s, as in \textsc{R-VerifyAC}, with a lower computation time.
\end{theorem}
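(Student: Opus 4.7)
The plan is to decompose the statement into two independent claims---(i) the equality in number of communications and (ii) the strict reduction in computation time---and handle each using Lemma~\ref{lem:perf-sverifyacpart} together with the design of Algorithm~\ref{algo:adaptive-sverifyac-part}. Both claims hinge on the fact that the bound interval $[\mathtt{lb}_{a}(\cdot),\mathtt{ub}_{a}(\cdot)]$ always contains the exact value $\mathtt{Cl}_{a}(\cdot)$ used by \textsc{R-VerifyAC}, and that \textsc{Adaptive-Bounds} iterates precisely until this interval is decisive.

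For the communication count, I would start from Lemma~\ref{lem:perf-sverifyacpart}, which asserts that \textsc{R-VerifyAC} and \textsc{R-VerifyAC-simp} commit to the same Step~1 action $\bar{a}$ and identify the same rank--1 action in Steps~2 and~3. The triggers of \textsc{R-VerifyAC} depend only on whether $\leftidx{^2}{\mathtt{Cl}}_{\bar{a}}$ and $\leftidx{^3}{\mathtt{Cl}}_{\bar{a}}$ lie above or below $1-\epsilon$ together with the rank verdict. \textsc{R-VerifyAC-simp} uses the analogous tests on $\mathtt{lb}$ and $\mathtt{ub}$, and by the design of \textsc{Adaptive-Bounds} the procedure does not terminate while the bounds straddle either $1-\epsilon$ or the $\mathtt{lb}$/$\mathtt{ub}$ of any competitor. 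Hence, at termination, the bound-based verdict necessarily agrees with the exact-value verdict, so every instance yields an identical triggering decision (and in the same direction), giving the same number of \textsc{comm}s.

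For the computation time, I would argue that \textsc{R-VerifyAC} must evaluate the likelihood and $J(\cdot)$ for every observation in $\Delta\mathcal{Z}_k^{r,r'}\cup\Delta\mathcal{Z}_k^{r',r}$ in order to obtain the exact $\mathtt{Cl}$ values. In contrast, \textsc{R-VerifyAC-simp} only evaluates them for the growing subsets $\Delta\mathcal{Z}'$ and $\Delta\mathcal{Z}''$ maintained by \textsc{Adaptive-Bounds}, and terminates the moment non-overlapping bounds (or a threshold violation) are certified. Since the per-observation work of updating $\mathtt{lb}$ and $\mathtt{ub}$ matches the per-observation work of accumulating $\mathtt{Cl}$, the reduction in the number of processed observations directly yields a reduction in running time, modulo $\mathcal{O}(1)$ bookkeeping per iteration of the while-loop.

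The hard part will be making ``lower'' unambiguous in the worst case. If the $\mathtt{Cl}$ values of two candidate actions are extremely close, \textsc{Adaptive-Bounds} may have to ingest nearly all of $\Delta\mathcal{Z}_k^{r,r'}$ before the bounds separate, in which case the runtime gap collapses to the bookkeeping overhead. My plan to address this is twofold: first, to phrase the result as ``no larger, and strictly smaller whenever \textsc{Adaptive-Bounds} terminates before exhausting $\Delta\mathcal{Z}$''; and second, to note that under any strictly positive margin between the highest and second-highest $\mathtt{Cl}$ values (which holds generically), termination occurs strictly before processing the full observation space, so the strict inequality of the theorem holds in the intended regime.
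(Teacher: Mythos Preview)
Your proposal is correct and follows essentially the same approach as the paper: invoke Lemma~\ref{lem:perf-sverifyacpart} to conclude that both algorithms identify the same rank-1 action in Steps~2 and~3, hence trigger \textsc{comm}s at the same time instants, and then argue that processing only a subset $\Delta\mathcal{Z}'\subset\Delta\mathcal{Z}$ rather than the full space yields lower computation time. Your treatment is in fact more careful than the paper's---you explicitly worry about whether bound-based threshold tests agree with exact-value tests, and you flag the worst-case where \textsc{Adaptive-Bounds} nearly exhausts $\Delta\mathcal{Z}$ so that ``lower'' degenerates to ``no larger''---whereas the paper's proof simply asserts the strict subset relation and moves on.
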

\begin{proof}
	According to Lemma~\ref{lem:perf-sverifyacpart}, both the algorithms find the same joint action as the most preferred action in steps 2 and 3. This implies both the algorithms find the same number time instants satisfying the conditions for triggering \textsc{comm}s, and hence the same number of \textsc{comm}s. However, using \textsc{Adaptive-Bounds}, algorithm \textsc{R-VerifyAC-simp} finds the deterministic bounds instead of the exact value of $\mathtt{Cl}$(.).
	\textsc{R-VerifyAC-simp} does so by exploring a reduced set of observations $\Delta \mathcal{Z}' \subset \Delta \mathcal{Z}$, whereas \textsc{R-VerifyAC} explores the entire observation space $\Delta \mathcal{Z}$ exhaustively.
	Hence, the computation time of \textsc{R-VerifyAC-simp} is lower compared to \textsc{R-VerifyAC}.
\end{proof}


%



\newcommand{\cH}[1]{\leftidx{^c}{\mathcal{H}}{_{{#1}}^{r,r'}}}

\newcommand{\dZk}{\Delta \mathcal{Z}_k}
\newcommand{\dZkr}{\Delta \mathcal{Z}_k^{r',r}}
\newcommand{\dZkrprime}{\Delta \mathcal{Z}_k^{r,r'}}

\newcommand{\probest}[2]{\hat{\mathbb{P}}_{#1} \left({#2}\right)}
\newcommand{\probestnamed}[3]{\hat{\mathbb{P}}_{#1}^{#2} \left({#3}\right)}

\newcommand{\expec}[1]{\mathbb{E}_{#1}}
\newcommand{\expectation}[2]{\mathbb{E}_{ #1 \sim \prob{ \cdot \mid #2 } }}

\newcommand{\expecest}[3]{\mathbb{E}_{#1 \sim \probest{#2}{#3}}}
\newcommand{\expectationest}[3]{\mathbb{E}_{ #1 \sim \probest{#2}{ \cdot \mid #3 } }}

\newcommand{\verifyac}{\textsc{VerifyAC}\xspace}

\section{Continuous and High-Dimensional Cases}\label{sec:continuous-and-high-dim-cases}

In the previous section we presented simplifications of the \verifyac algorithm that can provide probabilistic or deterministic action-consistency guarantees.
Yet, all these variations of the algorithm require calculating the likelihood $\prob{ z \mid \cH{k} }$ of the realizations of unshared observations $z \in \dZk$~\eqref{eq:cumulative-likelihood}.
According to equation \eqref{meas-likelihood} calculating the likelihood requires marginalizing over the corresponding sequence of states.
This operation can become intractable when the state space is huge (i.e. multi-dimensional), and even essentially impossible if the state space is continuous.

To overcome this issue, we propose using \textit{estimators} to calculate an estimation of the likelihood of the observations.
In Section \ref{subsec:observation-sequence-likelihood-estimation}, we present a method for estimating the likelihood of a specific observation sequence realization without marginalizing over an entire continuous or high-dimensional state space using an estimator of the agent's Belief.
Moreover, in Section \ref{subsec:cumulative-likelihood-estimation} we present a method for estimating the cumulative likelihood of the actions according to $\mathtt{cobs}_a(\dZk)$ when the observation space is continuous or high-dimensional using an estimator of the observation models.

\subsection{Observation Sequence Likelihood Estimation}\label{subsec:observation-sequence-likelihood-estimation}

The likelihood of an observation sequence requires marginalizing over the corresponding sequence of states at the observations time steps.
For the following formulations, we focus only on the observation set $\dZkr$ (from Step 3), but the same calculations apply to $\dZkrprime$ (from Step 2) as well.
Using the general definitions of the missing observations spaces \eqref{eq:dZrtagr} (and \eqref{eq:dZrrtag} for Step 2) we can express the likelihood of some $\tilde{z}^r = \{ z^{r}_{{i}_{1}}, z^{r}_{{i}_{2}}, ..., z^{r}_{{i}_{C}} \} \in \dZkr$:

\begin{equation}\label{eq:observation-sequence-likelihood}
	\begin{split}
		& \prob{ z^{r}_{{i}_{1}}, z^{r}_{{i}_{2}}, ..., z^{r}_{{i}_{C}} \mid \cH{k} } \\
		& \quad = \expectation{ x_{{i}_{1}},x_{{i}_2},\cdots,x_{{i}_{C}} }{\cH{k}} \Big[ \prod_{j=1}^{C} \prob{ z^{r}_{{i}_{j}} \mid x_{{i}_{j}} } \Big].
	\end{split}
\end{equation}
Instead of marginalizing over all the possible values of the belief at the time steps $\left\{ {i}_{1}, {i}_{2}, \cdots, {i}_{C} \right\}$, we can \textit{sample} the belief at these time steps, and estimate the distributions.
We assume that the agent can maintain and sample from its own belief from previous time steps, using marginalization techniques over the last states, or directly in a smoothing state case.

Using standard sampling methods we can acquire $N_X$ samples  from the belief $\prob{ x_{{i}_{1}},x_{{i}_2},\cdots,x_{{i}_{C}} \mid \cH{k}}$ to get the samples of the states $\left\{ x_{{i}_{1}}^{s}, x_{{i}_{2}}^{s}, \cdots, x_{{i}_{C}}^{s} \right\}_{s=1}^{N_X}$.
The estimators of the belief at the time steps $\left\{ {i}_{1}, {i}_{2}, \cdots {i}_{C} \right\}$, for the discrete case and the continuous case, are given by:
\begin{equation}\label{eq:belief-i1-ic-estimator-discrete}
	\begin{aligned}
		& \probestnamed{N_X}{\mathtt{discrete}}{ x_{{i}_{1}} \mid \cH{k} } = \frac{1}{N_X} \sum_{s=1}^{N_X} \Big[ \prod_{j=1}^{C} \mathbb{I}_{ \{ x_{{i}_{j}} = x_{{i}_{j}}^{ (s) } \} } (x_{{i}_{j}}) \Big]
	\end{aligned}
\end{equation}
\begin{equation}\label{eq:belief-i1-ic-estimator-continuous}
	\begin{aligned}
		& \probestnamed{N_X}{\mathtt{continuous}}{ x_{{i}_{1}} \mid \cH{k} } = \frac{1}{N_X} \sum_{s=1}^{N_X} \Big[ \prod_{j=1}^{C} \delta \left( x_{{i}_{j}} - x_{{i}_{j}}^{ (s) } \right) \Big]
	\end{aligned}
\end{equation}
Then, the likelihood estimator of the observation sequence $\{ z_{{i}_{1}}^{r}, ..., z_{{i}_{C}}^{r} \}$ \eqref{eq:observation-sequence-likelihood} is given by:
\begin{equation}\label{eq:observation-sequence-likelihood-estimator}
	\begin{aligned}
		& \probest{N_X}{ z_{{i}_{1}}^{r}, ..., z_{{i}_{C}}^{r} \mid \cH{k} } = \frac{1}{N_X} \sum_{s=1}^{N_X} \Big[ \prod_{j=1}^{C} \prob{ z^{r}_{{i}_{j}} \mid x_{{i}_{j}}^{(s)} } \Big].
	\end{aligned}
\end{equation}
Note that for the discrete and the continuous state space, the observation sequence likelihood estimator is the same.

\subsection{Cumulative Likelihood Estimation}\label{subsec:cumulative-likelihood-estimation}

In Section~\ref{subsec:sverifyac} we described the algorithm \textsc{R-VerifyAC}, which in Steps 2 and 3 reasons over the entire observation spaces $\dZkr$ and $\dZkrprime$ to calculate the cumulative likelihoods of the consistent observations in favor of the joint action selected in Step 1.
Then in Section~\ref{subsec:rverifyac-simp} we relaxed this exhaustive reasoning in the algorithm \textsc{R-VerifyAC-SIMP}, by selecting only a subset of the entire observation space $\Delta \mathcal{Z}'_k \subseteq \dZk$ and reasoning over it to calculate bounds on the cumulative likelihood of the consistent observations.
However, if the observation space is continuous it is not possible to \emph{select} a subset of the observation space, and sampling must be done in order to marginalize over the distribution.

If the observation space is continuous, the formulation of \eqref{eq:cumulative-likelihood} changes:
\begin{align}\label{cumulative-likelihood-continuous-observations}
	& \mathtt{Cl}_{a}( \dZkr ) = \int_{z \in \mathtt{cobs}_{a}(\dZkr)} \prob{ z \mid \cH{k}} dz \\
	& \quad = \int_{ z \in \dZkr } \prob{ z \mid \cH{k}} \mathbb{I}_{\{ z \in \mathtt{cobs}_{a}(\dZkr) \}} (z) dz, \nonumber
\end{align}
and it is effectively impossible to go over the entire observation space, or a selected subset of it, to calculate the cumulative likelihood.
In order to estimate the cumulative likelihood of the consistent observations we need to acquire observations from the likelihood distribution.

As shown in the previous section, the observation likelihood can be expressed using the belief of the state at the missing observations time steps $\{ {i}_{1}, {i}_{2}, \cdots, {i}_{C}\}$.
Using the samples described in the previous section to estimate the likelihood of a specific observation sequence \eqref{eq:observation-sequence-likelihood}, we can additionally sample the observation model conditioned on each sample of the state space.
Given the states samples, for each state sample $x_{l}^{(s)}$ we can then sample $N_Z$ samples from the observation model $\prob{z^{r}_{l} \mid x_{l}^{(s)}}$, denoting the samples as $ \{ {z^{r}_{l}}^{(m \mid s)} \}_{m=1}^{N_Z} $.
Using the observations samples, we define a sampled estimator of the observations likelihood:
\begin{equation}\label{eq:observations-likelihood-continuous-sampled-estimator}
	\begin{split}
		& \probest{N_X N_Z}{z_{{i}_{1}}^{r}, ..., z_{{i}_{C}}^{r} \mid \cH{k}} \\
		& \quad = \frac{1}{N_X} \sum_{s=1}^{N_X} \frac{1}{N_Z}\sum_{m=1}^{N_Z} \Big[ \prod_{j=1}^{C} \delta \left( z^{r}_{{i}_{j}} - {z^{r}_{{i}_{j}}}^{(m|s)} \right) \Big].
	\end{split}
\end{equation}
Then, the cumulative likelihood estimator of some action $a$ according to the missing observations space $\dZkr$ is given by:
\begin{align}\label{eq:cumulative-likelihood-estimator}
	& \hat{\mathtt{Cl}}_{a} (\dZkr)_{N_X N_Z} = \frac{1}{N_X N_Z} \cdot \\
	&  \cdot \sum_{s=1}^{N_X} \sum_{m=1}^{N_Z} \Big[ \mathbb{I}_{ \{ {z^{r}_{{i}_{1},\cdots,i_{C}}}^{ (m|s) } \in \mathtt{cobs}_{a}(\dZkr) \} } \left( {z^{r}_{{i}_{1}:i_{C}}}^{ (m|s) } \right) \Big]. \nonumber
\end{align}
Meaning, the cumulative likelihood is estimated as the respective ratio of sampled observations that are in favor of the action $a$ out of all the samples.
Given the estimator in \eqref{eq:cumulative-likelihood-estimator} we can calculate probabilistic bounds over the cumulative likelihood of the actions in Steps 2 and 3 of the \textsc{R-VerifyAC-SIMP}, by using Hoeffding Inequality.

\section{Implementation and results}\label{sec:results}
We demonstrate the applicability and performance of our approach by simulating a \emph{search and rescue} application in a disaster ravaged area, and by running the algorithms on real robots data from experiments of \emph{active multi-robot visual SLAM}.
In the \emph{search and rescue} simulations, we compare our algorithms \textsc{EnforceAC},\textsc{R-EnforceAC} and \textsc{R-EnforceAC-SIMP} with two baseline algorithms -- Baseline-I and Baseline-II. In Baseline-I robots do two-way \textsc{comm}s at each time point, and that leads to consistent beliefs of the robots at each time point. In Baseline-II robots do not communicate at all, and end up having inconsistent beliefs at every time step. Simulations were carried out in an Intel Core i7-6500U with 2.5 GHz clock. The algorithms are implemented in \texttt{Julia}.
In the \emph{active multi-robot visual SLAM} experiments, we compare our algorithm \textsc{R-EnforceAC}, in continuous observation and state setting, with two baseline algorithms -- Baseline-I and Baseline-II. Simulations were carried out in an Intel Xeon E5-1620 with 3.50GHz clock. The algorithms are implemented in \texttt{Python}.


\subsection{Search and Rescue in a Disaster Affected Region}\label{subsec:simulation-results}
Consider that a team of two robots $\Gamma=\{r,r'\}$ are engaged in finding an \emph{unknown} number of targets (for example, victims) in a disaster affected region. Task of the robot team is to collaboratively find targets with high confidence (reduced uncertainty) which is facilitated by having different observations at different locations.
Due to poor bandwidth and other connectivity constraints, the robots have limited scope of communication between them. 
The simulations are done in a 2-D occupancy grid sub-divided into discrete cells with unique identifiers: $\{s_{i}\}$ for $i=[1\dots X]$ where $X$ is the maximum index of the cells. 
A target is either \emph{present} or \emph{not present} in a given cell $s_i$ and the presence of a target in $s_i$ is denoted as $x_i=1$, else  $x_i=0$.  

We assume the following for ease of implementation and our algorithm is not limited to these assumptions:
(a) each robot precisely knows the locations  of all the robots in $\Gamma$; thus the belief is the probability distribution over the joint state $x\triangleq \{x_{i}\}$. Denote the known pose of any robot $r\in \Gamma$ at time instant $k$ by $\xi^r_k$; (b) the initial beliefs $b_0^r$ and $b_0^{r'}$ of both robots are given and  consistent, i.e. $b_0^r = b_0^{r'}=b_0$. The cells are initially independent of each other, i.e.~$b_0=\prob{x\mid p_0}=\prod_i \mathbb{P}_0(x_i)$; (c) at any time instant, each robot observes a single cell where it is located.  Based on these assumptions, it is not difficult to show that the cells remain independent of each other at any time instant $k$, i.e.~$b_k=p(x\mid \his_k, \xi^r_{0:k}, \xi^{r'}_{0:k})=\prod_i \prob{x_i \mid \his_k, \xi^r_{0:k}, \xi^{r'}_{0:k}}\triangleq \prod_i b_k[x_i] $ for any history  $\his_k$.

To reduce the uncertainty of target occurrences in the locations of the workspace, we choose an information-theoretic reward function. Specifically, we consider (minus) entropy, which 
measures uncertainty over presence of targets at different locations. Recalling that according to the assumptions above, the cells are independent for any time instant $k$, we get
$\rho(b_k) \triangleq -H[x]=  \sum_i \sum_{j \in \{0,1\}}b_k[x_i=j] \log b_k[x_i=j]$.


For a given epoch $\langle 1,2,\dots, E\rangle$, we do planning using Baseline I, Baseline II and our algorithms \textsc{EnforceAC}, \textsc{R-EnforceAC}, \textsc{R-EnforceAC-simp} at every time step with planning horizon $L=1$.
Each robot has four (\texttt{N}, \texttt{S}, \texttt{E}, \texttt{W}) motion primitives each of which moves the robot to a unit distance in the respective direction.
We consider 3 possible initializations for the prior belief of the robots:
\begin{itemize}
	\item  \textit{MaxEntropy} -- where $\prob{x_i}=0.5$ for all cells,
	\item  \textit{PriorKnowledge} -- where $\prob{x_i}=0.7$ if in the ground truth the cell is occupied and $\prob{x_i}=0.3$ otherwise,
	\item  \textit{Random} -- where each cell is assigned a random probability of occupancy.
\end{itemize}
At any time $k\in [1,E]$, for robot $r$, beliefs $b_k^r$, $b_k^{r'\mid r}$ and $b_k^{r\mid r'\mid r}$ are updated as formulated in Section \ref{subsec:verifyac} for the joint state $x$. 
Also, at some time instances in $E$ there can be a  restriction in communication even if the algorithm decides to communicate.
We denote a scenario with $m$ such time instances by \textit{comm-restr}=$m$, while \textit{comm-restr}=$0$ corresponds to a scenario with no communication restrictions. This restriction is not available, in advance, to any of the algorithms.


\begin{table}[t]
	\caption{Search and Rescue simulation runtime statistics}
	\begin{center}
		\label{table:simulation-runtimes}
		\resizebox{0.49\textwidth}{!}{
			
			%
			\begin{tabular}{|l|c|c|c|}
				\hline
				\textbf{Algorithm} & \textbf{\textit{Max-Entropy}-Init} & \textbf{\textit{Prior-Knowledge}-Init} & \textbf{\textit{Random}-Init} \\
				\hline
				
				{\color{black!40!green}{Baseline-I}} 							& 1.2 $\pm$ 0.06 sec & 1.3 $\pm$ 0.06 sec & 1.4 $\pm$ 0.14 sec \\
				{\color{red}{Baseline-II}} 										& 1.3 $\pm$ 0.07 sec & 1.2 $\pm$ 0.07 sec & 1.3 $\pm$ 0.07 sec \\
				\hline
				
				{\color{blue}\textsc{EnforceAC}} 								& 17.8 $\pm$ 0.70s sec & 13.1 $\pm$ 2.24 sec & 13.4 $\pm$ 0.20 sec \\
				\hline
				
				{\color{orange}\textsc{R-EnforceAC} $(\epsilon=0.3)$} 				& 23.1 $\pm$ 0.85s sec & 18.1 $\pm$ 0.26 sec & 15.5 $\pm$ 0.42 sec \\
				{\color{orange}\textsc{R-EnforceAC} $(\epsilon=0.7)$} 				& 87.7 $\pm$ 4.79s sec & 21.2 $\pm$ 0.72 sec & 29.0 $\pm$ 0.40 sec \\
				{\color{orange}\textsc{R-EnforceAC} $(\epsilon=0.9)$} 				& 408.6 $\pm$ 12.93s sec & 52.7 $\pm$ 0.89 sec & 145.6 $\pm$ 13.18 sec \\
				{\color{orange}\textsc{R-EnforceAC} $(\epsilon=0.999)$} 			& 505.4 $\pm$ 10.70s sec & 140.0 $\pm$ 1.31 sec & > 600.0 sec \\
				\hline
				
				{\color{cyan}\textsc{R-EnforceAC-SIMP} $(\epsilon=0.3)$} 		& 16.5 $\pm$ 3.12 sec & 14.0 $\pm$ 0.21 sec & 13.5 $\pm$ 0.31 sec \\
				{\color{cyan}\textsc{R-EnforceAC-SIMP} $(\epsilon=0.7)$} 		& 48.1 $\pm$ 6.51 sec & 16.3 $\pm$ 0.28 sec & 22.5 $\pm$ 0.34 sec \\
				{\color{cyan}\textsc{R-EnforceAC-SIMP} $(\epsilon=0.9)$} 		& 147.2 $\pm$ 5.16 sec & 29.8 $\pm$ 0.39 sec & 77.3 $\pm$ 0.93 sec \\
				{\color{cyan}\textsc{R-EnforceAC-SIMP} $(\epsilon=0.999)$}	& 218.4 $\pm$ 22.49 sec & 54.0 $\pm$ 0.71 sec & > 600.0 sec \\
				\hline
			\end{tabular}

		}
	\end{center}
	
\end{table}


\begin{table*}[t]
	\caption{\scriptsize Search and Rescue simulation Action Inconsistency and Communications}
	\begin{center}
		\label{table:simulation-performance}
		\resizebox{0.99\textwidth}{!}{
			
			%
			\begin{tabular}{|l|c|c|c|c|c|}
				\hline
				& \multicolumn{5}{c}{\textbf{Action Inconsistencies}} \\
				\cline{2-6}
				\textbf{Algorithm} & \textbf{\textit{Max-Entropy}-Init} & \textbf{\textit{Prior-Knowledge}-Init} & \textbf{\textit{Random}-Init} &\shortstack{\textbf{\textit{Max-Entropy}-Init} \\ \textbf{comm-restric=20}} &  \shortstack{\textbf{\textit{Max-Entropy}-Init} \\ \textbf{comm-restric=30}} \\
				\hline
				
				{\color{black!40!green}{Baseline I}}				& 0 (0.0\%)							& 0 (0.0\%)							& 0 (0.0\%)							& 14 $\pm$ 2 (7.0 $\pm$ 1.0\%)		& 21 $\pm$ 3 (10.5 $\pm$ 1.5\%)		\\
				{\color{red}{Baseline II}}							& 179 $\pm$ 6 (89.5 $\pm$ 3.0\%)	& 180 $\pm$ 5 (90.0 $\pm$ 2.5\%)	& 173 $\pm$ 6 (86.5 $\pm$ 3.0\%)	& 179 $\pm$ 6 (89.5 $\pm$ 3.0\%)	& 179 $\pm$ 6 (89.5 $\pm$ 3.0\%)	\\
				\hline
				{\color{blue}\textsc{EnforceAC}}							& 0 (0.0\%)							& 0 (0.0\%)							& 0 (0.0\%)							& --								& 16 $\pm$ 7 (8.0 $\pm$ 3.5\%)		\\
				\hline
				
				{\color{orange}\textsc{R-EnforceAC} $(\epsilon=0.3)$}			& 0 (0.0\%)							& 0 (0.0\%)							& 0 (0.0\%)							& 14 $\pm$ 1 (7.0 $\pm$ 0.5\%)		& 21 $\pm$ 2 (10.5 $\pm$ 1.0\%)		\\
				{\color{orange}\textsc{R-EnforceAC} $(\epsilon=0.7)$}			& 0 (0.0\%)							& 1 $\pm$ 1 (0.5 $\pm$ 0.5\%)		& 1 $\pm$ 1 (0.5 $\pm$ 0.5\%)		& 14 $\pm$ 1 (7.0 $\pm$ 0.5\%)		& 21 $\pm$ 2 (10.5 $\pm$ 1.0\%)		\\
				{\color{orange}\textsc{R-EnforceAC} $(\epsilon=0.9)$}			& 0 $\pm$ 1 (0.0 $\pm$ 0.5\%)		& 2 $\pm$ 3 (1.0 $\pm$ 1.5\%)		& 2 $\pm$ 1 (1.0 $\pm$ 0.5\%)		& --								& --								\\
				\hline
				
				{\color{cyan}\textsc{R-EnforceAC-SIMP} $(\epsilon=0.3)$}	& 0 (0.0\%)							& 0 (0.0\%)							& 0 (0.0\%)							& 14 $\pm$ 1 (7.0 $\pm$ 0.5\%)		& 21 $\pm$ 2 (10.5 $\pm$ 1.0\%)		\\
				{\color{cyan}\textsc{R-EnforceAC-SIMP} $(\epsilon=0.7)$}	& 0 (0.0\%)							& 0 $\pm$ 1 (0.0 $\pm$ 0.5\%)		& 1 $\pm$ 1 (0.5 $\pm$ 0.5\%)		& 14 $\pm$ 1 (7.0 $\pm$ 0.5\%)		& 21 $\pm$ 2 (10.5 $\pm$ 1.0\%)		\\
				{\color{cyan}\textsc{R-EnforceAC-SIMP} $(\epsilon=0.9)$}	& 0 $\pm$ 1 (0.0 $\pm$ 0.5\%)		& 2 $\pm$ 3 (1.0 $\pm$ 1.5\%)		& 2 $\pm$ 1 (1.0 $\pm$ 0.5\%)		& 15 $\pm$ 2 (7.5 $\pm$ 1.0\%)		& --								\\
				\hline
				\hline
				
				& \multicolumn{5}{c}{\textbf{Communications}} \\
				\cline{2-6}
				\textbf{Algorithm} & \textbf{\textit{Max-Entropy}-Init} & \textbf{\textit{Prior-Knowledge}-Init} & \textbf{\textit{Random}-Init} & \shortstack{\textbf{\textit{Max-Entropy}-Init} \\ \textbf{comm-restric=20}} &  \shortstack{\textbf{\textit{Max-Entropy}-Init} \\ \textbf{comm-restric=30}} \\
				\hline
				
				{\color{black!40!green}\textsc{Baseline I}}					& 400 (100.0\%)						& 400 (100.0\%)						& 400 (100.0\%)							& 360 (90.0\%)							& 340 (85.0\%) \\
				{\color{red}\textsc{Baseline II}}							& 0 (0.0\%)							& 0 (0.0\%)							& 0 (0.0\%)								& 0 (0.0\%)								& 0 (0.0\%) \\
				\hline	
				{\color{blue}\textsc{EnforceAC}}							& 238 $\pm$ 10 (59.5 $\pm$ 2.5\%)	& 250 $\pm$ 8 (62.5 $\pm$ 2.0\%)	& 241 $\pm$ 7 (60.25 $\pm$ 1.75\%)		& --									& 169 $\pm$ 59 (42.25 $\pm$ 14.75\%) \\
				\hline	
				{\color{orange}\textsc{R-EnforceAC} $(\epsilon=0.3)$}			& 215 $\pm$ 9 (53.75 $\pm$ 2.25\%)	& 225 $\pm$ 10 (56.25 $\pm$ 2.5\%)	& 237 $\pm$ 12 (59.25 $\pm$ 3.0\%)		& 194 $\pm$ 12 (48.5 $\pm$ 3.0\%)		& 185 $\pm$ 12 (46.25 $\pm$ 3.0\%) \\
				{\color{orange}\textsc{R-EnforceAC} $(\epsilon=0.7)$}			& 213 $\pm$ 10 (53.25 $\pm$ 2.5\%)	& 214 $\pm$ 5 (53.5 $\pm$ 1.25\%)	& 205 $\pm$ 13 (51.25 $\pm$ 3.25\%)		& 191 $\pm$ 13 (47.75 $\pm$ 3.25\%)		& 182 $\pm$ 12 (45.5 $\pm$ 3.0\%) \\
				{\color{orange}\textsc{R-EnforceAC} $(\epsilon=0.9)$}			& 175 $\pm$ 7 (43.75 $\pm$ 1.75\%)	& 167 $\pm$ 9 (41.75 $\pm$ 2.25\%)	& 169 $\pm$ 8 (42.25 $\pm$ 2.0\%)		& --									& -- \\
				\hline	
				{\color{cyan}\textsc{R-EnforceAC-SIMP} $(\epsilon=0.3)$}	& 218 $\pm$ 10 (54.5 $\pm$ 2.5\%)	& 225 $\pm$ 10 (56.25 $\pm$ 2.5\%)	& 237 $\pm$ 12 (59.25 $\pm$ 3.0\%)		& 197 $\pm$ 9 (49.25 $\pm$ 2.25\%)		& 188 $\pm$ 9 (47.0 $\pm$ 2.25\%) \\
				{\color{cyan}\textsc{R-EnforceAC-SIMP} $(\epsilon=0.7)$}	& 213 $\pm$ 10 (53.25 $\pm$ 2.5\%)	& 200 $\pm$ 7 (50.0 $\pm$ 1.75\%)	& 205 $\pm$ 13 (51.25 $\pm$ 3.25\%)		& 193 $\pm$ 10 (48.25 $\pm$ 2.5\%)		& 184 $\pm$ 10 (46.0 $\pm$ 2.5\%) \\
				{\color{cyan}\textsc{R-EnforceAC-SIMP} $(\epsilon=0.9)$}	& 182 $\pm$ 7 (45.5 $\pm$ 1.75\%)	& 167 $\pm$ 9 (41.75 $\pm$ 2.25\%)	& 169 $\pm$ 8 (42.25 $\pm$ 2.0\%)		& 162 $\pm$ 8 (40.5 $\pm$ 2.0\%)		& -- \\
				\hline
				
			\end{tabular}

		}
		
	\end{center}
	
	
\end{table*}

We run Baseline I, Baseline II, \textsc{EnforceAC}, \textsc{R-EnforceAC} and \textsc{R-EnforceAC-simp} with the above setting for epoch $E=200$ without communication restrictions, i.e.~\textit{comm-restr} = 0, with several communication restrictions (\textit{comm-restr} = 20 and \textit{comm-restr} = 30).



\begin{figure*}[t]
	\centering
	
	
	\subfigure[]
	{
		\resizebox{0.45\linewidth}{!}
		{
			\begin{tikzpicture}
				\begin{axis}[
					legend style={at={(0.5, 1.2)}, anchor=north, legend columns=3},
					xlabel={\Large Time steps},
					ylabel={\Large Return Value},
					xmin=0, xmax=200,
					ymin=-62, ymax=-18,
					xtick={0, 50, 100, 150, 200},
					ytick={ -60, -50, -40, -30, -20},
					ymajorgrids=true,
					grid style=dashed,
					height=0.34\textheight,
					width=0.75\textwidth,
					]
					
					\addplot[mark size=1pt, color=black!40!green, style=thick,]
					coordinates {
						(0, -61.086430205489336) (1, -60.61824192389243) (2, -60.458597688315535) (3, -60.15005364229551) (4, -59.6818653606986) (5, -59.5222211251217) (6, -59.096264222234595) (7, -58.727537000264924) (8, -58.727537000264924) (9, -58.87643681070805) (10, -58.522707890946045) (11, -58.87643681070805) (12, -58.727537000264924) (13, -58.56789276468803) (14, -58.727537000264924) (15, -58.56789276468803) (16, -58.30158009737781) (17, -57.8756231944907) (18, -57.49189767031337) (19, -57.33225343473648) (20, -56.86406515313956) (21, -56.555521107119546) (22, -56.08733282552264) (23, -55.927688589945745) (24, -55.45950030834883) (25, -55.45950030834882) (26, -55.45950030834882) (27, -55.048541707669365) (28, -54.58035342607245) (29, -54.11216514447554) (30, -53.64397686287863) (31, -53.175788581281715) (32, -52.7076002996848) (33, -52.23941201808789) (34, -52.079767782511) (35, -51.61157950091409) (36, -51.200620900234625) (37, -50.7746639973475) (38, -50.306475715750594) (39, -50.1468314801737) (40, -49.98718724459682) (41, -49.67864319857679) (42, -49.370099152556776) (43, -48.90191087095987) (44, -48.43372258936296) (45, -47.96553430776605) (46, -47.49734602616913) (47, -47.029157744572224) (48, -46.56096946297531) (49, -46.0927811813784) (50, -45.66682427849128) (51, -45.549411421624185) (52, -45.492181740706734) (53, -45.22586907339651) (54, -44.91732502737649) (55, -44.91732502737649) (56, -44.7576807917996) (57, -44.28949251020269) (58, -43.98094846418267) (59, -43.597222940005345) (60, -43.17126603711823) (61, -43.17126603711823) (62, -43.01162180154134) (63, -42.58566489865422) (64, -42.319352231344) (65, -42.3615836100538) (66, -42.3615836100538) (67, -42.2441707531867) (68, -42.1267578963196) (69, -41.86044522900937) (70, -41.44948662832991) (71, -40.981298346733) (72, -40.92406866581554) (73, -40.498111762928424) (74, -40.1895677169084) (75, -39.76361081402129) (76, -39.6039665784444) (77, -39.17800967555728) (78, -39.17800967555728) (79, -38.99913720823595) (80, -38.83949297265905) (81, -38.50097626976083) (82, -38.19243222374081) (83, -38.458744891051026) (84, -38.299100655474135) (85, -38.35633033639159) (86, -37.98760311442193) (87, -37.93037343350448) (88, -37.91537513129681) (89, -37.91537513129682) (90, -37.649062463986596) (91, -37.34051841796658) (92, -36.914561515079455) (93, -36.48860461219233) (94, -36.1923192880039) (95, -35.80859376382658) (96, -35.51230843963815) (97, -35.49731013743049) (98, -35.11358461325317) (99, -34.7448573912835) (100, -35.128582915460825) (101, -35.11358461325317) (102, -34.729859089075845) (103, -34.55098662175451) (104, -34.34214149755498) (105, -33.916184594667854) (106, -33.70733947046831) (107, -33.66510809175852) (108, -33.36882276757009) (109, -33.353824465362436) (110, -33.266384265373546) (111, -33.087511798052205) (112, -33.129743176762005) (113, -32.79122647386378) (114, -32.76125381698557) (115, -32.46496849279714) (116, -32.19865582548692) (117, -31.814930301309595) (118, -31.814930301309595) (119, -31.47641359841137) (120, -31.092688074234047) (121, -30.88384295003451) (122, -30.587557625846078) (123, -30.587557625846078) (124, -30.119369344249165) (125, -29.65118106265225) (126, -29.267455538474927) (127, -28.84149863558781) (128, -28.45777311141049) (129, -28.074047587233167) (130, -27.865202463033626) (131, -27.65635733883409) (132, -27.360072014645656) (133, -27.063786690457228) (134, -27.106018069167025) (135, -26.809732744978596) (136, -26.341544463381677) (137, -26.254104263392787) (138, -26.296335642102584) (139, -26.296335642102584) (140, -26.17892278523549) (141, -26.061509928368388) (142, -25.81019556326582) (143, -25.76796418455602) (144, -26.17892278523549) (145, -25.752965882348366) (146, -25.444421836328345) (147, -25.574093415027036) (148, -25.10590513343012) (149, -24.86642273355882) (150, -24.749009876691723) (151, -24.4826972093815) (152, -24.482697209381495) (153, -24.05674030649438) (154, -23.588552024897467) (155, -23.292266700709035) (156, -22.908541176531713) (157, -22.908541176531713) (158, -23.37672945812863) (159, -22.95077255524151) (160, -22.95077255524151) (161, -22.612255852343285) (162, -22.65448723105308) (163, -23.080444133940198) (164, -22.920799898363306) (165, -22.65448723105308) (166, -22.537074374185984) (167, -22.35820190686465) (168, -21.93224500397753) (169, -21.54851947980021) (170, -21.54851947980021) (171, -21.678191058498896) (172, -21.294465534321574) (173, -21.25223415561178) (174, -20.95594883142335) (175, -20.998180210133143) (176, -20.789335085933608) (177, -20.7593624290554) (178, -20.550517304855866) (179, -20.26923028287509) (180, -20.695187185762208) (181, -20.767629083543444) (182, -20.65021622667635) (183, -20.471343759355015) (184, -20.58875661622211) (185, -20.250239913323885) (186, -20.13282705645679) (187, -20.471343759355015) (188, -20.58875661622211) (189, -20.250239913323885) (190, -19.824283010436766) (191, -19.736842810447875) (192, -19.866514389146563) (193, -19.953954589135456) (194, -19.67224081055436) (195, -19.43275841068306) (196, -19.34531821069417) (197, -19.10583581082287) (198, -18.896990686623333) (199, -18.896990686623333) (200, -18.984430886612223)
					};
					\addlegendentry{ Baseline I }
					
					\addplot[mark size=1pt, color=red, style=thick,]
					coordinates {
						(0, -61.086430205489336) (1, -60.85233606469088) (2, -60.926785969912444) (3, -60.85233606469088) (4, -60.61824192389243) (5, -60.384147783093965) (6, -60.15005364229551) (7, -59.915959501497056) (8, -59.990409406718626) (9, -59.81354494683761) (10, -59.88799485205918) (11, -59.81354494683761) (12, -59.57945080603916) (13, -59.75631526592016) (14, -59.681865360698595) (15, -59.44777121990014) (16, -59.213677079101686) (17, -58.979582938303224) (18, -58.74548879750477) (19, -58.81993870272633) (20, -58.58584456192788) (21, -58.511394656706315) (22, -58.277300515907854) (23, -58.351750421129424) (24, -58.11765628033096) (25, -58.043206375109406) (26, -58.11765628033096) (27, -57.883562139532515) (28, -57.809112234310945) (29, -57.5750180935125) (30, -57.340923952714036) (31, -57.10682981191558) (32, -56.91496704982693) (33, -56.680872909028466) (34, -56.755322814250036) (35, -56.680872909028466) (36, -56.44677876823001) (37, -56.21268462743156) (38, -55.9785904866331) (39, -56.053040391854665) (40, -56.12749029707622) (41, -56.053040391854665) (42, -55.9785904866331) (43, -55.74449634583465) (44, -55.510402205036186) (45, -55.27630806423774) (46, -55.04221392343928) (47, -54.80811978264082) (48, -54.57402564184237) (49, -54.33993150104391) (50, -54.10583736024546) (51, -54.180287265467015) (52, -54.10583736024546) (53, -53.871743219447) (54, -53.63764907864855) (55, -53.40355493785009) (56, -53.47800484307166) (57, -53.2439107022732) (58, -53.16946079705164) (59, -52.93536665625318) (60, -52.701272515454725) (61, -52.50940975336606) (62, -52.58385965858763) (63, -52.349765517789166) (64, -52.27531561256761) (65, -52.08345285047895) (66, -51.84935870968049) (67, -51.657495947591826) (68, -51.42340180679337) (69, -51.18930766599492) (70, -50.955213525196456) (71, -50.72111938439801) (72, -50.48702524359955) (73, -50.25293110280109) (74, -50.01883696200264) (75, -49.784742821204176) (76, -49.55064868040572) (77, -49.31655453960727) (78, -49.39100444482883) (79, -49.156910304030376) (80, -49.082460398808806) (81, -48.89059763672015) (82, -48.6565034959217) (83, -48.73095340114325) (84, -48.80540330636482) (85, -48.62853884648382) (86, -48.45167438660281) (87, -48.217580245804356) (88, -47.983486105005895) (89, -47.74939196420745) (90, -47.515297823408986) (91, -47.28120368261053) (92, -47.04710954181208) (93, -46.813015401013615) (94, -46.57892126021516) (95, -46.344827119416706) (96, -46.110732978618245) (97, -45.87663883781979) (98, -45.642544697021336) (99, -45.450681934932675) (100, -45.52513184015424) (101, -45.333269078065584) (102, -45.258819172844014) (103, -45.15439661074424) (104, -44.920302469945796) (105, -44.815879907846025) (106, -44.99274436772703) (107, -44.76195242831143) (108, -44.68750252308987) (109, -44.4956397610012) (110, -44.57008966622277) (111, -44.80088160563837) (112, -44.7264317004168) (113, -44.903296160297806) (114, -44.66920201949935) (115, -44.59475211427778) (116, -44.40288935218912) (117, -44.168795211390666) (118, -44.40288935218912) (119, -44.168795211390666) (120, -43.97693244930201) (121, -43.785069687213344) (122, -43.59320692512469) (123, -43.82730106592314) (124, -43.722878503823374) (125, -43.48878436302492) (126, -43.38436180092515) (127, -43.19249903883649) (128, -43.08807647673672) (129, -42.896213714648056) (130, -42.704350952559395) (131, -42.512488190470734) (132, -42.32062542838207) (133, -42.12876266629341) (134, -42.362856807091866) (135, -42.170994045003205) (136, -41.93689990420475) (137, -42.170994045003205) (138, -42.24544395022477) (139, -42.068579490343765) (140, -41.83778755092817) (141, -41.64592478883951) (142, -41.88001892963797) (143, -41.954468834859526) (144, -41.88001892963797) (145, -41.64592478883951) (146, -41.45406202675085) (147, -41.34963946465108) (148, -41.245216902551306) (149, -41.197597264768675) (150, -41.093174702668904) (151, -40.98875214056913) (152, -41.22284628136759) (153, -40.98875214056913) (154, -40.79688937848047) (155, -40.60502661639181) (156, -40.41316385430315) (157, -40.64725799510161) (158, -40.721707900323175) (159, -40.64725799510161) (160, -40.721707900323175) (161, -40.64725799510161) (162, -40.41316385430315) (163, -40.64725799510161) (164, -40.721707900323175) (165, -40.64725799510161) (166, -40.721707900323175) (167, -40.64725799510161) (168, -40.41316385430315) (169, -40.22130109221449) (170, -40.45539523301294) (171, -40.689489373811405) (172, -40.45539523301295) (173, -40.22130109221449) (174, -40.02943833012583) (175, -40.26353247092428) (176, -40.07166970883562) (177, -40.146119614057184) (178, -40.04169705195742) (179, -39.86483259207641) (180, -40.05669535416508) (181, -40.23355981404608) (182, -40.04169705195742) (183, -39.96724714673586) (184, -40.15910990882452) (185, -39.92501576802606) (186, -39.733153005937396) (187, -39.96724714673586) (188, -40.15910990882452) (189, -39.92501576802606) (190, -39.733153005937396) (191, -39.541290243848735) (192, -39.436867681748964) (193, -39.628730443837625) (194, -39.436867681748964) (195, -39.2450049196603) (196, -39.479099060458765) (197, -39.287236298370104) (198, -39.05314215757164) (199, -38.94871959547187) (200, -39.18281373627033)
					};
					\addlegendentry{ Baseline II }
					
					\addplot[mark size = 0.5pt, color=blue, style=very thick,]
					coordinates {
						(0, -61.086430205489336) (1, -60.85233606469088) (2, -60.69269182911398) (3, -60.384147783093965) (4, -59.915959501497056) (5, -59.44777121990015) (6, -59.25590845781148) (7, -58.904401460145934) (8, -58.96163114106338) (9, -58.80198690548649) (10, -58.87643681070805) (11, -58.80198690548649) (12, -58.80198690548649) (13, -58.727537000264924) (14, -58.49344285946647) (15, -58.49344285946647) (16, -58.259348718668015) (17, -58.06748595657935) (18, -57.77120063239092) (19, -57.813432011100716) (20, -57.34524372950381) (21, -56.8770554479069) (22, -56.408867166309975) (23, -56.24922293073308) (24, -55.940678884713066) (25, -55.47249060311616) (26, -55.54694050833771) (27, -55.31284636753926) (28, -54.9018877668598) (29, -54.667793626061346) (30, -53.96551120366597) (31, -53.731417062867514) (32, -53.02913464047215) (33, -52.56094635887524) (34, -52.6353962640968) (35, -52.34407244238089) (36, -52.269622537159336) (37, -52.035528396360874) (38, -51.72698435034086) (39, -51.80143425556242) (40, -51.44992725789687) (41, -51.18361459058664) (42, -50.91730192327642) (43, -50.4913450203893) (44, -50.25725087959084) (45, -49.63943121461507) (46, -49.40533707381661) (47, -48.78751740884084) (48, -48.55342326804238) (49, -47.9356036030666) (50, -47.70150946226815) (51, -47.350002464602596) (52, -47.115908323804135) (53, -46.65773289440525) (54, -46.423638753606795) (55, -45.980699488608806) (56, -46.24701215591903) (57, -45.82105525303191) (58, -45.642182785710574) (59, -45.303666082812356) (60, -44.96514937991412) (61, -44.68343560133304) (62, -44.86230806865437) (63, -44.68343560133304) (64, -44.401721822751945) (65, -44.06320511985372) (66, -43.829110979055265) (67, -44.06320511985372) (68, -43.94579226298662) (69, -43.75392950089796) (70, -43.519835360099506) (71, -43.327972598010845) (72, -43.09387845721239) (73, -42.859784316413936) (74, -42.31714612959546) (75, -41.89118922670834) (76, -41.465232323821226) (77, -41.465232323821226) (78, -41.305588088244335) (79, -41.0392754209341) (80, -40.613318518046995) (81, -40.37922437724853) (82, -40.18736161515987) (83, -40.42145575595833) (84, -40.49590566117989) (85, -40.336261425603) (86, -40.159396965721996) (87, -40.21461863919912) (88, -40.391483099080126) (89, -40.15738895828167) (90, -39.97851649096034) (91, -39.78665372887168) (92, -39.21404288517499) (93, -39.02218012308633) (94, -38.49180065809944) (95, -38.25770651730098) (96, -37.87157017662012) (97, -37.63747603582166) (98, -37.10709657083477) (99, -36.873002430036316) (100, -37.10709657083477) (101, -36.94745233525788) (102, -36.713358194459424) (103, -36.521495432370756) (104, -35.786994483463616) (105, -35.403268959286294) (106, -35.21140619719763) (107, -35.06177481381877) (108, -34.67804928964145) (109, -34.252092386754335) (110, -34.48618652755279) (111, -34.29432376546413) (112, -33.57208153838858) (113, -33.57208153838858) (114, -33.38021877629992) (115, -33.27579621420015) (116, -33.08393345211149) (117, -32.595785365834395) (118, -32.725456944533086) (119, -32.533594182444425) (120, -32.09065491744643) (121, -31.898792155357768) (122, -31.455852890359775) (123, -31.530302795581342) (124, -31.087363530583346) (125, -31.01291362536178) (126, -30.77881948456332) (127, -30.10178607876687) (128, -29.633597797169955) (129, -29.165409515573046) (130, -28.73945261268593) (131, -28.682222931768475) (132, -28.448128790970017) (133, -27.94772198286134) (134, -27.83030912599424) (135, -27.59621498519578) (136, -27.521765079974216) (137, -27.225479755785784) (138, -27.195507098907584) (139, -27.078094242040482) (140, -26.79680722005971) (141, -26.617934752738375) (142, -26.57900557541144) (143, -26.653455480633006) (144, -26.165307394355914) (145, -26.090857489134347) (146, -25.898994727045686) (147, -25.560478024147457) (148, -25.560478024147457) (149, -25.320995624276158) (150, -25.216573062176387) (151, -25.02471030008773) (152, -25.154381878786417) (153, -24.81586517588819) (154, -24.57638277601689) (155, -24.336900376145593) (156, -24.520072677778547) (157, -24.754166818577005) (158, -24.428135603596132) (159, -24.194041462797674) (160, -24.10090628989663) (161, -23.909043527807974) (162, -23.652578765825794) (163, -23.99109546872402) (164, -24.065545373945586) (165, -23.99109546872402) (166, -24.065545373945582) (167, -23.83905326884161) (168, -23.604959128043156) (169, -23.317857090389218) (170, -23.50971985247788) (171, -23.591771793393924) (172, -23.35767765259547) (173, -23.16581489050681) (174, -23.14344426932309) (175, -23.33530703141175) (176, -22.804927566424865) (177, -23.03902170722332) (178, -22.69511674525225) (179, -22.721947012691178) (180, -22.721947012691174) (181, -22.881591248268066) (182, -22.45563434538095) (183, -22.221540204582496) (184, -22.45563434538095) (185, -22.029677442493835) (186, -21.837814680405174) (187, -22.07190882120363) (188, -22.306002962002086) (189, -22.07190882120363) (190, -21.880046059114967) (191, -21.34966659412808) (192, -21.47933817282677) (193, -21.78788221884679) (194, -21.521569551536565) (195, -21.05338126993965) (196, -21.011149891229852) (197, -20.81928712914119) (198, -20.24369884287521) (199, -19.85997331869789) (200, -20.094067459496344)
					};
					\addlegendentry{ \textsc{EnforceAC} }
					
					\addplot[mark size = 0.5pt, color=orange, style= very thick,]
					coordinates {
						(0, -61.086430205489336) (1, -60.85233606469088) (2, -60.69269182911398) (3, -60.61824192389243) (4, -59.915959501497056) (5, -59.6818653606986) (6, -59.33035836303305) (7, -58.904401460145934) (8, -59.13849560094439) (9, -58.80198690548649) (10, -58.87643681070805) (11, -58.80198690548649) (12, -58.80198690548649) (13, -58.727537000264924) (14, -58.49344285946647) (15, -58.259348718668015) (16, -58.49344285946647) (17, -58.06748595657935) (18, -57.8756231944907) (19, -57.813432011100716) (20, -57.57933787030226) (21, -56.8770554479069) (22, -56.642961307108436) (23, -56.24922293073308) (24, -55.940678884713066) (25, -55.47249060311616) (26, -55.621390413559276) (27, -55.31284636753926) (28, -54.9018877668598) (29, -54.43369948526289) (30, -54.19960534446443) (31, -53.49732292206906) (32, -53.263228781270605) (33, -52.56094635887524) (34, -52.6353962640968) (35, -52.34407244238089) (36, -52.269622537159336) (37, -52.195172631937766) (38, -51.96107849113931) (39, -51.567340114763965) (40, -51.64179001998552) (41, -51.567340114763965) (42, -50.91730192327642) (43, -50.4913450203893) (44, -50.25725087959084) (45, -49.63943121461507) (46, -49.21347431172795) (47, -48.78751740884084) (48, -48.55342326804238) (49, -47.9356036030666) (50, -47.70150946226815) (51, -47.350002464602596) (52, -47.115908323804135) (53, -46.65773289440525) (54, -46.423638753606795) (55, -45.980699488608806) (56, -46.24701215591903) (57, -46.012918015120576) (58, -45.642182785710574) (59, -45.303666082812356) (60, -45.069571942013894) (61, -44.83547780121545) (62, -44.80550514433723) (63, -44.626632677015905) (64, -44.34491889843481) (65, -44.06320511985372) (66, -43.93353354115503) (67, -43.93353354115503) (68, -43.97576491986483) (69, -43.78390215777617) (70, -43.54980801697771) (71, -43.432395160110616) (72, -43.198301019312154) (73, -42.96420687851371) (74, -42.22970592960657) (75, -41.89118922670834) (76, -41.65709508590989) (77, -41.567244069417455) (78, -41.44983121255036) (79, -41.18351854524013) (80, -40.84500184234191) (81, -40.84500184234191) (82, -40.37681356074499) (83, -40.610907701543454) (84, -40.72758898547481) (85, -40.80203889069637) (86, -40.62517443081536) (87, -40.55072452559381) (88, -40.301632082587695) (89, -40.06753794178924) (90, -40.03531941527747) (91, -39.84345665318881) (92, -39.552132831472896) (93, -39.360270069384235) (94, -38.85986326127555) (95, -38.625769120477095) (96, -37.96571807679152) (97, -37.4975297951946) (98, -37.26343565439615) (99, -36.56115323200077) (100, -36.869697278020794) (101, -36.869697278020794) (102, -36.795247372799224) (103, -36.56115323200077) (104, -35.75147080493633) (105, -35.559608042847664) (106, -34.94178837787189) (107, -35.01623828309345) (108, -34.44064999682747) (109, -34.366200091605904) (110, -34.304008908215934) (111, -34.06991476741747) (112, -33.626975502419484) (113, -33.43511274033082) (114, -33.24324997824216) (115, -33.285481356951955) (116, -33.093618594863294) (117, -32.47579892988752) (118, -32.70989307068598) (119, -32.221744984408886) (120, -31.795788081521764) (121, -31.603925319433102) (122, -31.02833703316712) (123, -31.262431173965577) (124, -30.774283087688488) (125, -30.540188946890034) (126, -30.05204086061294) (127, -29.817946719814483) (128, -29.242358433548496) (129, -28.85863290937118) (130, -28.520116206472952) (131, -28.328253444384288) (132, -28.22383088228452) (133, -27.417125897789383) (134, -27.374894519079586) (135, -27.183031756990925) (136, -26.52298071330535) (137, -26.714843475394012) (138, -26.438517955885757) (139, -26.67261209668421) (140, -26.438517955885757) (141, -26.2044238150873) (142, -26.119229484731967) (143, -26.278873720308866) (144, -26.2044238150873) (145, -25.81068543871195) (146, -25.50214139269193) (147, -25.31027863060327) (148, -25.11841586851461) (149, -24.62025493003949) (150, -24.428392167950825) (151, -24.483613841427953) (152, -24.717707982226408) (153, -24.145097138529728) (154, -23.911002997731266) (155, -23.422854911454177) (156, -23.230992149365516) (157, -22.95168918728796) (158, -22.95168918728796) (159, -22.847266625188194) (160, -23.123592144696445) (161, -23.01916958259668) (162, -22.78507544179822) (163, -22.78507544179822) (164, -23.01916958259668) (165, -22.872515641787114) (166, -22.94696554700868) (167, -22.899345909226042) (168, -22.824896004004476) (169, -22.184705717844974) (170, -22.25915562306654) (171, -22.15473306096677) (172, -22.03266351796257) (173, -21.928240955862798) (174, -21.694146815064343) (175, -21.838389939370362) (176, -21.6465271772817) (177, -21.542104615181934) (178, -21.437682053082163) (179, -21.462931069681083) (180, -21.567353631780854) (181, -21.333259490982396) (182, -21.166645745492655) (183, -20.97478298340399) (184, -21.07920554550376) (185, -20.870360421304223) (186, -20.661515297104685) (187, -20.765937859204456) (188, -21.22411328860334) (189, -21.03225052651468) (190, -20.661515297104685) (191, -20.613895659322047) (192, -20.52404464282961) (193, -20.628467204929382) (194, -20.623078945856538) (195, -20.51865638375677) (196, -20.197122042969422) (197, -19.90083671878099) (198, -19.853217080998352) (199, -19.337409161618798) (200, -19.38502879940144)
					};
					\addlegendentry{ \textsc{R-EnforceAC} ($\epsilon=0.3$) }

					\addplot[mark size = 0.5pt, color=cyan, style=thick,]
					coordinates {
						(0, -61.086430205489336) (1, -60.85233606469088) (2, -60.69269182911398) (3, -60.61824192389243) (4, -59.915959501497056) (5, -59.6818653606986) (6, -59.33035836303305) (7, -58.904401460145934) (8, -59.13849560094439) (9, -58.80198690548649) (10, -58.87643681070805) (11, -58.80198690548649) (12, -58.80198690548649) (13, -58.727537000264924) (14, -58.49344285946647) (15, -58.259348718668015) (16, -58.49344285946647) (17, -58.06748595657935) (18, -57.8756231944907) (19, -57.813432011100716) (20, -57.57933787030226) (21, -56.8770554479069) (22, -56.642961307108436) (23, -56.24922293073308) (24, -55.940678884713066) (25, -55.47249060311616) (26, -55.621390413559276) (27, -55.31284636753926) (28, -54.9018877668598) (29, -54.43369948526289) (30, -54.19960534446443) (31, -53.49732292206906) (32, -53.263228781270605) (33, -52.56094635887524) (34, -52.6353962640968) (35, -52.34407244238089) (36, -52.269622537159336) (37, -52.195172631937766) (38, -51.96107849113931) (39, -51.567340114763965) (40, -51.64179001998552) (41, -51.567340114763965) (42, -50.91730192327642) (43, -50.4913450203893) (44, -50.25725087959084) (45, -49.63943121461507) (46, -49.21347431172795) (47, -48.78751740884084) (48, -48.55342326804238) (49, -47.9356036030666) (50, -47.70150946226815) (51, -47.350002464602596) (52, -47.115908323804135) (53, -46.65773289440525) (54, -46.423638753606795) (55, -45.980699488608806) (56, -46.24701215591903) (57, -46.012918015120576) (58, -45.642182785710574) (59, -45.303666082812356) (60, -45.069571942013894) (61, -44.83547780121545) (62, -44.80550514433723) (63, -44.626632677015905) (64, -44.34491889843481) (65, -44.06320511985372) (66, -43.93353354115503) (67, -43.93353354115503) (68, -43.97576491986483) (69, -43.78390215777617) (70, -43.54980801697771) (71, -43.432395160110616) (72, -43.198301019312154) (73, -42.96420687851371) (74, -42.22970592960657) (75, -41.89118922670834) (76, -41.65709508590989) (77, -41.567244069417455) (78, -41.44983121255036) (79, -41.18351854524013) (80, -40.84500184234191) (81, -40.84500184234191) (82, -40.37681356074499) (83, -40.610907701543454) (84, -40.72758898547481) (85, -40.80203889069637) (86, -40.62517443081536) (87, -40.55072452559381) (88, -40.301632082587695) (89, -40.06753794178924) (90, -40.03531941527747) (91, -39.84345665318881) (92, -39.552132831472896) (93, -39.360270069384235) (94, -38.85986326127555) (95, -38.625769120477095) (96, -37.96571807679152) (97, -37.4975297951946) (98, -37.26343565439615) (99, -36.56115323200077) (100, -36.869697278020794) (101, -36.869697278020794) (102, -36.795247372799224) (103, -36.56115323200077) (104, -35.75147080493633) (105, -35.559608042847664) (106, -34.94178837787189) (107, -35.01623828309345) (108, -34.44064999682747) (109, -34.366200091605904) (110, -34.304008908215934) (111, -34.06991476741747) (112, -33.626975502419484) (113, -33.43511274033082) (114, -33.24324997824216) (115, -33.285481356951955) (116, -33.093618594863294) (117, -32.47579892988752) (118, -32.70989307068598) (119, -32.221744984408886) (120, -31.795788081521764) (121, -31.603925319433102) (122, -31.02833703316712) (123, -31.262431173965577) (124, -30.774283087688488) (125, -30.540188946890034) (126, -30.05204086061294) (127, -29.817946719814483) (128, -29.242358433548496) (129, -28.85863290937118) (130, -28.520116206472952) (131, -28.328253444384288) (132, -28.22383088228452) (133, -27.417125897789383) (134, -27.374894519079586) (135, -27.183031756990925) (136, -26.52298071330535) (137, -26.714843475394012) (138, -26.438517955885757) (139, -26.67261209668421) (140, -26.438517955885757) (141, -26.2044238150873) (142, -26.119229484731967) (143, -26.278873720308866) (144, -26.2044238150873) (145, -25.81068543871195) (146, -25.50214139269193) (147, -25.31027863060327) (148, -25.11841586851461) (149, -24.62025493003949) (150, -24.428392167950825) (151, -24.483613841427953) (152, -24.717707982226408) (153, -24.145097138529728) (154, -23.911002997731266) (155, -23.422854911454177) (156, -23.230992149365516) (157, -22.95168918728796) (158, -22.95168918728796) (159, -22.847266625188194) (160, -23.123592144696445) (161, -23.01916958259668) (162, -22.78507544179822) (163, -22.78507544179822) (164, -23.01916958259668) (165, -22.872515641787114) (166, -22.94696554700868) (167, -22.899345909226042) (168, -22.824896004004476) (169, -22.184705717844974) (170, -22.25915562306654) (171, -22.15473306096677) (172, -22.03266351796257) (173, -21.928240955862798) (174, -21.694146815064343) (175, -21.838389939370362) (176, -21.6465271772817) (177, -21.542104615181934) (178, -21.437682053082163) (179, -21.462931069681083) (180, -21.567353631780854) (181, -21.333259490982396) (182, -21.166645745492655) (183, -20.97478298340399) (184, -21.07920554550376) (185, -20.870360421304223) (186, -20.661515297104685) (187, -20.765937859204456) (188, -21.22411328860334) (189, -21.03225052651468) (190, -20.661515297104685) (191, -20.613895659322047) (192, -20.52404464282961) (193, -20.628467204929382) (194, -20.623078945856538) (195, -20.51865638375677) (196, -20.197122042969422) (197, -19.90083671878099) (198, -19.853217080998352) (199, -19.337409161618798) (200, -19.38502879940144)
					};
					\addlegendentry{ \textsc{R-EnforceAC-simp} ($\epsilon=0.3$) }
					
				\end{axis}
				\label{plot:simulation-results-objectives}
			\end{tikzpicture}

			
		}
	}
	\subfigure[]
	{
		\resizebox{0.45\linewidth}{!}
		{
			\begin{tikzpicture}
				\begin{axis}[
					legend style={at={(0.5, 1.2)}, anchor=north, legend columns=3},
					xlabel={\Large Time steps},
					ylabel={\Large Time steps since last \textsc{comm} ($p$)},
					xmin=0, xmax=200,
					ymin=-0.1, ymax=5,
					xtick={0, 50, 100, 150, 200},
					ytick={0, 1, 2, 3, 4, 5},
					ymajorgrids=true,
					grid style=dashed,
					height=0.33\textheight,
					width=0.75\textwidth,
					]
					
					\addplot[mark size=1pt, color=black!40!green, style=thick,]
					coordinates {
						(0, 0) (1, 0) (2, 0) (3, 0) (4, 0) (5, 0) (6, 0) (7, 0) (8, 0) (9, 0) (10, 0) (11, 0) (12, 0) (13, 0) (14, 0) (15, 0) (16, 0) (17, 0) (18, 0) (19, 0) (20, 0) (21, 0) (22, 0) (23, 0) (24, 0) (25, 0) (26, 0) (27, 0) (28, 0) (29, 0) (30, 0) (31, 0) (32, 0) (33, 0) (34, 0) (35, 0) (36, 0) (37, 0) (38, 0) (39, 0) (40, 0) (41, 0) (42, 0) (43, 0) (44, 0) (45, 0) (46, 0) (47, 0) (48, 0) (49, 0) (50, 0) (51, 0) (52, 0) (53, 0) (54, 0) (55, 0) (56, 0) (57, 0) (58, 0) (59, 0) (60, 0) (61, 0) (62, 0) (63, 0) (64, 0) (65, 0) (66, 0) (67, 0) (68, 0) (69, 0) (70, 0) (71, 0) (72, 0) (73, 0) (74, 0) (75, 0) (76, 0) (77, 0) (78, 0) (79, 0) (80, 0) (81, 0) (82, 0) (83, 0) (84, 0) (85, 0) (86, 0) (87, 0) (88, 0) (89, 0) (90, 0) (91, 0) (92, 0) (93, 0) (94, 0) (95, 0) (96, 0) (97, 0) (98, 0) (99, 0) (100, 0) (101, 0) (102, 0) (103, 0) (104, 0) (105, 0) (106, 0) (107, 0) (108, 0) (109, 0) (110, 0) (111, 0) (112, 0) (113, 0) (114, 0) (115, 0) (116, 0) (117, 0) (118, 0) (119, 0) (120, 0) (121, 0) (122, 0) (123, 0) (124, 0) (125, 0) (126, 0) (127, 0) (128, 0) (129, 0) (130, 0) (131, 0) (132, 0) (133, 0) (134, 0) (135, 0) (136, 0) (137, 0) (138, 0) (139, 0) (140, 0) (141, 0) (142, 0) (143, 0) (144, 0) (145, 0) (146, 0) (147, 0) (148, 0) (149, 0) (150, 0) (151, 0) (152, 0) (153, 0) (154, 0) (155, 0) (156, 0) (157, 0) (158, 0) (159, 0) (160, 0) (161, 0) (162, 0) (163, 0) (164, 0) (165, 0) (166, 0) (167, 0) (168, 0) (169, 0) (170, 0) (171, 0) (172, 0) (173, 0) (174, 0) (175, 0) (176, 0) (177, 0) (178, 0) (179, 0) (180, 0) (181, 0) (182, 0) (183, 0) (184, 0) (185, 0) (186, 0) (187, 0) (188, 0) (189, 0) (190, 0) (191, 0) (192, 0) (193, 0) (194, 0) (195, 0) (196, 0) (197, 0) (198, 0) (199, 0) (200, 0)
					};
					\addlegendentry{ Baseline I }
					
					\addplot[mark size=1pt, color=red, style=thick,]
					coordinates {
						(0, 0) (1, 1) (2, 2) (3, 3) (4, 4) (5, 5) (6, 6) (7, 7) (8, 8) (9, 9) (10, 10) (11, 11) (12, 12) (13, 13) (14, 14) (15, 15) (16, 16) (17, 17) (18, 18) (19, 19) (20, 20) (21, 21) (22, 22) (23, 23) (24, 24) (25, 25) (26, 26) (27, 27) (28, 28) (29, 29) (30, 30) (31, 31) (32, 32) (33, 33) (34, 34) (35, 35) (36, 36) (37, 37) (38, 38) (39, 39) (40, 40) (41, 41) (42, 42) (43, 43) (44, 44) (45, 45) (46, 46) (47, 47) (48, 48) (49, 49) (50, 50) (51, 51) (52, 52) (53, 53) (54, 54) (55, 55) (56, 56) (57, 57) (58, 58) (59, 59) (60, 60) (61, 61) (62, 62) (63, 63) (64, 64) (65, 65) (66, 66) (67, 67) (68, 68) (69, 69) (70, 70) (71, 71) (72, 72) (73, 73) (74, 74) (75, 75) (76, 76) (77, 77) (78, 78) (79, 79) (80, 80) (81, 81) (82, 82) (83, 83) (84, 84) (85, 85) (86, 86) (87, 87) (88, 88) (89, 89) (90, 90) (91, 91) (92, 92) (93, 93) (94, 94) (95, 95) (96, 96) (97, 97) (98, 98) (99, 99) (100, 100) (101, 101) (102, 102) (103, 103) (104, 104) (105, 105) (106, 106) (107, 107) (108, 108) (109, 109) (110, 110) (111, 111) (112, 112) (113, 113) (114, 114) (115, 115) (116, 116) (117, 117) (118, 118) (119, 119) (120, 120) (121, 121) (122, 122) (123, 123) (124, 124) (125, 125) (126, 126) (127, 127) (128, 128) (129, 129) (130, 130) (131, 131) (132, 132) (133, 133) (134, 134) (135, 135) (136, 136) (137, 137) (138, 138) (139, 139) (140, 140) (141, 141) (142, 142) (143, 143) (144, 144) (145, 145) (146, 146) (147, 147) (148, 148) (149, 149) (150, 150) (151, 151) (152, 152) (153, 153) (154, 154) (155, 155) (156, 156) (157, 157) (158, 158) (159, 159) (160, 160) (161, 161) (162, 162) (163, 163) (164, 164) (165, 165) (166, 166) (167, 167) (168, 168) (169, 169) (170, 170) (171, 171) (172, 172) (173, 173) (174, 174) (175, 175) (176, 176) (177, 177) (178, 178) (179, 179) (180, 180) (181, 181) (182, 182) (183, 183) (184, 184) (185, 185) (186, 186) (187, 187) (188, 188) (189, 189) (190, 190) (191, 191) (192, 192) (193, 193) (194, 194) (195, 195) (196, 196) (197, 197) (198, 198) (199, 199) (200, 200)
					};
					\addlegendentry{ Baseline II }
					
					\addplot[mark size = 0.5pt, color=blue, style=very thick,]
					coordinates {
						(0, 0) (1, 1) (2, 1) (3, 1) (4, 1) (5, 1) (6, 1) (7, 1) (8, 1) (9, 1) (10, 2) (11, 3) (12, 1) (13, 2) (14, 3) (15, 1) (16, 2) (17, 3) (18, 1) (19, 1) (20, 1) (21, 1) (22, 1) (23, 1) (24, 1) (25, 1) (26, 1) (27, 1) (28, 1) (29, 1) (30, 1) (31, 2) (32, 1) (33, 1) (34, 2) (35, 1) (36, 2) (37, 1) (38, 2) (39, 1) (40, 2) (41, 1) (42, 2) (43, 1) (44, 2) (45, 1) (46, 1) (47, 1) (48, 2) (49, 1) (50, 2) (51, 1) (52, 2) (53, 1) (54, 2) (55, 1) (56, 2) (57, 1) (58, 2) (59, 1) (60, 2) (61, 1) (62, 1) (63, 1) (64, 1) (65, 1) (66, 1) (67, 1) (68, 1) (69, 2) (70, 1) (71, 2) (72, 1) (73, 2) (74, 1) (75, 1) (76, 1) (77, 1) (78, 1) (79, 1) (80, 1) (81, 2) (82, 1) (83, 2) (84, 1) (85, 2) (86, 1) (87, 2) (88, 3) (89, 4) (90, 1) (91, 2) (92, 1) (93, 2) (94, 1) (95, 2) (96, 1) (97, 1) (98, 1) (99, 2) (100, 1) (101, 2) (102, 1) (103, 2) (104, 1) (105, 1) (106, 2) (107, 1) (108, 2) (109, 1) (110, 2) (111, 1) (112, 2) (113, 1) (114, 2) (115, 1) (116, 2) (117, 1) (118, 2) (119, 1) (120, 1) (121, 2) (122, 1) (123, 2) (124, 1) (125, 2) (126, 1) (127, 1) (128, 1) (129, 1) (130, 1) (131, 1) (132, 1) (133, 1) (134, 1) (135, 2) (136, 3) (137, 1) (138, 1) (139, 2) (140, 1) (141, 1) (142, 2) (143, 3) (144, 1) (145, 2) (146, 1) (147, 2) (148, 1) (149, 1) (150, 2) (151, 1) (152, 2) (153, 1) (154, 1) (155, 1) (156, 1) (157, 2) (158, 1) (159, 2) (160, 3) (161, 4) (162, 1) (163, 1) (164, 2) (165, 1) (166, 2) (167, 1) (168, 2) (169, 1) (170, 2) (171, 1) (172, 2) (173, 1) (174, 2) (175, 1) (176, 2) (177, 1) (178, 2) (179, 1) (180, 2) (181, 1) (182, 2) (183, 3) (184, 4) (185, 1) (186, 2) (187, 1) (188, 1) (189, 2) (190, 1) (191, 2) (192, 1) (193, 2) (194, 1) (195, 2) (196, 3) (197, 4) (198, 1) (199, 1) (200, 2)
					};
					\addlegendentry{ \textsc{EnforceAC} }
					
					\addplot[mark size = 0.5pt, color=orange, style=very thick,]
					coordinates {
						(0, 0) (1, 1) (2, 1) (3, 1) (4, 1) (5, 2) (6, 1) (7, 1) (8, 2) (9, 1) (10, 2) (11, 3) (12, 1) (13, 2) (14, 3) (15, 4) (16, 1) (17, 1) (18, 2) (19, 1) (20, 2) (21, 1) (22, 2) (23, 1) (24, 1) (25, 1) (26, 1) (27, 1) (28, 1) (29, 1) (30, 2) (31, 1) (32, 2) (33, 1) (34, 2) (35, 1) (36, 2) (37, 1) (38, 2) (39, 1) (40, 2) (41, 1) (42, 1) (43, 1) (44, 2) (45, 1) (46, 1) (47, 1) (48, 2) (49, 1) (50, 2) (51, 1) (52, 2) (53, 1) (54, 2) (55, 1) (56, 1) (57, 2) (58, 1) (59, 1) (60, 2) (61, 3) (62, 1) (63, 1) (64, 1) (65, 1) (66, 1) (67, 1) (68, 1) (69, 2) (70, 3) (71, 1) (72, 2) (73, 3) (74, 1) (75, 1) (76, 2) (77, 1) (78, 1) (79, 1) (80, 1) (81, 2) (82, 1) (83, 2) (84, 1) (85, 2) (86, 1) (87, 2) (88, 3) (89, 4) (90, 1) (91, 2) (92, 1) (93, 2) (94, 1) (95, 2) (96, 1) (97, 1) (98, 2) (99, 1) (100, 2) (101, 1) (102, 2) (103, 3) (104, 1) (105, 2) (106, 1) (107, 2) (108, 1) (109, 2) (110, 1) (111, 2) (112, 1) (113, 2) (114, 3) (115, 1) (116, 2) (117, 1) (118, 2) (119, 1) (120, 1) (121, 2) (122, 1) (123, 2) (124, 1) (125, 2) (126, 1) (127, 2) (128, 1) (129, 1) (130, 1) (131, 2) (132, 3) (133, 1) (134, 1) (135, 2) (136, 1) (137, 2) (138, 1) (139, 2) (140, 1) (141, 2) (142, 1) (143, 1) (144, 2) (145, 1) (146, 1) (147, 2) (148, 3) (149, 1) (150, 2) (151, 1) (152, 2) (153, 1) (154, 2) (155, 1) (156, 2) (157, 1) (158, 1) (159, 2) (160, 1) (161, 2) (162, 3) (163, 1) (164, 2) (165, 1) (166, 2) (167, 3) (168, 4) (169, 1) (170, 2) (171, 1) (172, 1) (173, 2) (174, 3) (175, 1) (176, 2) (177, 1) (178, 2) (179, 1) (180, 2) (181, 3) (182, 1) (183, 2) (184, 3) (185, 1) (186, 1) (187, 2) (188, 1) (189, 2) (190, 1) (191, 2) (192, 1) (193, 2) (194, 1) (195, 2) (196, 1) (197, 1) (198, 2) (199, 1) (200, 2)
					};
					\addlegendentry{ \textsc{R-EnforceAC} ($\epsilon=0.3$) }
					
					\addplot[mark size = 0.5pt, color=cyan, style=thick,]
					coordinates {
						(0, 0) (1, 1) (2, 1) (3, 1) (4, 1) (5, 2) (6, 1) (7, 1) (8, 2) (9, 1) (10, 2) (11, 3) (12, 1) (13, 2) (14, 3) (15, 4) (16, 1) (17, 1) (18, 2) (19, 1) (20, 2) (21, 1) (22, 2) (23, 1) (24, 1) (25, 1) (26, 1) (27, 1) (28, 1) (29, 1) (30, 2) (31, 1) (32, 2) (33, 1) (34, 2) (35, 1) (36, 2) (37, 1) (38, 2) (39, 1) (40, 2) (41, 1) (42, 1) (43, 1) (44, 2) (45, 1) (46, 1) (47, 1) (48, 2) (49, 1) (50, 2) (51, 1) (52, 2) (53, 1) (54, 2) (55, 1) (56, 1) (57, 2) (58, 1) (59, 1) (60, 2) (61, 3) (62, 1) (63, 1) (64, 1) (65, 1) (66, 1) (67, 1) (68, 1) (69, 2) (70, 3) (71, 1) (72, 2) (73, 3) (74, 1) (75, 1) (76, 2) (77, 1) (78, 1) (79, 1) (80, 1) (81, 2) (82, 1) (83, 2) (84, 1) (85, 2) (86, 1) (87, 2) (88, 3) (89, 4) (90, 1) (91, 2) (92, 1) (93, 2) (94, 1) (95, 2) (96, 1) (97, 1) (98, 2) (99, 1) (100, 2) (101, 1) (102, 2) (103, 3) (104, 1) (105, 2) (106, 1) (107, 2) (108, 1) (109, 2) (110, 1) (111, 2) (112, 1) (113, 2) (114, 3) (115, 1) (116, 2) (117, 1) (118, 2) (119, 1) (120, 1) (121, 2) (122, 1) (123, 2) (124, 1) (125, 2) (126, 1) (127, 2) (128, 1) (129, 1) (130, 1) (131, 2) (132, 3) (133, 1) (134, 1) (135, 2) (136, 1) (137, 2) (138, 1) (139, 2) (140, 1) (141, 2) (142, 1) (143, 1) (144, 2) (145, 1) (146, 1) (147, 2) (148, 3) (149, 1) (150, 2) (151, 1) (152, 2) (153, 1) (154, 2) (155, 1) (156, 2) (157, 1) (158, 1) (159, 2) (160, 1) (161, 2) (162, 3) (163, 1) (164, 2) (165, 1) (166, 2) (167, 3) (168, 4) (169, 1) (170, 2) (171, 1) (172, 1) (173, 2) (174, 3) (175, 1) (176, 2) (177, 1) (178, 2) (179, 1) (180, 2) (181, 3) (182, 1) (183, 2) (184, 3) (185, 1) (186, 1) (187, 2) (188, 1) (189, 2) (190, 1) (191, 2) (192, 1) (193, 2) (194, 1) (195, 2) (196, 1) (197, 1) (198, 2) (199, 1) (200, 2)
					};
					\addlegendentry{ \textsc{R-EnforceAC-simp} ($\epsilon=0.3$) }
					
				\end{axis}
				\label{plot:simulation-results-p-values}
			\end{tikzpicture}

		}
	}
	
	\caption{
		Results of simulations of different algorithms with \textit{Prior-Knowledge} initialization and zero communication restrictions running for $E = 200$ time steps.
		(a) Comparison of the return values over time for different algorithms.
		(b) Comparison of the number of time steps since last communication ($p$-values) over time for different algorithms.
	}
	\label{fig:simulation-results}
\end{figure*}
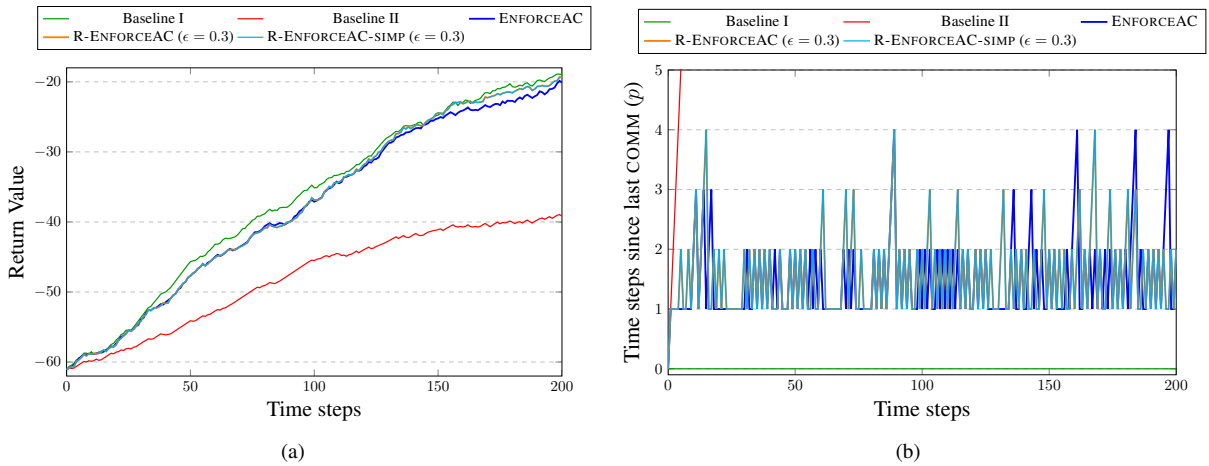


Table~\ref{table:simulation-runtimes} presents statistical runtime summary of the algorithms for different scenarios, where each scenario starts from a different initial belief (\textit{MaxEntropy}, \textit{PriorKnowledge} or \textit{Random}).
In order to ensure that all of the runs of the same scenario are identical, a fixed seed value is used for all the stochastic calculations (i.e. observations sampling, motion models).
For statistical runtime evaluation, each scenario is repeated 10 times with the same seed value.
Table~\ref{table:simulation-performance} presents statistical performance summery of the algorithms for different scenarios, where each scenario starts from a different initial belief (\textit{MaxEntropy}, \textit{PriorKnowledge} or \textit{Random}) and with different communication restrictions (0, 20, or 30).
The performance of the algorithms is measured by the amount of Action Inconsistencies that occurred and the amount of \textsc{comm}s that where triggered in the simulation.
For statistical performance evaluation, each scenario is repeated 10 times where each repetition is run with a different seed value, to simulate different possible outcomes for the same scenario.
Figure~\ref{fig:simulation-results} illustrates the return function values (evaluated at inference),
and the number of time steps since the last communication, both for a specific scenario input. 

We can see that for \textsc{EnforceAC}, the robots does not communicate for up to a maximum of 5 consecutive time steps; nevertheless, as shown in Table~\ref{table:simulation-performance}, in all planning sessions there were no action inconsistencies despite the robots having inconsistent beliefs (\textit{Not-AC} is zero for these scenario inputs, unless there are communication restrictions).
On the other hand, Baseline-I communicated two-way at each planning session, resulting in no action inconsistencies, while Baseline-II did not communicate at all, resulting in numerous inconsistent actions.

Considering no communication restrictions (\textit{comm-restr} = 0), \textsc{EnforceAC} reduces the number of one-way \textsc{comm}s by $40$-$60\%$ compared to Baseline-I, and in all cases ensures consistent decision making between the robots, despite having  inconsistent beliefs, as shown in Table~\ref{table:simulation-performance}.
However, we can see an increase by a factor of 10 in the runtime of \textsc{EnforceAC} compared to Baseline I.
As mentioned before, we note that for no communication-restrictions, \textsc{EnforceAC} never result in action inconsistency, by providing deterministic guarantee of Action Consistency or triggering \textsc{comm} to reach Action Consistency.
The only cases where \textsc{EnforceAC} results in action inconsistency is when there is a communication restriction, which is due to the fact that a \textsc{comm} trigger was not able to occur due to the restriction, and therefore action inconsistency occurred.

However, compared to Baseline-I, \textsc{EnforceAC} provides sub-optimal action selection: As shown in Fig.~\ref{plot:simulation-results-objectives}, the $J$ values computed by \textsc{EnforceAC} are slightly worse compared to Baseline-I, though the values are well above the values for Baseline-II. 
Thus \textsc{EnforceAC} ensures \mrac with inconsistent beliefs at the expense of quality of the selected action and higher computational complexity. We leave further investigation of these aspects to future research.

In comparison, algorithms \textsc{R-EnforceAC} and \textsc{R-EnforceAC-simp}, with different values of $\epsilon$, provide a compromise between the number of inconsistencies, the amount of communications, and the runtime.
As can be seen in Table~\ref{table:simulation-performance}, in the scenarios where \textit{comm-restr}=0, the amount of action inconsistencies is between 0-1.5\%, the amount of communication relatively to \textsc{EnforceAC} decreased in most scenarios by 5-20\%. However, the runtime of the algorithms is increasing the higher the value of $\epsilon$. The decrease in the amount of communication causes the $p$-values to be higher and therefore increases time steps runtime exponentially.
Algorithm \textsc{R-EnforceAC-simp} shows improvement of 28-63\% in runtime compared to \textsc{R-EnforceAC}, with almost similar results of action inconsistencies and communications.

\begin{figure*}[htbp]
	\centering
	\subfigure[  
	]{
		\resizebox{0.9\linewidth}{!}
		{
			
			\begin{tikzpicture}
				\begin{axis} [
					hide x axis, xmin=-2, xmax=202,
					axis y line*=right, ymin=0, ymax=1.05,
					ytick={0, 0.2, 0.4, 0.6, 0.8, 1.0},
					height=0.2\textheight,
					width=0.99\textwidth,
					]
				\end{axis}
				\begin{axis} [
					legend style={at={(0.5, 1.2)}, anchor=north, legend columns=3, column sep=10pt},
					xlabel={Time step},
					ylabel={AC Probability Guarantee},
					xmin=-2, xmax=202,
					ymin=0, ymax=1.05,
					ytick={0, 0.2, 0.4, 0.6, 0.8, 1.0},
					height=0.2\textheight,
					width=0.99\textwidth,
					]
					\addlegendimage{line width=2.2pt, color=black!40!green}
					\addlegendentry{ Consistent Action Selection }
					
					\addlegendimage{line width=2.2pt, color=red}
					\addlegendentry{ Inconsistent Action Selection }
					
					\addlegendimage{dashed, color=blue}
					\addlegendentry{ Threshold $1-\epsilon$ }
					
					\addplot [ybar, bar width=0.3, color=black!40!green, fill=black!40!green]
					coordinates {
						(0, 0.0) (1, 1.0) (2, 1.0) (3, 0.7) (4, 1.0) (5, 1.0) (6, 1.0) (7, 1.0) (8, 1.0) (9, 1.0) (10, 1.0) (11, 1.0) (12, 0.44699999999999995) (13, 0.26010000000000005) (14, 1.0) (15, 1.0) (16, 1.0) (17, 1.0) (18, 1.0) (19, 1.0) (20, 0.7) (21, 1.0) (22, 0.7) (23, 1.0) (24, 1.0) (25, 0.7) (26, 0.7) (27, 1.0) (28, 1.0) (29, 1.0) (30, 1.0) (31, 1.0) (32, 1.0) (33, 1.0) (34, 1.0) (35, 1.0) (36, 1.0) (37, 1.0) (38, 1.0) (39, 1.0) (40, 1.0) (41, 0.875) (42, 1.0) (43, 1.0) (44, 1.0) (45, 1.0) (46, 1.0) (47, 1.0) (48, 1.0) (49, 1.0) (50, 1.0) (51, 1.0) (52, 1.0) (53, 1.0) (54, 1.0) (55, 1.0) (56, 1.0) (57, 1.0) (58, 1.0) (59, 1.0) (60, 1.0) (61, 1.0) (62, 1.0) (63, 1.0) (64, 1.0) (65, 0.9396306818181818) (66, 1.0) (67, 1.0) (68, 0.3000000000000001) (69, 1.0) (70, 1.0) (71, 1.0) (72, 0.49218749999999994) (73, 0.73828125) (74, 1.0) (75, 0.875) (76, 1.0) (77, 1.0) (78, 1.0) (79, 0.875) (80, 1.0) (81, 1.0) (82, 1.0) (83, 1.0) (84, 1.0) (85, 1.0) (86, 1.0) (87, 1.0) (88, 1.0) (89, 0.9999999999999999) (90, 1.0) (91, 1.0) (92, 1.0) (93, 1.0) (94, 1.0) (95, 1.0) (96, 0.984375) (97, 1.0) (98, 1.0) (99, 1.0) (100, 1.0) (101, 1.0) (102, 1.0) (103, 1.0) (105, 1.0) (106, 1.0) (107, 1.0) (108, 1.0) (109, 1.0) (110, 1.0) (111, 1.0) (112, 1.0) (113, 1.0) (114, 1.0) (115, 1.0) (116, 1.0) (117, 1.0) (118, 1.0) (119, 1.0) (120, 1.0) (121, 1.0) (122, 1.0) (123, 1.0) (124, 1.0) (125, 1.0) (126, 1.0) (127, 1.0) (128, 1.0) (129, 1.0) (130, 1.0) (131, 0.875) (132, 1.0) (133, 1.0) (134, 0.875) (135, 1.0) (136, 1.0) (137, 1.0) (138, 0.9545454545454546) (139, 1.0) (140, 0.9545454545454546) (141, 0.9111570247933886) (142, 1.0) (143, 1.0) (144, 1.0) (145, 1.0) (146, 1.0) (147, 1.0) (148, 1.0) (149, 1.0) (150, 1.0) (151, 1.0) (152, 1.0) (153, 1.0) (154, 1.0) (155, 0.984375) (156, 1.0) (157, 1.0) (158, 1.0) (159, 1.0) (160, 1.0) (161, 1.0) (162, 1.0) (163, 1.0) (164, 1.0) (165, 1.0) (166, 1.0) (167, 0.9850852272727273) (168, 1.0) (169, 1.0) (170, 0.984375) (171, 1.0) (172, 1.0) (173, 1.0) (174, 1.0) (175, 1.0) (176, 1.0) (177, 1.0) (178, 1.0) (179, 1.0) (180, 1.0) (181, 1.0) (182, 1.0) (183, 1.0) (184, 1.0) (185, 1.0) (186, 1.0) (187, 1.0) (188, 1.0) (189, 1.0) (190, 1.0) (191, 1.0) (192, 1.0) (193, 0.3000000000000001) (194, 0.4200000000000001) (195, 1.0) (196, 1.0) (197, 1.0) (198, 1.0) (199, 0.8749999999999999) (200, 1.0)
					};
					
					\addplot [ybar, bar width=0.3, color=red, fill=red]
					coordinates {
						(104, 0.30000000000000004)
					};
					
					\addplot[draw=blue, dashed] 
					coordinates {
						(-5, 0.3) (205, 0.3)
					};
					
				\end{axis}
			\end{tikzpicture}
	}		
	\label{plot:simulation-results-epsilon-0.7-guarantees1}
}

\subfigure[ 
]{
	\resizebox{0.9\linewidth}{!}
	{
		
		\begin{tikzpicture}
			\begin{axis} [
				hide x axis, xmin=-2, xmax=202,
				axis y line*=right, ymin=0, ymax=1.05,
				ytick={0, 0.2, 0.4, 0.6, 0.8, 1.0},
				height=0.2\textheight,
				width=0.99\textwidth,
				]
			\end{axis}
			\begin{axis} [
				legend style={at={(0.5, 1.2)}, anchor=north, legend columns=3, column sep=10pt},
				xlabel={Time step},
				ylabel={AC Probability Guarantee},
				xmin=-2, xmax=202,
				ymin=0, ymax=1.05,
				ytick={0, 0.2, 0.4, 0.6, 0.8, 1.0},
				height=0.2\textheight,
				width=0.99\textwidth,
				]
				\addlegendimage{line width=2.2pt, color=black!40!green}
				\addlegendentry{ Consistent Action Selection }
				
				\addlegendimage{line width=2.2pt, color=red}
				\addlegendentry{ Inconsistent Action Selection }
				
				\addlegendimage{dashed, color=blue}
				\addlegendentry{ Threshold $1-\epsilon$ }
				
				\addplot [ybar, bar width=0.3, color=black!40!green, fill=black!40!green]
				coordinates {
					(0, 0.0) (1, 1.0) (2, 1.0) (3, 0.7) (4, 1.0) (5, 1.0) (6, 1.0) (7, 1.0) (8, 1.0) (9, 0.7) (10, 0.7) (11, 1.0) (12, 0.20588235294117652) (13, 1.0) (14, 1.0) (15, 1.0) (16, 1.0) (17, 1.0) (18, 1.0) (19, 1.0) (20, 0.7) (21, 1.0) (22, 0.7) (23, 1.0) (24, 1.0) (25, 0.7) (26, 0.7) (27, 1.0) (28, 1.0) (29, 1.0) (30, 1.0) (31, 0.7) (32, 0.7) (33, 1.0) (34, 1.0) (35, 0.20588235294117652) (36, 1.0) (37, 0.3693771626297579) (38, 0.51) (39, 1.0) (40, 1.0) (41, 0.875) (42, 1.0) (43, 1.0) (44, 1.0) (45, 1.0) (46, 1.0) (47, 1.0) (48, 1.0) (49, 0.875) (50, 0.875) (51, 1.0) (52, 1.0) (53, 1.0) (54, 1.0) (55, 0.9545454545454547) (56, 1.0) (57, 1.0) (58, 1.0) (59, 1.0) (60, 0.9545454545454546) (61, 0.9566115702479339) (62, 1.0) (63, 1.0) (64, 1.0) (65, 0.984375) (66, 1.0) (67, 1.0) (68, 1.0) (69, 0.23437500000000025) (70, 0.33007812500000033) (71, 1.0) (72, 1.0) (73, 1.0) (74, 1.0) (75, 1.0) (76, 1.0) (77, 1.0) (78, 1.0) (79, 1.0) (80, 1.0) (81, 1.0) (82, 1.0) (83, 0.875) (84, 1.0) (85, 1.0) (86, 0.5100000000000001) (87, 0.6300000000000002) (88, 1.0) (89, 1.0) (90, 0.875) (91, 0.12500000000000003) (92, 0.20588235294117652) (93, 0.23437500000000008) (94, 0.3232050173010381) (95, 1.0) (96, 0.875) (97, 1.0) (98, 0.7) (100, 1.0) (101, 0.3) (104, 1.0) (107, 0.7984374999999999) (108, 1.0) (109, 1.0) (110, 1.0) (111, 1.0) (115, 1.0) (116, 1.0) (117, 1.0) (118, 0.875) (119, 1.0) (120, 1.0) (121, 1.0) (122, 1.0) (123, 1.0) (124, 1.0) (125, 1.0) (126, 1.0) (127, 1.0) (128, 1.0) (129, 1.0) (130, 1.0) (131, 1.0) (132, 1.0) (133, 1.0) (134, 1.0) (135, 0.8750000000000001) (136, 1.0) (137, 1.0) (138, 0.875) (139, 0.9125) (140, 1.0) (141, 0.875) (142, 0.875) (143, 1.0) (144, 1.0) (145, 1.0) (146, 1.0) (147, 1.0) (148, 1.0) (149, 1.0) (150, 1.0) (151, 1.0) (152, 1.0) (153, 1.0) (155, 0.9545454545454545) (156, 0.8352272727272727) (157, 1.0) (158, 1.0) (159, 1.0) (160, 1.0) (161, 0.7656250000000001) (162, 0.8906250000000001) (163, 1.0) (164, 0.12500000000000006) (165, 1.0) (166, 1.0) (167, 0.9545454545454546) (168, 1.0) (169, 1.0) (170, 0.9545454545454547) (171, 1.0) (172, 1.0) (173, 1.0) (174, 1.0) (175, 1.0) (176, 1.0) (177, 1.0) (178, 1.0) (179, 1.0) (180, 1.0) (181, 0.9843750000000001) (182, 1.0) (183, 1.0) (184, 1.0) (185, 1.0) (186, 1.0) (187, 1.0) (188, 1.0) (189, 1.0) (190, 1.0) (191, 1.0) (192, 1.0) (193, 1.0) (194, 1.0) (195, 1.0) (196, 1.0) (197, 1.0) (198, 1.0) (199, 1.0) (200, 1.0)
				};
				
				\addplot [ybar, bar width=0.3, color=red, fill=red]
				coordinates {
					(99, 0.21) (102, 0.7) (103, 0.5799999999999998) (105, 0.12500000000000003) (106, 0.12500000000000003) (112, 0.30000000000000004) (113, 0.27375) (114, 0.6387499999999999) (154, 0.12499999999999988)
				};
				
				\addplot[draw=blue, dashed] 
				coordinates {
					(-5, 0.1) (205, 0.1)
				};
				
			\end{axis}
		\end{tikzpicture}
}		
\label{plot:simulation-results-epsilon-0.9-guarantees1}
}

\caption{
Action Consistency Probability Guarantees over time of robot $r$  running \textsc{R-EnforceAC} with \textit{Prior-Knowledge} initialization and zero communication restrictions for $E=200$ time steps, with \textbf{(a)} $\epsilon=0.7$ and \textbf{(b)} $\epsilon=0.9$. 
The green bars indicate the action selection was consistent between the robots, and the red bars indicate the action selection was inconsistent.
}
\label{fig:simulation-results-guarantees}
\end{figure*}
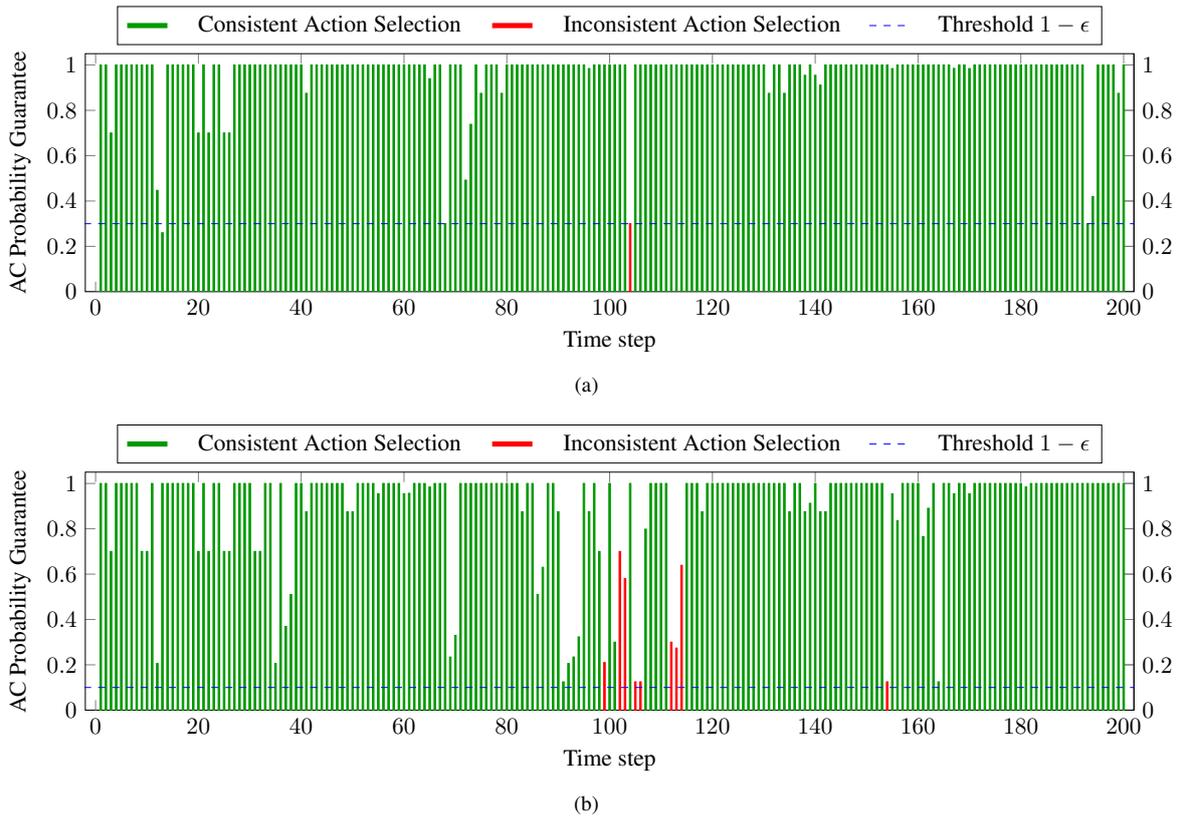

Figure~{\ref{fig:simulation-results-guarantees}} shows the probability of Action Consistency provided by algorithm \textsc{R-EnforceAC} of Robot 1 for two $\epsilon$ values (0.7, 0.9) in two specific runs. 
We can see more Action Inconsistencies in Figure~\ref{plot:simulation-results-epsilon-0.9-guarantees1} than in Figure~\ref{plot:simulation-results-epsilon-0.7-guarantees1}.
For a higher value of $\epsilon$, the threshold for communication $1-\epsilon$ decreases and so the number of communications also decreases, on the account of more Action Inconsistencies occurrences.

Algorithm \textsc{R-EnforceAC} results in Action Consistency during most of the simulation for both $\epsilon$ values, but still results in action inconsistencies in some time steps, even when the probability of Action Consistency is above the provided threshold $1-\epsilon$ value, since the provided guarantees are only probabilistic in these time steps (since they are below 1).
As mentioned in Section \ref{subsec:verifyac}, the condition for a \emph{deterministic} Action Consistency guarantee is when in steps 2 and 3 the optimal joint action has a cumulative likelihood of 1.
Correspondingly, we can see in Figure~\ref{fig:simulation-results-guarantees} that action inconsistencies do not occur when the  Action Consistency guarantee equals 1, as expected for a deterministic guarantee.

In Figure~\ref{plot:simulation-results-epsilon-0.7-guarantees1}, we detect an interesting case at  time step 13 where the probability of Action Consistency is \emph{below} the provided threshold $1-\epsilon$, and still no action inconsistency occurred.
The reason for this phenomenon is due to the conditions of declaring \mrac (Definition \ref{def:epsilon-mrac}), meaning this case indicates that the action selected in step 1 is in fact the highest cumulative likelihood in step 2, and therefore it does not matter for the algorithm if it is above or  below the threshold.
In fact, when the Action Consistency guarantee is below the threshold, then no other action has a cumulative likelihood which is above the threshold (otherwise a \textsc{comm} would have been triggered), and therefore, according to Equations~\eqref{eq:epsilon-mrac} and \eqref{eq:r-verifyac-not-ac-prob}, in this case the probability of Action Inconsistency is always zero.
Altogether, we conclude that Action Inconsistency can only occur when the cumulative likelihood in step 2 of the selected optimal action is below 1 and above the $1-\epsilon$ threshold, as can be seen in Figure~\ref{fig:simulation-results-guarantees}.







\subsubsection*{Dynamic communication restrictions}
In some scenarios, \textsc{comm} restrictions may arise dynamically, barring \textsc{comm}s between the robots, even though \textsc{comm} is suggested by the algorithm.
Table~\ref{table:simulation-performance} shows that action inconsistencies (Not-AC) have occurred due to \textit{comm-restr} $>0$ for both \textsc{EnforceAC} and Baseline I. However, the number of Not-ACs is less than the number of restricted time steps (\textit{comm-restr}) in $E$.
For \textsc{EnforceAC}, the reason being:
(a) \textsc{comm} was not required in some of the \textit{comm-restr} time steps as \textsc{EnforceAC} ensures \mrac even without \textsc{comm}, and (b) the action selections may be same coincidentally even though it required more communication to reach the threshold condition.
In fact, the number of Not-ACs is less for \textsc{EnforceAC} compared to Baseline I (Table~\ref{table:simulation-performance}). This is because, in contrast to \textsc{EnforceAC}, Baseline I reduces Not-ACs by means of only reason (b).





\subsection{Active Multi-Robot Visual SLAM }\label{subsec:slam-results}

In this section we evaluate our approach in an active multi-robot visual SLAM scenario using real-world data captured by DJI Robomaster equipped with cameras and a  LiDAR sensor. We consider two robots, $r$ and $r'$, autonomously and collaboratively localizing themselves and mapping an indoor environment based on visual observations that are only partially shared between the robots. In this setting, as opposed to the previous scenario (Section \ref{subsec:simulation-results}), the state and observation spaces are continuous. Therefore, we evaluate our \textsc{R-EnforceAC} approach by calculating appropriate estimators as detailed in Section \ref{sec:continuous-and-high-dim-cases}. 

The visual front-end is handled by SuperGlue \citep{Sarlin20cvpr}, which extracts and matches features between images. 
These visual measurements, together with visual odometry data, are fused into a statistical representation of the environment---the belief. 
This belief is then optimized by the back-end using GTSAM \citep{Dellaert12tr} to estimate robot poses and 3D landmarks. The planning session 
starts after a partial mapping of the environment, during which each robot has access to its own data and to the visual observations shared by 
the other robot. In such a setting, the beliefs maintained by the two robots are inconsistent at planning time.

\begin{figure}[h]
\centering
\includegraphics[width=0.2\textwidth]{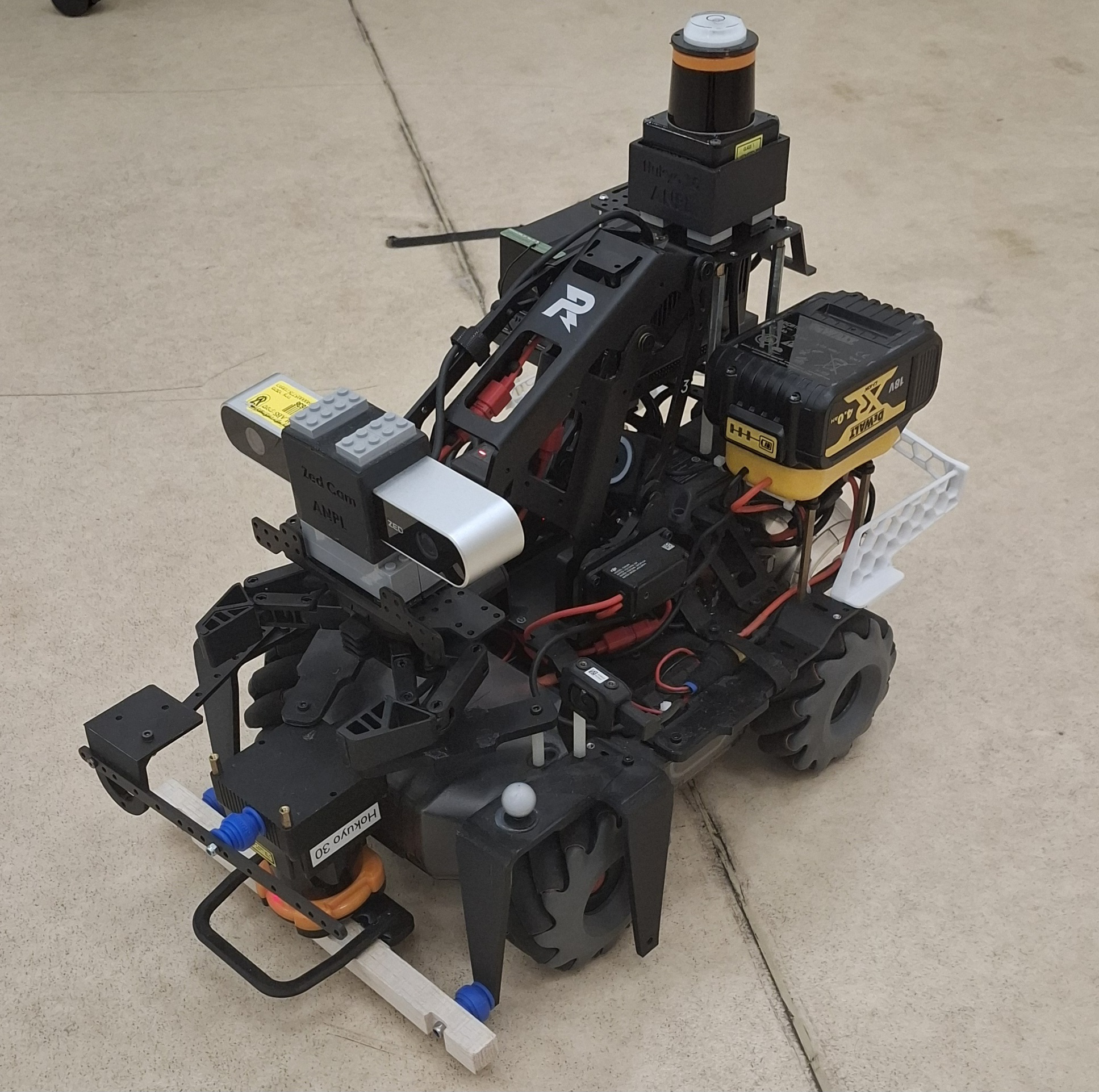}
\caption{The DJI Robomaster, equipped with cameras and LiDAR sensors.}
\label{fig:robot}
\end{figure}

\begin{figure}[h]
\centering
\includegraphics[width=0.45\textwidth]{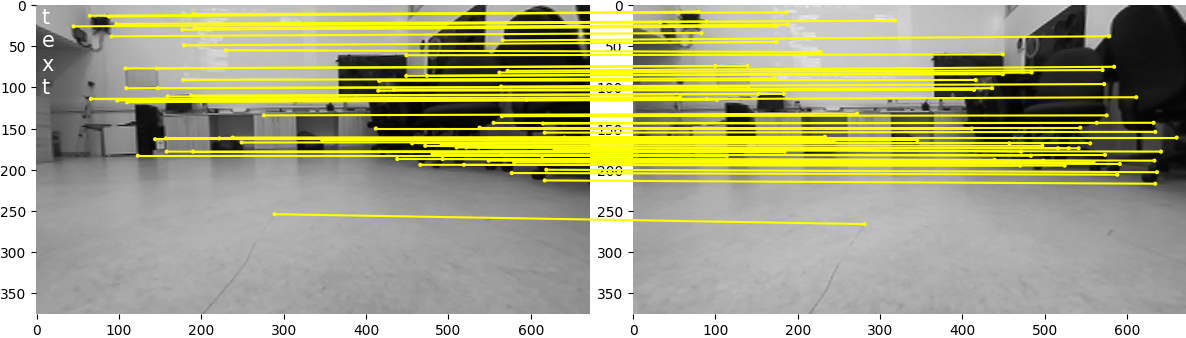}
\caption{SuperGlue feature matching for a pair of consecutive images.}
\label{fig:feature-matching}
\end{figure}

\subsubsection{Multi-Robot Visual SLAM Framework}


Within our multi-robot visual SLAM framework, we consider a factor graph \citep{Kschischang01it} based belief representation. Specifically, the belief maintained by robot $r$ at planning time $k$ given the history $H^r_k$ that includes the performed actions of both robots, its own observations and those shared by robot $r'$, is given by

\begin{equation}\label{eq:beliefvslam}
\begin{aligned}
& b^r_k = \prob{x^r_k \mid H^r_k} = \prob{\xi^r_{1:k}, \xi^{r'}_{1:k}, L^r \mid H^r_k}\propto \\
& \prod_{i\in \{r,r'\}} \mathbb{P}_0(\xi^i_0) \prod_{t=1}^k T^i(\xi^i_t\mid \xi^i_{t-1}, a^i_{t-1})
\prod_{\substack{j\in \mathcal{M}^i_t \\ z^i_{t,j} \in H^r_k}} O^i(z^i_{t,j} \mid \xi^i_t, l_j). 
\end{aligned}
\end{equation}


Here, $\xi^i_t$ denotes the pose of robot $i\in \{r,r'\}$ at time instant $t$ (as in the previous section), and $L^r=\{l_i\}$ is a set of observed landmarks that is 
maintained by robot $r$. Further, in \eqref{eq:beliefvslam}, $z^i_{t,j}$ is part of the history $H^r_k$ available to robot $r$, denoting the visual observation of 
robot $i\in\{r,r'\}$ at time $t$ and the $j$th landmark, and the corresponding associations are represented by $\mathcal{M}^i_t$. In our setup, we consider a Gaussian 
transition model $T(\xi_t\mid \xi_{t-1}, a_{t-1})$  that is based on visual-odometry, and a Gaussian observation model, 	$O(z \mid \xi, l)$ that is based on standard 
camera projection equations. We assume the cameras are calibrated.

\begin{figure}[h]
\centering
\begin{minipage}[b]{0.4\textwidth}
\centering
\includegraphics[width=\linewidth]{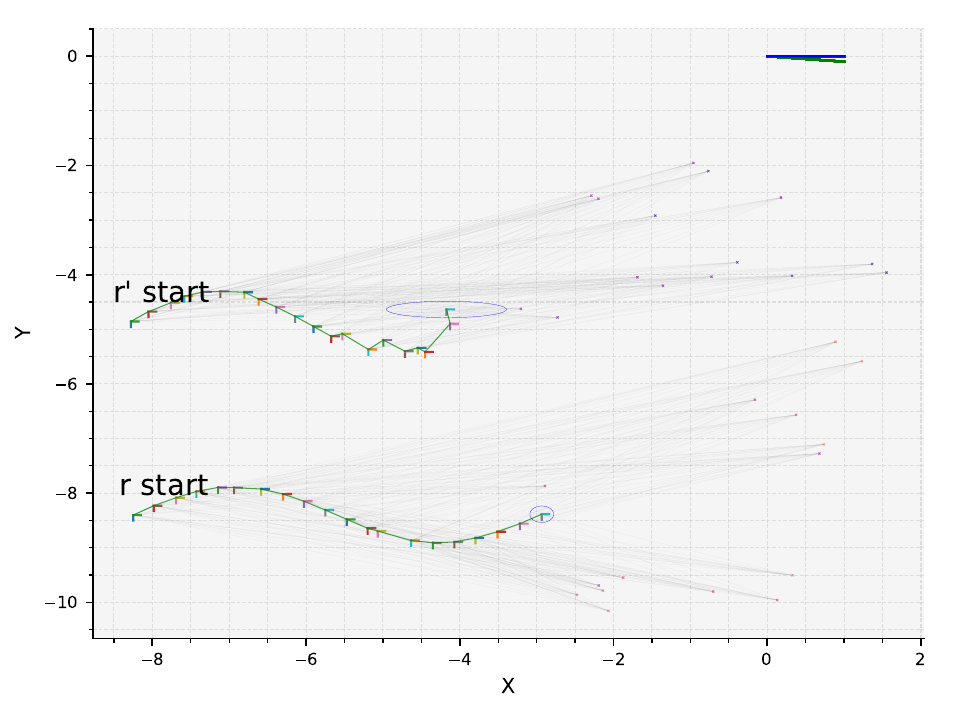}
\end{minipage}\\[2mm]
\begin{minipage}[b]{0.4\textwidth}
\centering
\includegraphics[width=\linewidth]{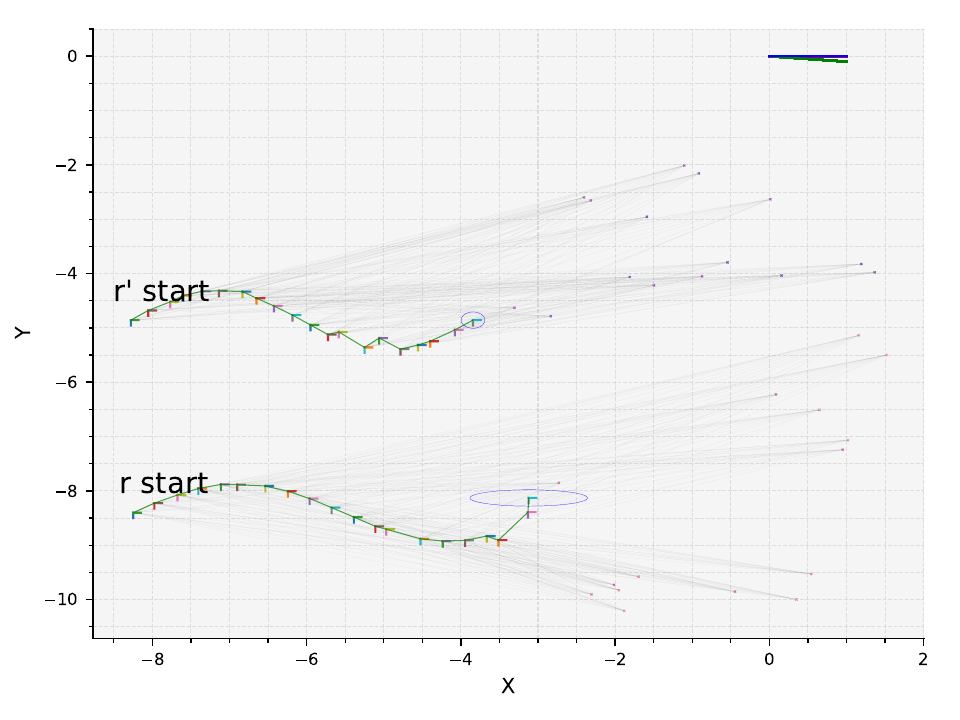}
\end{minipage}
\caption{Estimated landmarks  and trajectories of both robots based on (top) the belief $b^r_k$ maintained by robot $r$ and (bottom) the belief $b^{r'}_k$ maintained by robot $r'$. Gray lines represent visual landmark observations. Since each robot has access to all of its local data but only to part of the other robot's data, its inference process yields lower uncertainty for its own state than for the state of the other robot (e.g.~from the perspective of robot r -- top plot -- localization uncertainty of $r$ is lower than that of $r'$).        	        	
}
\label{fig:belief-representation}
\end{figure}

Since the robots share only part of the data, each robot $r$ has its own history $H^r_k$ and the corresponding belief $b^r_k$. Figure~\ref{fig:belief-representation} 
visualizes the estimated trajectories and landmarks based on the beliefs $b^r_k$ and $b^{r'}_k$ maintained by robots $r$ and $r'$ respectively. The figure illustrates landmark observations at 
different time instances by gray lines. As seen, robot $r$ does not have access to the visual observations of robot $r'$ in the last portion of the 
trajectory, which is therefore only estimated based on a motion model. Consequently, the uncertainty in the estimated poses of robot $r'$ by robot $r$ is higher (and likewise for the inference process of robot $r'$). 

Further, recall that in our formulation we currently consider the same state space for all agents. To satisfy this setting, although each robot $r$ has access to all of its observations---including those that were not shared---it only considers within our algorithms the  observations corresponding to the landmarks present in the shared data $\leftidx{^c}{\mathcal{H}}{_k^{r,r'}}$, such that $L^r=L^{r'}$.

\begin{figure}[h]
\centering
\includegraphics[width=0.4\textwidth]{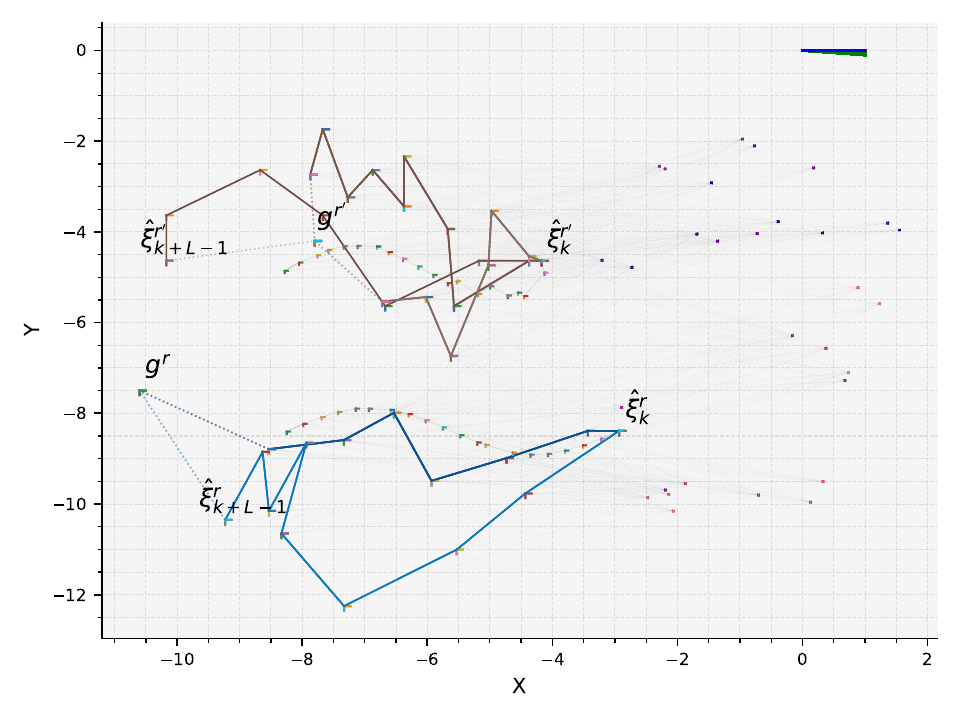}
\caption{Plot of candidate trajectories, represented as a sequence of relative poses, applied to the prior belief $b^r_k$ (at data-sharing iteration 8) for robot $r$. The thin pale green line represents the estimated 
trajectories of both robots based on $b^r_k$; the estimated poses of both robots as planning time, $\hat{\xi}^{r}_{k}$ and $\hat{\xi}^{r'}_{k}$, are explicitly shown in the plot.         
Candidate trajectories for robots $r$ and $r'$ are shown in shades of blue (2 trajectories) and gray (3 trajectories), respectively. The dotted line indicates the distance between $\hat{\xi}^{r}_{k+L-1}$ --- the expected position of robot $r$ after executing all the relative 
motions in the action sequence --- and $g^{r}$, the goal. This is the last segment of movement for robot $r$, and this distance is used in the objective function \eqref{eq:hybrid_obj_fun_max} to select the optimal action sequence (same for robot $r'$).
}

\label{fig:posterior-iter0}
\end{figure}

During the planning stage, the system evaluates various candidate action sequences (i.e., lists of relative motions) based on an objective function. Each candidate 
action is applied to the current (prior) belief, and in doing so, new vision factors are generated by projecting landmarks onto the anticipated frames. This produces a 
distinct posterior belief for every candidate action sequence---representing the belief that would result if that action were executed. The objective function is then 
evaluated for each posterior belief to determine the optimal action sequence. Figure~\ref{fig:posterior-iter0} illustrates candidate action sequences for each robot. 

During \textsc{COMM},  visual observations 
are shared, i.e. each robot shares vision factors to the landmarks that are in a field of view from the oldest position, for which the data has not been
shared. If the shared data is insufficient to reach MR-AC, additional data is shared from the following position, and so on, until MR-AC is declared to be achieved by \textsc{R-VerifyAC}. 

\subsubsection{Objective Function}

To determine the optimal action sequence, we first recall the  theoretical objective function \eqref{eq-objfun}.
Our objective is to simultaneously reduce the expected uncertainty of the posterior belief at the goal location and minimize the expected distance 
between the final predicted pose (obtained by applying the planned action sequence to the current belief) and the goal. We only consider the reward at the end of the candidate action sequence, i.e.~$\rho_l(b_{k+l}^r, a_{k+l})=0$ for $l\in[0,L-1]$, and set   
the terminal reward $\rho_L(b_{k+L}^r)$ to
\begin{align}\label{eq:terminalreward}
\rho_L(b_{k+L}^r)&=-\beta H(b_{k+L}^r) 
- (1-\beta) \cdot\\
& \!\!\!\!\!\!\!\!\!\!\!\! \expt{\xi^{r}_{k+L-1}, \xi^{r'}_{k+L-1} \mid b^r_{k+L-1}}{d(g^r, \xi^{r}_{k+L-1}) + d(g^{r'}, \xi^{r'}_{k+L-1})}. \nonumber
\end{align}
Here, $H(b)$ is the entropy of the belief $b$, $g^r$ and $g^{r'}$ are robots' $r$ and $r'$ goal locations, respectively, and $d(\cdot,\cdot)$ is the Euclidean distance function. 

Further, in our specific implementation, we approximate the expectation in \eqref{eq-objfun} by maximum likelihood observations, and the expectation in \eqref{eq:terminalreward} by maximum likelihood estimates $\hat{\xi}^{r}_{k+L-1}, \hat{\xi}^{r'}_{k+L-1}$ from $b_{k+L}^r$, which is modeled as Gaussian.  Therefore, the corresponding objective function estimator given the belief $b^r_k$ and a candidate action sequence $a_{k+}$is
\begin{align}\label{eq:hybrid_obj_fun_max}
\hat{J}(b_k^r, a_{k+}) = & -\beta \, H(b_{k+L}^r) \\
& - (1-\beta) \, (d(g^r, \hat{\xi}^{r}_{k+L-1}) + d(g^{r'}, \hat{\xi}^{r'}_{k+L-1})). \nonumber
\end{align}
The optimal joint action sequence is then chosen by maximizing the objective function estimator:
\begin{equation}\label{eq:hybrid_argmax}
a_{k+}^* = \argmax_{a_{k+} \in \mathcal{A}_{k+}} \hat{J}(b_k^r, a_{k+}).
\end{equation}

\subsubsection{Results}

Table~\ref{table-results} summarizes the results based on 30 simulation runs of our \textsc{R-EnforceAC} algorithm, compared to Baseline-I and Baseline-II algorithms. The "Not-AC" column indicates the number of runs (with corresponding percentage) 
that resulted in action inconsistency. The "Data-sharing Iterations" column reports the number of 
iterations of data exchange performed in each run (with standard deviation and percentage). 
The "Objective Values (J)" column shows the objective function value computed by robot $r$ based on its belief $b^r_{k+L}$ for 
the chosen action.

As the epsilon value decreases, the algorithm becomes stricter about ensuring action consistency---resulting in more 
data-sharing iterations (i.e.~more COMMs) per run but fewer runs resulting in an inconsistent joint actions.

\begin{table*}[ht]
\caption{\scriptsize Not-AC (action inconsistency), Data-sharing Iterations and Objective Values (J).
}
\begin{center}
\large
\label{table-results}
\resizebox{0.95\textwidth}{!}{
	\begin{tabular}{|l|c|c|c|r|}
		\hline
		\textbf{Algorithm} & \textbf{Not-AC} & \textbf{Data-sharing Iterations} & \textbf{Objective Values (J)}  \\
		\hline
		{\color{black!40!green}\textsc{Baseline-I}} & $0$ ($ 0.0\%$)   & $11.00\pm0.00$ ($ 100.0\pm0.0\%$) & $13.43\pm0.00$   \\
		\hline
		{\color{red}\textsc{Baseline-II}} & $30$ ($ 100.0\%$) & $0.0\pm0.0$ ($ 0.0\pm0.0\%$)   & $11.35\pm0.00$   \\
		\hline
		{\color{orange}\textsc{R-EnforceAC} ($\epsilon = 0.7$)} & $2$ ($ 6.7\%$)   & $9.8\pm0.76$ ($ 89.1\pm6.9\%$)  & $12.83\pm0.16$   \\
		{\color{orange}\textsc{R-EnforceAC} ($\epsilon = 0.8$)} & $20$ ($ 66.7\%$)   & $8.00\pm1.46$ ($ 72.7\pm13.3\%$)  & $13.18\pm0.31$   \\
		{\color{orange}\textsc{R-EnforceAC} ($\epsilon = 0.9$)} & $28$ ($ 93.3\%$)   & $6.80\pm0.92$ ($ 61.8\pm8.4\%$)  & $13.24\pm0.29$   \\
		\hline
	\end{tabular}
}
\end{center}
\vspace{-15pt}
\end{table*}


We now provide 
a closer look on one of the runs, 
discuss the evaluation of action sequences and how 
action consistency is assessed through steps 1--3 of \textsc{R-EnforceAC}.

%

\begin{figure*}
\centering
\subfigure[\label{fig:samples-action-rewards-step1}]{
\includegraphics[width=0.4\textwidth]{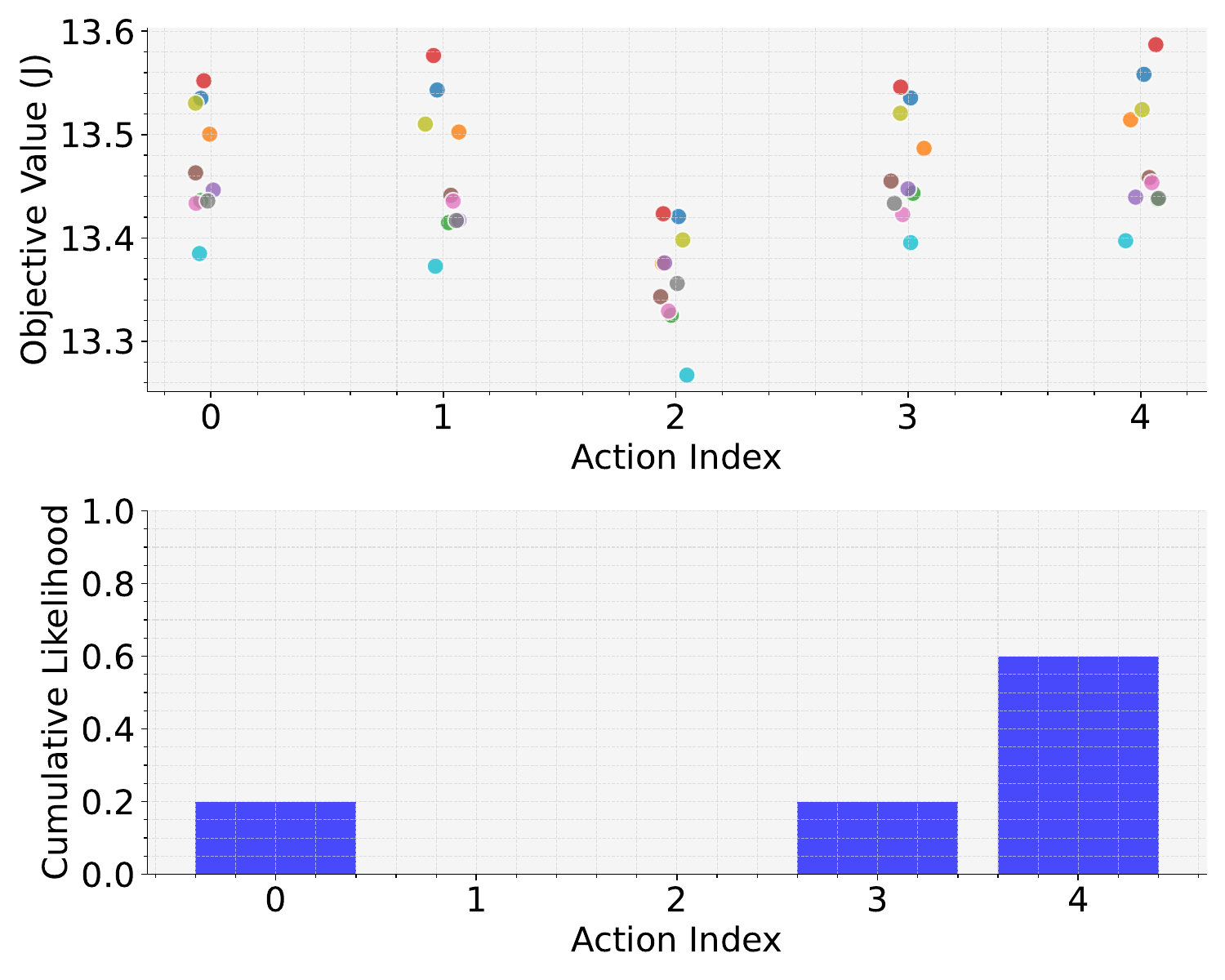}	
}
\subfigure[\label{fig:samples-action-rewards-step2}]{
\includegraphics[width=0.4\textwidth]{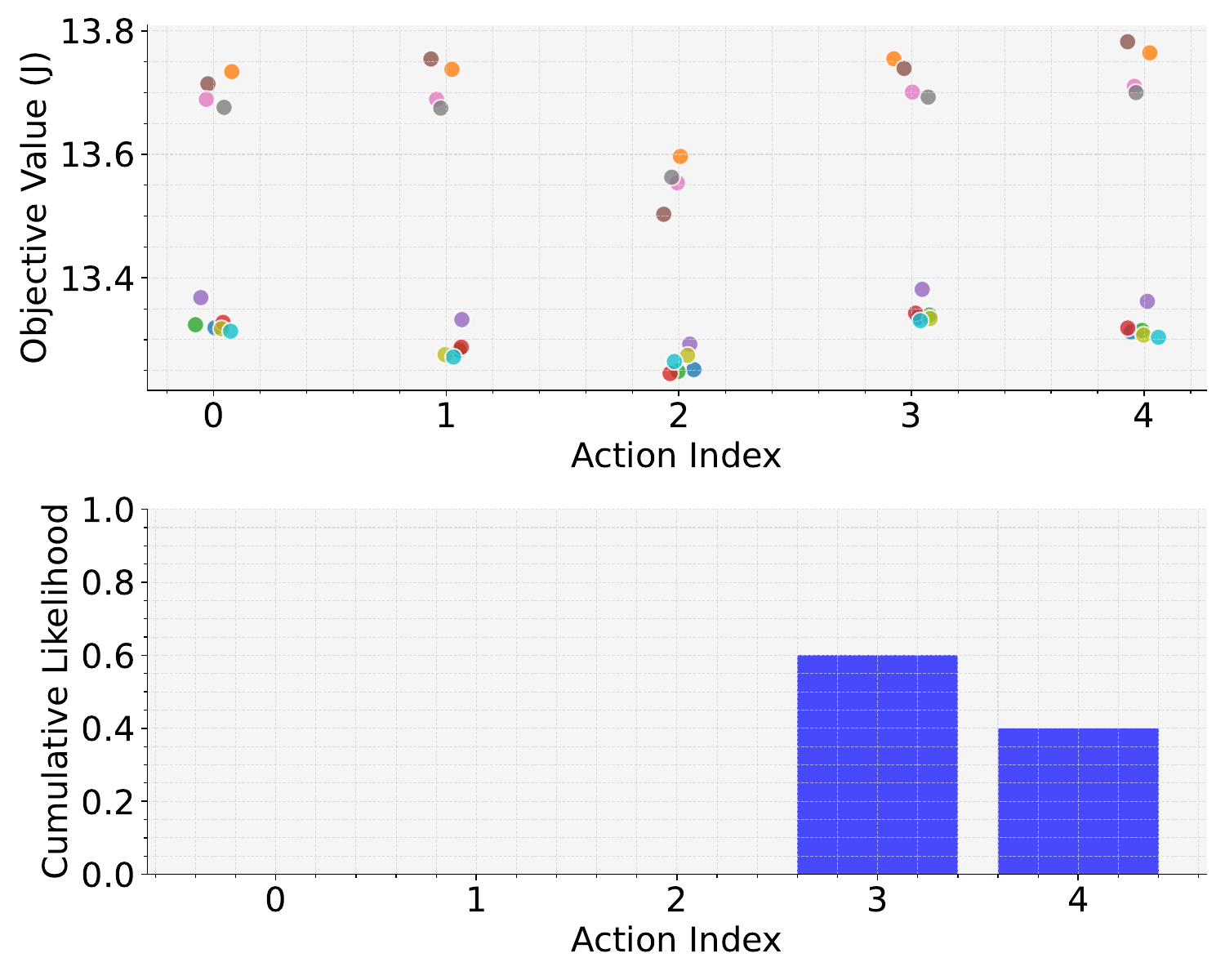}
}
\caption{Plots from the perspective of robot $r$: \textbf{(a)} data-sharing iteration 8, \textbf{(b)} data-sharing iteration 9.  
(top) Objective values ($J$) for different joint robot actions at step 2 of R-VerifyAC. Each color corresponds to a different sampled realization of missing measurements. 
(bottom) Estimated cumulative likelihood \eqref{eq:cumulative-likelihood-estimator} 
of each action. Due to additional data shared between robots, the distribution of 
objective values in \textbf{(b)} differs from 
\textbf{(a)}.}
\end{figure*}

During Step 1 of \textsc{R-EnforceAC}, each robot evaluates its potential joint actions based on its own observations and the shared observations from 
another robot. In Steps 2 and 3, missing visual landmark measurements are sampled to generate hypothetical factor graphs and corresponding estimated values---each sample representing 
a possible prior belief for planning. The objective function \eqref{eq:hybrid_obj_fun_max} is then evaluated for each such prior belief, for each of the candidate action sequences. 
Figures \ref{fig:samples-action-rewards-step1} and  \ref{fig:samples-action-rewards-step2} visualize these calculations 
at two different communication data-sharing iterations considering Step 2 of the \textsc{R-EnforceAC} algorithm. The top plots show the calculated objective function for different candidate actions, where each color corresponds to a different prior belief. 
The histograms in bottom plots show the estimated cumulative likelihood of each action  \eqref{eq:cumulative-likelihood-estimator}, calculated as 
the number of samples that favor that action divided by the total number of samples.

\begin{figure*}[t]
\centering
\begin{minipage}[b]{0.32\textwidth}
\centering
\includegraphics[width=\linewidth]{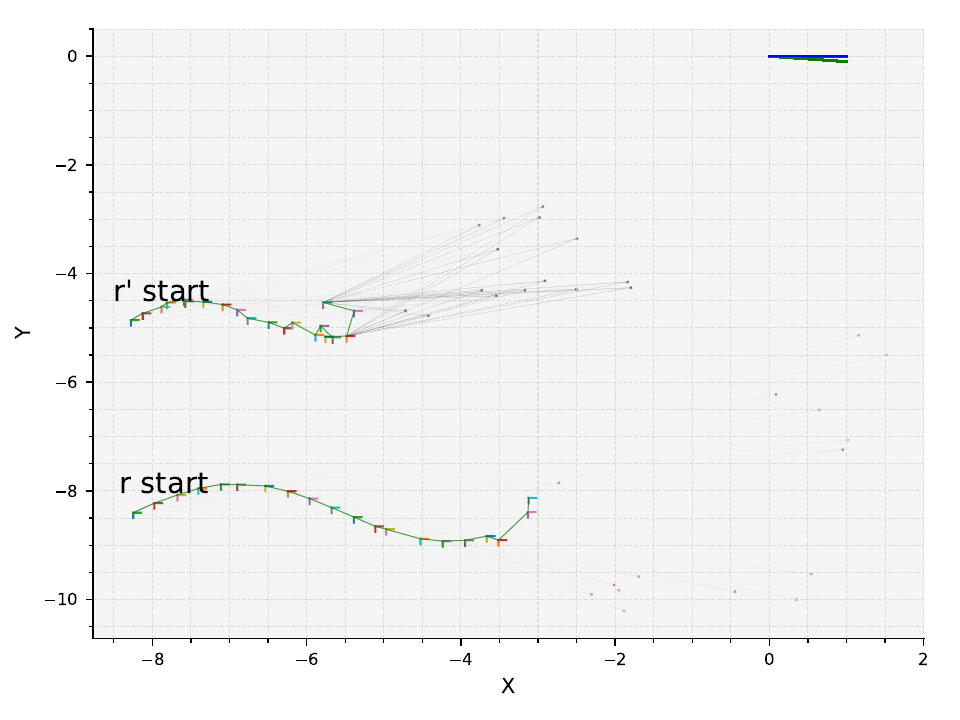}
\par \vspace{2mm} 
\end{minipage}
\hfill
\begin{minipage}[b]{0.32\textwidth}
\centering
\includegraphics[width=\linewidth]{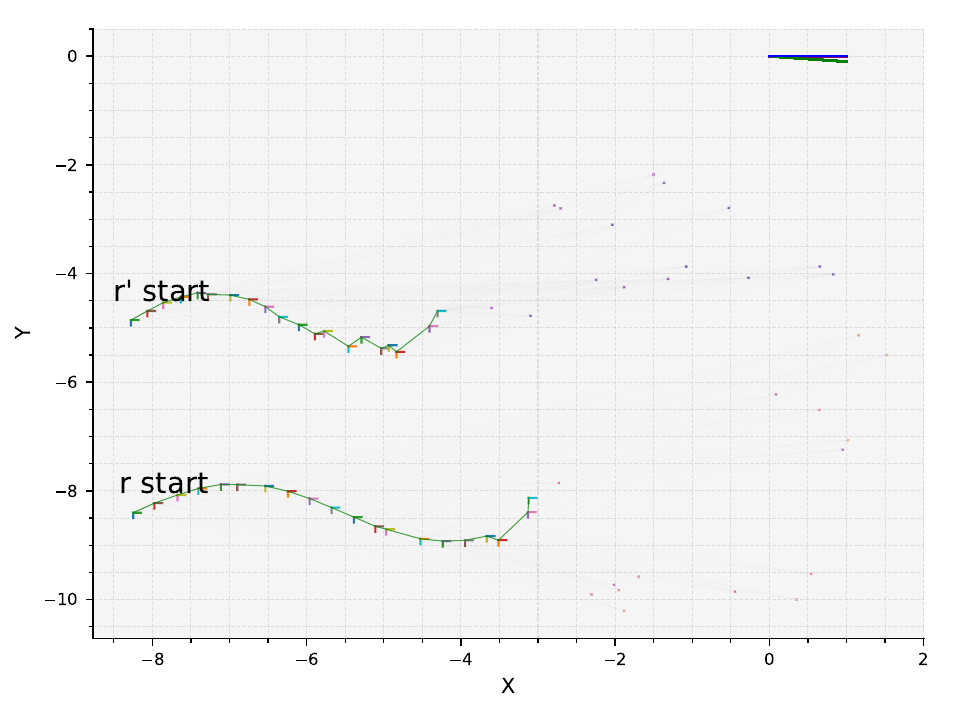}
\par \vspace{2mm} 
\end{minipage}
\hfill
\begin{minipage}[b]{0.32\textwidth}
\centering
\includegraphics[width=\linewidth]{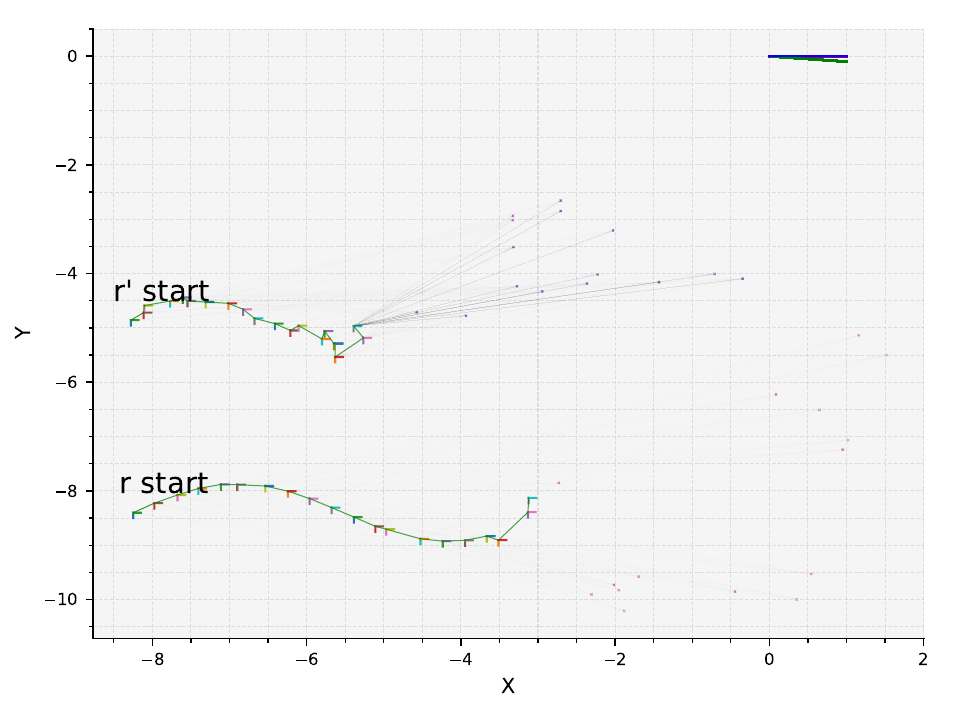}
\par \vspace{2mm} 
\end{minipage}
\caption{Sampled visual landmark observations for $r'$ (top trajectory) at data-sharing iteration 8, as obtained from the belief maintained by robot $r$ during Step 2 of R-VerifyAC. Notice that the trajectory of robot $r$ (bottom trajectory) remains unchanged across all three samples, as only the missing observations of $r'$ are being sampled.}
\label{fig:iter5}
\end{figure*}

\begin{figure*}[t]
\centering
\begin{minipage}[b]{0.32\textwidth}
\centering
\includegraphics[width=\linewidth]{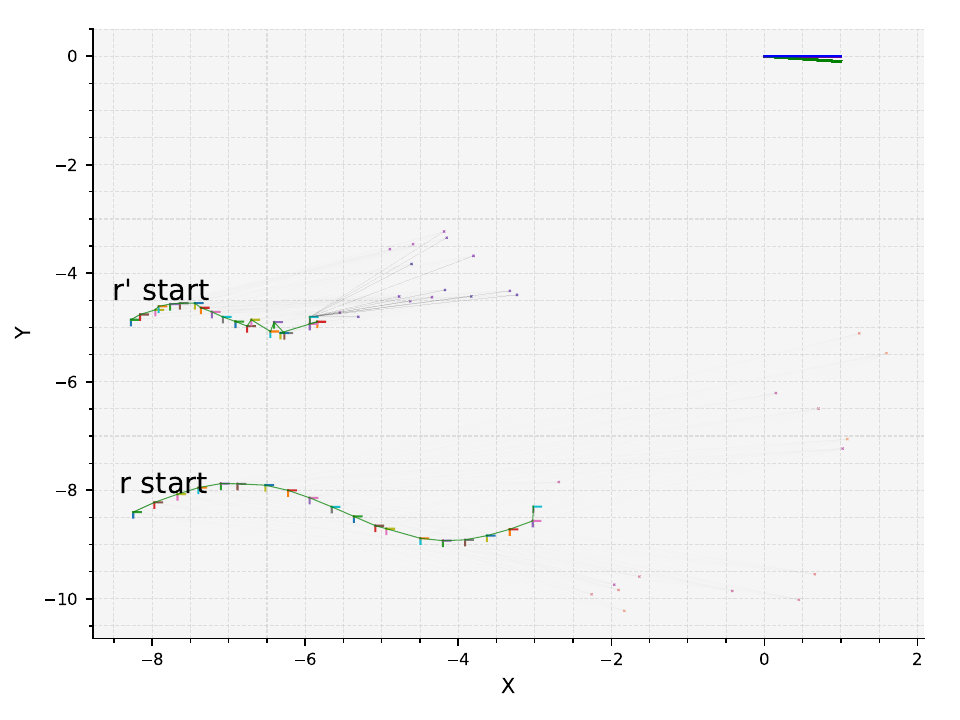}
\par \vspace{2mm} 
\end{minipage}
\hfill
\begin{minipage}[b]{0.32\textwidth}
\centering
\includegraphics[width=\linewidth]{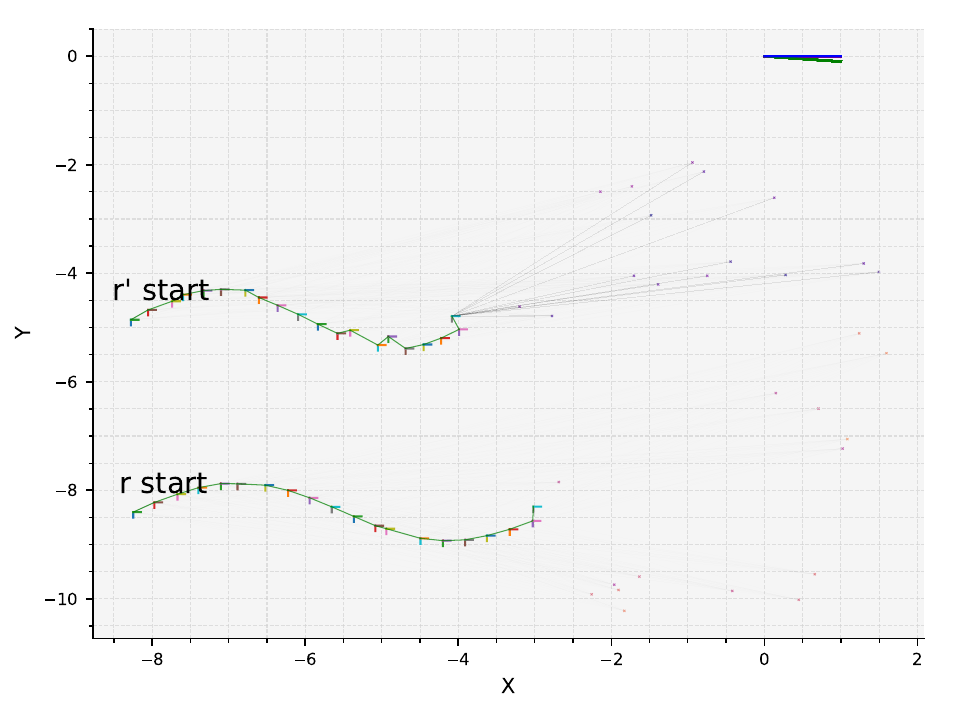}
\par \vspace{2mm} 
\end{minipage}
\hfill
\begin{minipage}[b]{0.32\textwidth}
\centering
\includegraphics[width=\linewidth]{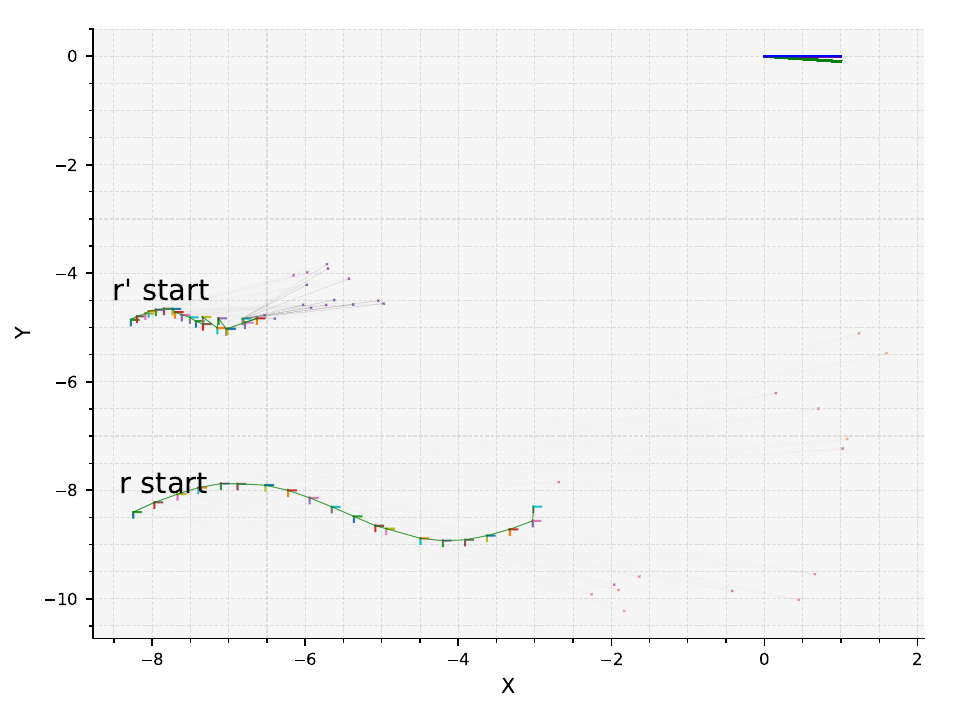}
\par \vspace{2mm} 
\end{minipage}
\caption{Sampled visual landmark observations for $r'$ at data-sharing iteration 9, as obtained from the belief maintained by robot $r$. With additional shared data, the belief evolved and the sample distribution changed.}
\label{fig:iter15}
\end{figure*}

At each data-sharing iteration, as robots share more data, the prior beliefs used for sampling change. Consequently, the samples and their 
corresponding objective values evolve as well. For example, Figure~\ref{fig:iter5} shows three sampled realizations of missing visual landmark  observations from the perspective of robot $r$ for an early data-sharing iteration, 
while Figure~\ref{fig:iter15} displays this for a later 
data-sharing iteration where more data has been shared (corresponding to Figures \ref{fig:samples-action-rewards-step1} and \ref{fig:samples-action-rewards-step2} respectively).  
Notice how the sample distribution and objective values differ between these figures, 
illustrating the evolution of the belief samples as more information becomes available. 

In conclusion, in this section we demonstrated the application of our algorithm R-VerifyAC in continuous observation and state spaces, considering the challenging setting of multi-robot active  visual SLAM. As shown in Table \ref{table-results}, depending on the value of $\epsilon$, our algorithm can be used to significantly reduce the amount of communication between the robots while controlling the number of inconsistent actions between the robots. 


%




\section{Conclusions}\label{sec:conclusion}
In this work, we address an open problem of ensuring action-consistent decision making for a team of cooperative robots in a partially observable environment. We develop three variants of algorithm \textsc{EnforceAC} that provide formal guarantees on \mrac with robots having inconsistent beliefs.
Our base algorithm \textsc{EnforceAC} reduces the number of communications by up to $60\%$ compared to the full-communication approach (Baseline-I) and always provides deterministic guarantee on \mrac. However, \textsc{EnforceAC} has a rigorous \mrac satisfaction criteria. The extended variant \textsc{R-EnforceAC} relaxes the \mrac satisfaction criteria, and further reduces the number communications up to $20\%$ with deterministic or probabilistic performance guarantees. The third variant \textsc{R-EnforceAC-simp} focuses on reducing the computation time by iterating over a smaller set of observations of the robots. It provides a run-time improvement of up to $63\%$ over \textsc{R-EnforceAC}.
Experimental results were shown for simulation of a Search-and-Rescue application. Furthermore, we show results for real robot experiments in an active multi-robot visual SLAM application. Although we design different algorithms providing \mrac satisfaction guarantees, scaling up these algorithms for higher number of robots remains a challenge. Future scope of this work is to increase scalability for higher number of robots.  


\begin{acks}
	This research was supported by the Israel Science Foundation (ISF), the Israeli Smart Transportation Research Center (ISTRC), and the Bernard M. Gordon Center for Systems Engineering at the Technion.
\end{acks}

%

\bibliographystyle{SageH}
\bibliography{root}

\begin{thebibliography}{34}
\providecommand{\natexlab}[1]{#1}
\providecommand{\url}[1]{\texttt{#1}}
\providecommand{\urlprefix}{URL }
\expandafter\ifx\csname urlstyle\endcsname\relax
  \providecommand{\doi}[1]{DOI:\discretionary{}{}{}#1}\else
  \providecommand{\doi}{DOI:\discretionary{}{}{}\begingroup
  \urlstyle{rm}\Url}\fi

\bibitem[{Amato et~al.(2016)Amato, Konidaris, Anders, Cruz, How and
  Kaelbling}]{Amato16ijrr}
Amato C, Konidaris G, Anders A, Cruz G, How JP and Kaelbling LP (2016) Policy
  search for multi-robot coordination under uncertainty.
\newblock \emph{Intl. J. of Robotics Research} 35(14): 1760--1778.

\bibitem[{Amato and Zilberstein(2009)}]{Amato09aamas2}
Amato C and Zilberstein S (2009) Achieving goals in decentralized pomdps.
\newblock In: \emph{International Conference on Autonomous Agents and
  Multiagent Systems (AAMAS)}. pp. 593--600.

\bibitem[{Atanasov et~al.(2015)Atanasov, Ny, Daniilidis and
  Pappas}]{Atanasov15icra}
Atanasov N, Ny JL, Daniilidis K and Pappas GJ (2015) Decentralized active
  information acquisition: Theory and application to multi-robot {SLAM}.
\newblock In: \emph{IEEE Intl. Conf. on Robotics and Automation (ICRA)}. pp.
  4775--4782.

\bibitem[{Barenboim and Indelman(2023)}]{Barenboim23nips}
Barenboim M and Indelman V (2023) Online pomdp planning with anytime
  deterministic guarantees.
\newblock In: \emph{Advances in Neural Information Processing Systems
  (NeurIPS)}.

\bibitem[{Barenboim et~al.(2023)Barenboim, Lev-Yehudi and
  Indelman}]{Barenboim23ral2}
Barenboim M, Lev-Yehudi I and Indelman V (2023) Data association aware pomdp
  planning with hypothesis pruning performance guarantees.
\newblock \emph{{IEEE} Robotics and Automation Letters (RA-L)} .

\bibitem[{Bernstein et~al.(2002)Bernstein, Givan, Immerman and
  Zilberstein}]{Bernstein02math}
Bernstein D, Givan R, Immerman N and Zilberstein S (2002) The complexity of
  decentralized control of markov decision processes.
\newblock \emph{Mathematics of operations research} 27(4): 819--840.

\bibitem[{Capitan et~al.(2013)Capitan, Spaan, Merino and
  Ollero}]{Capitan13ijrr}
Capitan J, Spaan MT, Merino L and Ollero A (2013) Decentralized multi-robot
  cooperation with auctioned pomdps.
\newblock \emph{Intl. J. of Robotics Research} 32(6): 650--671.

\bibitem[{Dellaert(2012)}]{Dellaert12tr}
Dellaert F (2012) Factor graphs and {GTSAM}: A hands-on introduction.
\newblock Technical Report GT-RIM-CP\&R-2012-002, Georgia Institute of
  Technology.

\bibitem[{Dutta et~al.(2019)Dutta, Ghosh and Kreidl}]{Dutta19icra}
Dutta A, Ghosh A and Kreidl OP (2019) Multi-robot informative path planning
  with continuous connectivity constraints.
\newblock In: \emph{IEEE Intl. Conf. on Robotics and Automation (ICRA)}. pp.
  3245--3251.

\bibitem[{Elimelech and Indelman(2022)}]{Elimelech22ijrr}
Elimelech K and Indelman V (2022) Simplified decision making in the belief
  space using belief sparsification.
\newblock \emph{Intl. J. of Robotics Research} 41(5): 470--496.

\bibitem[{Heintzman and Williams(2021)}]{Heintzman21ral}
Heintzman L and Williams RK (2021) Multi-agent intermittent interaction
  planning via sequential greedy selections over position samples.
\newblock \emph{{IEEE} Robotics and Automation Letters (RA-L)} 6(2): 534--541.

\bibitem[{Hollinger and Singh(2012)}]{Hollinger12tro}
Hollinger GA and Singh S (2012) Multirobot coordination with periodic
  connectivity: Theory and experiments.
\newblock \emph{{IEEE} Trans. Robotics} 28(4): 967--973.

\bibitem[{Indelman(2016)}]{Indelman16ral}
Indelman V (2016) No correlations involved: Decision making under uncertainty
  in a conservative sparse information space.
\newblock \emph{IEEE Robotics and Automation Letters (RA-L)} 1(1): 407--414.

\bibitem[{Indelman(2017)}]{Indelman17arj}
Indelman V (2017) Cooperative multi-robot belief space planning for autonomous
  navigation in unknown environments.
\newblock \emph{Autonomous Robots} : 1--21.

\bibitem[{Khodayi-mehr et~al.(2019)Khodayi-mehr, Kantaros and
  Zavlanos}]{Khodayi-mehr19tro}
Khodayi-mehr R, Kantaros Y and Zavlanos MM (2019) Distributed state estimation
  using intermittently connected robot networks.
\newblock \emph{{IEEE} Trans. Robotics} : 709--724.

\bibitem[{Kitanov and Indelman(2024)}]{Kitanov24ijrr}
Kitanov A and Indelman V (2024) Topological belief space planning for active
  slam with pairwise gaussian potentials and performance guarantees.
\newblock \emph{Intl. J. of Robotics Research} 43(1): 69--97.
\newblock \doi{10.1177/02783649231204898}.

\bibitem[{Kong and Indelman(2024)}]{Kong24isrr}
Kong D and Indelman V (2024) Simplified pomdp with an alternative observation
  space and formal performance guarantees.
\newblock In: \emph{Proc. of the Intl. Symp. of Robotics Research (ISRR)}.

\bibitem[{Kschischang et~al.(2001)Kschischang, Frey and
  Loeliger}]{Kschischang01it}
Kschischang F, Frey B and Loeliger HA (2001) Factor graphs and the sum-product
  algorithm.
\newblock \emph{{IEEE} Trans. Inform. Theory} 47(2): 498--519.

\bibitem[{Kundu et~al.(2024)Kundu, Rafaeli and Indelman}]{Kundu24iros}
Kundu T, Rafaeli M and Indelman V (2024) Multi-robot communication-aware
  cooperative belief space planning with inconsistent beliefs: An
  action-consistent approach.
\newblock In: \emph{IEEE/RSJ Intl. Conf. on Intelligent Robots and Systems
  (IROS)}.

\bibitem[{Lev-Yehudi et~al.(2024)Lev-Yehudi, Barenboim and
  Indelman}]{LevYehudi24aaai}
Lev-Yehudi I, Barenboim M and Indelman V (2024) Simplifying complex observation
  models in continuous pomdp planning with probabilistic guarantees and
  practice.
\newblock In: \emph{AAAI Conf. on Artificial Intelligence}, volume~38. pp.
  20176--20184.

\bibitem[{Mehr et~al.(2023)Mehr, Wang, Bhatt and Schwager}]{Mehr23tro}
Mehr N, Wang M, Bhatt M and Schwager M (2023) Maximum-entropy multi-agent
  dynamic games: Forward and inverse solutions.
\newblock \emph{{IEEE} Trans. Robotics} .

\bibitem[{Oliehoek(2012)}]{Pliehoek12chapter}
Oliehoek FA (2012) Decentralized pomdps.
\newblock In: \emph{Reinforcement Learning}. Springer, pp. 471--503.

\bibitem[{Omidshafiei et~al.(2015)Omidshafiei, Agha-mohammadi, Amato and
  How}]{Omidshafiei15icra}
Omidshafiei S, Agha-mohammadi Aa, Amato C and How JP (2015) Decentralized
  control of partially observable markov decision processes using belief space
  macro-actions.
\newblock In: \emph{IEEE Intl. Conf. on Robotics and Automation (ICRA)}.

\bibitem[{Pineau et~al.(2006)Pineau, Gordon and Thrun}]{Pineau06jair}
Pineau J, Gordon GJ and Thrun S (2006) Anytime point-based approximations for
  large {POMDP}s.
\newblock \emph{J. of Artificial Intelligence Research} 27: 335--380.

\bibitem[{Regev and Indelman(2017)}]{Regev17arj}
Regev T and Indelman V (2017) Decentralized multi-robot belief space planning
  in unknown environments via efficient re-evaluation of impacted paths.
\newblock \emph{Autonomous Robots} Special Issue on Online Decision Making in
  Multi-Robot Coordination.

\bibitem[{Sarlin et~al.(2020)Sarlin, DeTone, Malisiewicz and
  Rabinovich}]{Sarlin20cvpr}
Sarlin PE, DeTone D, Malisiewicz T and Rabinovich A (2020) Superglue: Learning
  feature matching with graph neural networks.
\newblock In: \emph{IEEE Conf. on Computer Vision and Pattern Recognition
  (CVPR)}. pp. 4938--4947.

\bibitem[{Schwarting et~al.(2021)Schwarting, Pierson, Karaman and
  Rus}]{Schwarting21tro}
Schwarting W, Pierson A, Karaman S and Rus D (2021) Stochastic dynamic games in
  belief space.
\newblock \emph{{IEEE} Trans. Robotics} 37(6): 2157--2172.

\bibitem[{Shienman and Indelman(2022)}]{Shienman22isrr}
Shienman M and Indelman V (2022) Nonmyopic distilled data association belief
  space planning under budget constraints.
\newblock In: \emph{Proc. of the Intl. Symp. of Robotics Research (ISRR)}.

\bibitem[{So et~al.(2023)So, Drews, Balch, Dimitrov, Rosman and
  Theodorou}]{So23icra}
So O, Drews P, Balch T, Dimitrov V, Rosman G and Theodorou EA (2023) Mpogames:
  Efficient multimodal partially observable dynamic games.
\newblock In: \emph{IEEE Intl. Conf. on Robotics and Automation (ICRA)}. IEEE,
  pp. 3189--3196.

\bibitem[{Wu et~al.(2011)Wu, Zilberstein and Chen}]{Wu11ai}
Wu F, Zilberstein S and Chen X (2011) Online planning for multi-agent systems
  with bounded communication.
\newblock \emph{Artificial Intelligence} 175(2): 487--511.

\bibitem[{Yi et~al.(2021)Yi, Luo and Sycara}]{Yi21icra}
Yi S, Luo W and Sycara K (2021) Distributed topology correction for flexible
  connectivity maintenance in multi-robot systems.
\newblock In: \emph{IEEE Intl. Conf. on Robotics and Automation (ICRA)}. pp.
  8874--8880.

\bibitem[{Yotam and Indelman(2024)}]{Yotam24tro}
Yotam T and Indelman V (2024) Measurement simplification
  in$\backslash$rho-pomdp with performance guarantees.
\newblock \emph{{IEEE} Trans. Robotics} .

\bibitem[{Zhitnikov and Indelman(2022)}]{Zhitnikov22ai}
Zhitnikov A and Indelman V (2022) Simplified risk aware decision making with
  belief dependent rewards in partially observable domains.
\newblock \emph{Artificial Intelligence, Special Issue on ``Risk-Aware
  Autonomous Systems: Theory and Practice"} .

\bibitem[{Zhitnikov et~al.(2024)Zhitnikov, Sztyglic and
  Indelman}]{Zhitnikov24ijrr}
Zhitnikov A, Sztyglic O and Indelman V (2024) No compromise in solution
  quality: Speeding up belief-dependent continuous pomdps via adaptive
  multilevel simplification.
\newblock \emph{Intl. J. of Robotics Research} .

\end{thebibliography}

\end{document}